\def\eqref#1{equation~\ref{#1}}
\def\1{\bm{1}}
\def\mA{{\bm{A}}}
\def\mB{{\bm{B}}}
\def\mC{{\bm{C}}}
\def\mI{{\bm{I}}}
\def\mP{{\bm{P}}}
\def\mQ{{\bm{Q}}}
\DeclareMathAlphabet{\mathsfit}{\encodingdefault}{\sfdefault}{m}{sl}
\SetMathAlphabet{\mathsfit}{bold}{\encodingdefault}{\sfdefault}{bx}{n}
\def\gG{{\mathcal{G}}}
\def\sV{{\mathbb{V}}}
\newcommand{\E}{\mathbb{E}}
\theoremstyle{plain}
\newtheorem{theorem}{Theorem}[section]
\newtheorem{lemma}[theorem]{Lemma}
\newtheorem{corollary}[theorem]{Corollary}
\theoremstyle{definition}
\newtheorem{definition}[theorem]{Definition}
\newtheorem{assumption}[theorem]{Assumption}
\theoremstyle{remark}
\newtheorem{remark}[theorem]{Remark}
\theoremstyle{plain}
\newenvironment{repeatthm}[1]
{\innerrepeatthm}
{\endinnerrepeatthm}
\newenvironment{repeatcoro}[1]
{\innerrepeatcoro}
{\endinnerrepeatcoro}
\icmltitlerunning{Shortest Edit Path Crossover}
\begin{document}

\twocolumn[
\icmltitle{Shortest Edit Path Crossover: A Theory-driven Solution to the Permutation Problem in Evolutionary Neural Architecture Search}




\begin{icmlauthorlist}
	\icmlauthor{Xin Qiu}{cognizant}
	\icmlauthor{Risto Miikkulainen}{cognizant,ut}
\end{icmlauthorlist}

\icmlaffiliation{cognizant}{Cognizant AI Labs, San Francisco, USA}
\icmlaffiliation{ut}{The University of Texas at Austin, Austin, USA}

\icmlcorrespondingauthor{Xin Qiu}{qiuxin.nju@gmail.com}
\icmlcorrespondingauthor{Risto Miikkulainen}{risto@cognizant.com}

\icmlkeywords{Neural Architecture Search, Crossover, Evolutionary Computation}

\vskip 0.3in
]



\printAffiliationsAndNotice{}  

\begin{abstract}
	Population-based search has recently emerged as a possible alternative to Reinforcement Learning (RL) for black-box neural architecture search (NAS). It performs well in practice even though it is not theoretically well understood. In particular, whereas traditional population-based search methods such as evolutionary algorithms (EAs) draw much power from crossover operations, it is difficult to take advantage of them in NAS. The main obstacle is believed to be the permutation problem: The mapping between genotype and phenotype in traditional graph representations is many-to-one, leading to a disruptive effect of standard crossover. This paper presents the first theoretical analysis of the behaviors of mutation, crossover and RL in black-box NAS, and proposes a new crossover operator based on the shortest edit path (SEP) in graph space. The SEP crossover is shown theoretically to overcome the permutation problem, and as a result, have a better expected improvement compared to mutation, standard crossover and RL. Further, it empirically outperform these other methods on state-of-the-art NAS benchmarks. The SEP crossover therefore allows taking full advantage of population-based search in NAS, and the underlying theory can serve as a foundation for deeper understanding of black-box NAS methods in general.
\end{abstract}

\section{Introduction}
Neural architecture search (NAS), a technique for automatically designing architectures for neural networks, outperforms human-designed models in many tasks \citep{Zoph18,Chen18,miikkulainen:gecco21}. One major branch of NAS approaches are the black-box NAS methods, which require only zeroth-order information about the objectives. While reinforcement learning (RL) contributed to the early success of black-box NAS methods \citep{Zoph17}, population-based search has emerged recently as a popular and empirically more powerful alternative \citep{Real17, Real19, Ying19}, achieving SOTA performance in various benchmarks and real-world domains \citep{Real17,ElskenMH19,Real19,David21,Gao22}. Population-based NAS is usually based on evolutionary algorithms (EAs) \citep{Liu21}, which mimic natural evolution by maintaining a population of solutions and evolving them through mutation and crossover. Mutation provides for local search (i.e.\ refinement), while crossover implements a directed global search, and thus constitutes the engine behind evolutionary discovery.  However, most recent evolutionary NAS methods are limited to mutation only \citep{Real17,Fernando17,Liu18,ElskenMH19,Real19,David21,Co-reyes21,Gao22}, which has also been used extensively in simple hill-climbing/local search methods \citep{White21a, White21b}.

The main obstacle in applying crossover to NAS is the \emph{permutation problem} \citep{Radcliffe92,Radcliffe93}, also known as the \emph{competing conventions problem} \citep{Montana89,Schaffer92}. This problem is due to isomorphisms in graph space, i.e., functionally identical architectures are mapped to different encodings/representations, making crossover operations disruptive. A number of possible solutions to this problem have been proposed in the neuroevolution community \cite{Thierens96,Stanley02,Dragoni14,Mahmood07,Wang18,Uriot20}. However, they either only work on fixed or constrained network topologies, or are limited to one particular algorithm or search space; none of them generalize to arbitrary graphs or architectures such as those that might arise from NAS. Moreover, prior work has focused only on empirical verification without a theoretical analysis of potential solutions. Theoretical understanding of search efficiency of mutation, crossover, and RL is still lacking in black-box NAS.

To meet the above challenges, this paper first proposes a new crossover operator based on shortest edit path (SEP) in the original graph space. The SEP crossover does not impose any constraints on other algorithmic components or application scope, thereby forming a simple and generalizable solution to the permutation problem. Second, a theory is derived for analyzing mutation, standard crossover, RL, and the proposed SEP crossover in the NAS domain. The SEP crossover is shown to have the best expected improvement in terms of graph edit distance (GED) between the found architecture and the global optimum. Third, empirical results on SOTA NAS benchmarks further verify the theoretical analysis, demonstrating that the SEP approach is effective. It thus allows taking full advantage of population-based search, and serves as a theoretical foundation for further research on methods for NAS and similar problems. All source codes for reproducing the experimental results are provided at: (\href{https://github.com/cognizant-ai-labs/sepx-paper}{https://github.com/cognizant-ai-labs/sepx-paper}).

\section{Related Work}
\textbf{NAS}\hspace{5pt} NAS approaches can generally be categorized into two groups: one-shot methods and black-box methods \citep{Mehta22}. In one-shot approaches \citep{Liu19, Dong19, Chen21}, a supernet is trained to represent the entire search space. The overall training cost is reduced significantly; however, these approaches can only be run on small cell-based search spaces with a complete graph \citep{Mehta22, Zela20} and the search objectives must be differentiable. In contrast, although computationally more expensive, black-box methods have no restrictions on the search space or objectives, making them a more general solution to NAS. Thus, this paper will focus on black-box NAS.

\textbf{Black-box NAS}\hspace{5pt} Black-box NAS methods, also called zeroth-order methods, iteratively generate architectures for evaluation, and then use the outcome to update the search strategy. There are four main types of search strategies in black-box NAS approaches: random search \citep{Li20, Yu20}, RL \citep{Zoph17, Zoph18}, evolutionary search \citep{Real17, Real19}, and local search \citep{White21a, White21b}. Local search, whether used alone or together with  neural predictors (e.g., Bayesian models), is based on operations that are essentially the same as mutation (although different terminology may be used) \citep{White21a, White21b}. They can therefore be seen as equivalent to mutation-only evolutionary search with a population size of one. The one search strategy that is significantly different from evolutionary methods is RL, and will thus be included in the theoretical analysis in this paper. The theory developed in this paper thus covers most of the search strategies in Black-box NAS. 

\textbf{RL-based black-box NAS}
RL-based methods work by iteratively sampling architectures using a RL agent, then collecting the prediction accuracies as the reward for updating the policy. \citet{Zoph17} successfully generated well-performing convolutional networks and recurrent cells using a specially designed recurrent neural network as the agent. \citet{Zoph18} further showed that the approach finds architectures that transfer well between different datasets. In a recent empirical study \citep{Ying19}, a simple RL agent based on multinomial probability distributions was found to perform significantly better on NAS-bench-101 than previous RL-based NAS methods. This RL controller is analyzed in this paper as well.

\textbf{Evolutionary Black-box NAS}
Evolutionary NAS methods work by improving a population of architectures over time \citep{Liu21}. To generate new offspring architectures, two operators can be used: a random edge/node mutation applied to an existing architecture, and crossover to recombine two existing architectures. Architectures that do not perform well are removed periodically from the population, and the best-performing architecture returned in the end.  While crossover is a powerful operator, most existing methods rely on mutation only because of the permutation problem. It is this problem that this work aims to solve, in order to take full advantage of the evolutionary approch in NAS.

\textbf{The permutation problem and existing solutions}\hspace{5pt} The permutation problem has been discussed in the Neuroevolution community for many years. One simple but common solution is simply to get rid of crossover completely during evolution \citep{Angeline94, Yao98}. Indeed, almost all newly developed evolutionary NAS methods avoid using a crossover operator \citep{Real17,Fernando17,Liu18,ElskenMH19,Real19,David21,Co-reyes21,Gao22}. For instance, \citet{Real17} reported that crossover operators were included in their initial experiments, but no performance improvement was observed, and therefore only mutation was deployed in their final AmoebaNet algorithm.

A number of principled solutions have been proposed to overcome the permutation problem as well. Many of them require that the network topologies are fixed. For instance, \citet{Thierens96} proposed a non-redundant encoding for matching neurons during crossover, \citet{Uriot20} developed a safe crossover through a neural alignment mechanism, and \citet{Gangwani18} used genetic distillation to improve crossover. Further, \citet{Dragoni14} proposed a generalization where the population can include different topologies, but only parents with a similar topology can be crossed over.

Other solutions have been developed for special cases, making them non-applicable to arbitrary architectures. For instance, the unit-alignment method \citep{Sun20} utilizes a special encoding that is only for CNN-based architectures. A graph matching recombination operator \citep{Mahmood07} only applies to parents with very different qualities. It mimics the behaviors of mutating the weaker parent towards the stronger parent, so the offspring does not differ from parents greatly. A modular inheritable crossover \citep{He21} is developed for a specific cell-based structure, and the default order of the nodes is preserved when performing crossover, without any node matching or reordering. As a result, the permutation problem still remains. The historical markings in NEAT-based algorithms \citep{Stanley02, Miikkulainen19} are intended to be used together with other mechanisms in NEAT, and cannot be directly applied to any given architectures. 

In contrast to these existing solutions, the proposed SEP crossover does not have any constraints on the encoding or other algorithmic components, and can be directly applied to any arbitrary architectures.

\section{The Shortest Edit Path Crossover}
In this section, the permutation problem is first described and a solution to it proposed in the form of Shortest Edit Path Crossover.

Given two neural architectures as parents, a crossover operator generates an offspring architecture by recombining the two parents. The crossover design consists of the encoding (i.e. genotype) and the recombination strategy, with the goal of properly integrating the information in both parents. The permutation problem arises when the same architecture (i.e. phenotype) can have  multiple distinct genotypes. As a result, crossover on these genotypes has a disruptive effect on the information encoded in the parents, leading to damaged offspring \citep{Stanley02}.

In order to propose a solution, let us first define a representation of the neural network architecture and a distance metric between two architectures. A neural architecture is a computation graph that can always be represented by an attributed directed graph, defined as:
\begin{definition}[Directed graph]
	A directed graph $\gG$ consists of a set of vertices $\sV=\{v_i | i=1, 2, \ldots, n\}$, where $n$ is the number of vertices and each $v_i$ denotes a vertex (node), and a set of directed edges $\mathbb{E}=\{e_{i,j} | i, j \in {1, 2, \ldots, n}\}$, where $e_{i,j}$ denotes a directed edge from $v_i$ to $v_j$. The order of a directed graph $\gG$ equals the number of its vertices, represented by $|\gG|$. For an attributed directed graph, a function $\gamma_v$ assigns an attribute (e.g., an integer) to each vertex, and a funtion $\gamma_e$ assigns an attribute to each edge.
\end{definition}
In the context of NAS, each vertex with an attribute denotes an operation in a neural architecture, and the directed edges denote data flows. The similarity between two architectures can then be measured by the graph edit distance (GED) between their corresponding graphs, defined as:
\begin{definition}[Graph edit distance]
	A graph edit operation is defined as a function $\delta: \gG \rightarrow \gG'$ that applies an elementary graph edit to transform $\gG$ to $\gG'$. In standard neural architecture search, the set of elementary graph edits typically includes vertex deletion/insertion, edge deletion/insertion, and vertex attribute substitution. An edit path is defined as a sequence of graph edit operations $\overline{\delta}=\delta_1, \delta_2, \ldots, \delta_d$, where $d$ is the length of the edit path. Application of $\overline{\delta}$ to a graph is equivalent to applying each edit sequentially: $\overline{\delta}(\gG)=\delta_d\circ \ldots\circ \delta_2\circ \delta_1(\gG)$. Graph edit distance between two graphs $\gG_1$ and $\gG_2$ is then defined as $\mathrm{GED}(\gG_1, \gG_2)=\min_{\overline{\delta}\in\Delta(\gG_1, \gG_2)} \sum_{i=1}^{d}{c(\delta_i)}$,
	where $\Delta(\gG_1, \gG_2)$ denotes the set of all edit paths that transform $\gG_1$ to an isomorphism of $\gG_2$ (including $\gG_2$ itself), $\overline{\delta}=\delta_1, \delta_2, \ldots, \delta_d$, and $c(\delta_i)$ is the cost of edit $\delta_i$. In this work, all types of edit operations are defined to have the same cost of $1$. As a result, the edit path that minimizes the total edit cost, $\overline{\delta}_{\gG_1, \gG_2}^*$, equals the shortest edit path between $\gG_1$ and $\gG_2$. Thus, $\mathrm{GED}(\gG_1, \gG_2)=d_{\gG_1, \gG_2}^*$, where $d_{\gG_1, \gG_2}^*$ is the length of this shortest edit path. Note that $\overline{\delta}_{\gG_1, \gG_2}^*$ may not be unique, and thus there may exist multiple shortest edit paths that have the same length.
\end{definition}
The proposed SEP crossover is then defined as
\begin{definition}[Shortest edit path (SEP) crossover]
	Given two attributed directed graphs $\gG_1$ and $\gG_2$, suppose $\overline{\delta}_{\gG_1, \gG_2}^*=\delta^*_1, \delta^*_2, \ldots, \delta^*_{d_{\gG_1, \gG_2}^*}$. SEP crossover generates an offspring graph $\gG_{\mathrm{new}}$ by 
	\begin{align*}
	\gG_{\mathrm{new}} = &\delta^*_{\pi_r(\lceil\frac{d_{\gG_1, \gG_2}^*}{2}\rceil)}\circ\delta^*_{\pi_r(\lceil\frac{d_{\gG_1, \gG_2}^*}{2}\rceil-1)}\circ\delta^*_{\pi_r(\lceil\frac{d_{\gG_1, \gG_2}^*}{2}\rceil-2)}\\&\circ\ldots\circ \delta^*_{\pi_r(2)}\circ \delta^*_{\pi_r(1)}(\gG_1),
	\end{align*}
	where $\pi_r$ is a random permutation of the $d_{\gG_1, \gG_2}^*$ indices: $\pi_r: 1, 2, \ldots, d_{\gG_1, \gG_2}^* \rightarrow \pi(1), \pi(2), \ldots, \pi(d_{\gG_1, \gG_2}^*)$, and $\lceil \cdot \rceil$ denotes the ceiling function. In other words, the SEP crossover shuffles the edits randomly in the SEP between parents, then selects half of them randomly, and applies them to one of the parents to obtain the offspring.
\end{definition}
This operator is motivated by a common observation in the literature \citep[e.g.][]{Ying19,White21a,Mehta22} that the differences in predictive performance between two architectures are positively correlated with their GEDs. This observation suggests that the edits in the SEP encode fundamental differences between two architectures that matter to predictive performance. An offspring that lies in the middle of this SEP can explore the search regions where the parents have fundamental discrepancies. At the same time, the offspring can automatically preserve those common substructures between parents, avoiding unnecessary disruptive behaviors, and thus avoiding the permutation problem. A visual demo showing how the SEP crossover resolves the permutation problem is provided in Appendix~\ref{subsec:add_sep_demo}.

\section{Theoretical Analysis}

In this section, the SEP crossover, standard crossover, mutation, and RL approaches to NAS will be analyzed theoretically, showing that the SEP crossover has an advantage in improving the expected quality of generated graphs. The fundamental concepts are defined first in Section~\ref{subsec:cross_muta}, leading to new interpretations of graph edit distance, crossover and mutation based on \emph{attributed adjacency matrices}. Feasibility assumptions are then declared, and theorems derived for expected improvement for SEP, standard crossover, and mutation. Section~\ref{subsec:RL_theory} focuses on RL: It interprets RL in terms of the same fundamental concepts, defines two extreme cases whose combinations span the possible states of the RL process, and derives theorems for expected improvement for both. Section~\ref{subsec:theory_comp} then brings these theorems together, showing that the SEP crossover results in more improvement than the other methods in common NAS setups. Section~\ref{subsec:error_GED} further verifies the robustness of the SEP crossover under inaccurate GED calculations. All proofs and lemmas are included in Appendix~\ref{subsec:app_proof}. For clarify, a full list of mathematical symbols is provided in Appendix~\ref{subsec:add_notation}.

\subsection{Expected Improvement with Crossover and Mutation}\label{subsec:cross_muta}
First, let us define the basic concepts:
\begin{definition}[Attributed adjacency matrix]
	An attributed adjacency matrix (AA-matrix) $\mA_\gG$ is a representation of an attributed directed graph. It is a $n\times n$ matrix, where $n$ is the number of vertices in $\gG$. The entry in $i$th row and $j$th column is represented by $A^\gG_{i,j}$. $A^\gG_{i,j}=0$ if there is no edge from $v_i$ to $v_j$, and $A^\gG_{i,j}=\gamma_e(e_{i,j})$ if there exists an edge from $v_i$ to $v_j$, for $i, j \in {1, 2, \ldots, n}$ and $i\neq j$. $A^\gG_{i,i} = \gamma_v(v_i)$, for $i\in {1, 2, \ldots, n}$.
\end{definition}

\begin{definition}[Permutation matrix]
	Given a permutation $\pi$ of $n$ elements: $\pi: 1, 2, \ldots, n \rightarrow \pi(1), \pi(2), \ldots, \pi(n)$, a permutation matrix $\mP_{\pi}$ can be constructed by permuting the columns or rows of an $n\times n$ identity matrix $\mI_n$ according to $\pi$. In this work, a column permutation of $\mI_n$ is performed to obtain $\mP_{\pi}$, The entry in $i$th row and $j$th column is represented by $P^{\pi}_{i,j}$, and $P^{\pi}_{i,j}=1\ \mathrm{if}\ j=\pi(i), \mathrm{and}\ P^{\pi}_{i,j}=0\ \mathrm{otherwise.}$
\end{definition}

\begin{definition}[Null vertex]
	A null vertex has no connections to other existing vertices in a graph. It is assigned with a special "null" attribute, which means that it does not have any impact on the original graph. Null vertices are added to an existing graph only for convenience of theoretical analysis, and they do not affect the calculation of GEDs. 
\end{definition}

Based on the above definitions, GED, crossover and mutation can be interpreted from the AA-matrix perspective:

\begin{definition}[AA-matrix-based interpretation of GED]
	Two graphs $\gG_1$ and $\gG_2$ can both be extended to have the same order $n=\max(|\gG_1|,|\gG_2|)$ by adding null vertices. The extended $\gG_1$ and $\gG_2$ are denoted as $\hat{\gG}_1$ and $\hat{\gG}_2$. Calculating the GED between $\gG_1$ and $\gG_2$ can then be defined as
	\begin{equation*}
	\mathrm{GED}(\gG_1, \gG_2)=\min_{\pi\in S_n}d(\mA_{\hat{\gG}_1}, \mP_{\pi}\mA_{\hat{\gG}_2}\mP_{\pi}^{\top}),
	\end{equation*}
	where $d(\mA,\mB)=\sum_{i=1}^m\sum_{j=1}^n\1_{A_{i,j}\neq B_{i,j}}$, $m\times n$ is the order of both $\mA$ and $\mB$, $\1_\mathrm{condition}$ is 1 if the condition is true, 0 otherwise (i.e.,  $d(\mA,\mB)$ counts the number of different entries between two matrices with same shape), $S_n$ denotes the set of all permutations of $\{1, 2, 3, \ldots, n\}$. The permutation that minimizes $d(\mA_{\hat{\gG}_1}, \mP_{\pi}\mA_{\hat{\gG}_2}\mP_{\pi}^{\top})$ is denoted as $\pi^*_{\hat{\gG}_1,\hat{\gG}_2}$, and the permuted AA-matrix of $\hat{\gG}_2$ is denoted as $\mA_{\hat{\gG}_2\rightarrow\hat{\gG}_1}=\mP_{\pi^*_{\hat{\gG}_1,\hat{\gG}_2}}\mA_{\hat{\gG}_2}\mP_{\pi^*_{\hat{\gG}_1,\hat{\gG}_2}}^{\top}$. 
\end{definition}

\begin{remark}\label{rmk:NAS_conext}
	In the context of standard neural architecture search, assume $\gamma_e(\cdot)$ always assigns 1 to any existing edge, and $\gamma_v(\cdot)$ assigns 0 to "null" vertex and positive integers for other types of vertex attributes (each type of attribute has its own unique integer). Then the differences between $\mA_{\hat{\gG}_1}$ and $\mA_{\hat{\gG}_2\rightarrow\hat{\gG}_1}$ correspond to the shortest edit path that transforms $\gG_1$ to $\gG_2$ in the following way: $\delta:=$ (1) add a vertex with attribute $ A^{\hat{\gG}_2\rightarrow\hat{\gG}_1}_{i,i}$, if $A^{\hat{\gG}_1}_{i,i}=0\ \mathrm{and\ } A^{\hat{\gG}_2\rightarrow\hat{\gG}_1}_{i,i}>0$; (2) delete vertex $v_i$ from $\gG_1$, if $A^{\hat{\gG}_1}_{i,i}>0\ \mathrm{and\ } A^{\hat{\gG}_2\rightarrow\hat{\gG}_1}_{i,i}=0$; (3) change attribute of vertex $v_i$ to $A^{\hat{\gG}_2\rightarrow\hat{\gG}_1}_{i,i}$, if $A^{\hat{\gG}_1}_{i,i}>0$ and $A^{\hat{\gG}_2\rightarrow\hat{\gG}_1}_{i,i}>0$; (4) add an edge from $v_i$ to $v_j$, if $A^{\hat{\gG}_1}_{i,j}=0$ and $A^{\hat{\gG}_2\rightarrow\hat{\gG}_1}_{i,j}=1, i\neq j$; (5) delete the edge from $v_i$ to $v_j$, if $A^{\hat{\gG}_1}_{i,j}=1$ and $A^{\hat{\gG}_2\rightarrow\hat{\gG}_1}_{i,j}=0, i\neq j$. Note that when adding an edge, the origin $v_i$ and/or destination $v_j$ may be newly added vertices.
\end{remark}

\begin{definition}[AA-matrix-based interpretation of crossover]\label{def:AA_crossover}
	Assume two graphs $\gG_1$ and $\gG_2$ are extended to have the same order by adding null vertices, resulting $\hat{\gG}_1$ and $\hat{\gG}_2$. A crossover between $\gG_1$ and $\gG_2$ is defined as the process of generating an offspring graph $\gG_{\mathrm{new}}$ by recombining $\mA_{\hat{\gG}_1}$ and $\mA_{\hat{\gG}_2}$: $\mA_{\hat{\gG}_{\mathrm{new}}}=r(\mA_{\hat{\gG}_1}, \mP_{\pi}\mA_{\hat{\gG}_2}\mP_{\pi}^{\top})$,
	where function $r(\mA, \mB)$ returns a matrix that inherits each entry from $\mA$ or $\mB$ with probability 0.5. That is, if $\mC=r(\mA, \mB)$, then $p(C_{i,j}=A_{i,j})=p(C_{i,j}=B_{i,j})=0.5$ for any valid $i, j$. $\mP_{\pi}$ is a permutation matrix based on permutation $\pi$, which is decided by the specific crossover operator utilized. For the SEP crossover, $\pi=\pi^*_{\hat{\gG}_1,\hat{\gG}_2}$, which minimizes the GED between $\gG_1$ and $\gG_2$. For the standard crossover, since the vertices may be in any order in the original AA-matrix representation and there is no particular vertex/edge matching mechanisms during crossover, a purely random permutation $\pi_\mathrm{rand}$ is used to represent this randomness. The result, $\mA_{\hat{\gG}_{\mathrm{new}}}$, is the AA-matrix of the generated new graph with null vertices. By removing all null vertices from $\hat{\gG}_{\mathrm{new}}$, the offspring graph $\gG_{\mathrm{new}}$ is obtained.
\end{definition}

\begin{definition}[AA-matrix-based interpretation of mutation]\label{def:AA_mutation}
	Given a graph $\gG_1$, a mutation operation is defined as the process of generating an offspring graph $\gG_{\mathrm{new}}$ by mutating $\gG_1$. In standard NAS, allowed mutations to $\gG_1$ include vertex deletion/insertion, edge deletion/insertion, and vertex attribute substitution. In the AA-matrix representation, a mutation operation is then defined as $\mA_{\hat{\gG}_{\mathrm{new}}}=m(\mA_{\hat{\gG}_1})$, where function $m(\mA)$ alters each element of $\mA$ with an equal probability $p_m$. and $p_m$ is usually selected so that on average one element is altered during each mutation operation. The $\hat{\gG}_1$ is the extended graph of $\gG_1$ with null vertices, so that node additions can be performed in $\mA_{\hat{\gG}_1}$ (by changing a null vertex to a vertex with a valid attribute). An element $A^{\hat{\gG}_1}_{i,j}$ can be altered in order to randomly resample an allowed value that is different from the original $A^{\hat{\gG}_1}_{i,j}$. The result, $\mA_{\hat{\gG}_{\mathrm{new}}}$, is the AA-matrix of the generated new graph with null vertices. By removing all null vertices from $\hat{\gG}_{\mathrm{new}}$, the mutated offspring graph $\gG_{\mathrm{new}}$ is obtained.
\end{definition}

Next, in order to define a performance metric for comparing different crossover and mutation operators, a realistic assumption needs to be made about the search space:

Locality in NAS search spaces means that close architectures (in terms of GED) tend to have similar performance. Random-walk autocorrelation \citep[RWA;][]{Weinberger20} is a commonly used metric to measure such locality. Strong autocorrelation of prediction accuracies of architectures during a random walk, in which each move is a graph edit operation, has been consistently observed in many existing NAS benchmarks or studies \citep{Ying19,White21a,Mehta22}. This observation leads to the following assumption: 

\begin{assumption}[Positive correlation between GED and fitness/reward difference]\label{asm:corr_GED_fitness}
	If $\mathrm{GED}(\gG_i, \gG_j)<\mathrm{GED}(\gG_i, \gG_k)$, then $\E(|f(\gG_i)-f(\gG_j)|)<\E(|f(\gG_i)-f(\gG_k)|)$, where $f(\gG)$ returns the fitness/reward of $\gG$, i.e., the prediction accuracy.
\end{assumption}

Suppose $\gG_{\mathrm{opt}}$ is the global optimal graph (i.e.\ the target of the evolutionary search), $\gG_1$ and $\gG_2$ are the two parents to undergo crossover or mutation, and $\gG_{\mathrm{new}}$ is the generated offspring. For convenience of theoretical analysis, $\gG_{\mathrm{opt}}$, $\gG_1$, and $\gG_2$ are extended to have the same order $n=\max(|\gG_{\mathrm{opt}}|, |\gG_1|, |\gG_2|)$ by adding null vertices. The extended $\gG_{\mathrm{opt}}$ is denoted as $\hat{\gG}_{\mathrm{opt}}$, and $\hat{\gG}_1$, $\hat{\gG}_2$ and $\hat{\gG}_{\mathrm{new}}$ have the same meaning as in Definitions~\ref{def:AA_crossover} and \ref{def:AA_mutation}. 

Given assumption~\ref{asm:corr_GED_fitness}, a direct measurement of the progress of the entire search is $\mathrm{GED}(\gG_{\mathrm{opt}}, \gG_{\mathrm{new}})$, and the ultimate goal is to minimize it so that a good solution can be generated. $\mathrm{GED}(\gG_{\mathrm{opt}}, \gG_{\mathrm{new}})=d_{\gG_{\mathrm{opt}}, \gG_{\mathrm{new}}}^*=d(\mA_{\hat{\gG}_{\mathrm{opt}},}, \mA_{\hat{\gG}_{\mathrm{new}}\rightarrow\hat{\gG}_{\mathrm{opt}},})$ can be decomposed to $d_v(\mA_{\hat{\gG}_{\mathrm{opt}}}, \mA_{\hat{\gG}_{\mathrm{new}}\rightarrow\hat{\gG}_{\mathrm{opt}}})+d_e(\mA_{\hat{\gG}_{\mathrm{opt}}}, \mA_{\hat{\gG}_{\mathrm{new}}\rightarrow\hat{\gG}_{\mathrm{opt}}})$, where $d_v(\mA, \mB)=\sum_i\1_{A_{i,i}\neq B_{i,i}}$ counts only the number of different diagonal entries, i.e., the differences in vertex attributes, and $d_e(\mA, \mB)=\sum_i\sum_{j\neq i}\1_{A_{i,j}\neq B_{i,j}}$ counts the number of different non-diagonal entries, i.e., the differences in edges/connections, thereby measuring the topological similarity.

In order to derive a performance metric, let's consider two factors. First, $d_v(\cdot)$ only covers $n$ elements, whereas $d_e(\cdot)$ covers $n\cdot(n-1)$ elements. We have $n\cdot(n-1)\gg n$ when $n$ increases, so $d_e(\cdot)$ is a dominant factor in deciding $\mathrm{GED}(\gG_{\mathrm{opt}}, \gG_{\mathrm{new}})$. Second, modeling of vertex attributes varies a lot across different NAS spaces, e.g., they have different numbers of usable attributes and different constraints on vertex attribute assignments. In contrast, $\gamma_e(\cdot)=1$ can simply be used for all valid edges in most NAS spaces, leading to generality of any theoretical conclusions. These two factors suggest that $d_e(\mA_{\hat{\gG}_{\mathrm{opt}}}, \mA_{\hat{\gG}_{\mathrm{new}}\rightarrow\hat{\gG}_{\mathrm{opt}}})$ is a representative quantitative metric when comparing different crossover and mutation operators theoretically. For simplicity, we will use $d_{e, \gG_1, \gG_2}^*$ to denote $d_e(\mA_{\gG_1}, \mA_{\gG_2\rightarrow\gG_1})$.

Accordingly, the main performance metric for crossover and mutation can now be defined as follows:

\begin{definition}[Expected improvement of crossover and mutation]
	This work focuses on the expected improvement in terms of topological similarity to the global optimal graph. More specifically, expected improvement refers to $\E(\max(d_e(\mA_{\hat{\gG}_{\mathrm{opt}}}, \mA_{\hat{\gG}_1\rightarrow\hat{\gG}_{\mathrm{opt}}})-d_e(\mA_{\hat{\gG}_{\mathrm{opt}}}, \mA_{\hat{\gG}_{\mathrm{new}}\rightarrow\hat{\gG}_{\mathrm{opt}}}),0))$, which compares offspring graph $\gG_{\mathrm{new}}$ with one parent graph $\gG_1$ in terms of the expected edge/connection differences to $\gG_{\mathrm{opt}}$. The $\max(\cdot,0)$ part takes into account the selection pressure in standard EAs; that is, only the offspring that is better than its parent can survive and become the next parent.
\end{definition}

As the penultimate step, three lemmas are derived in Appendix~\ref{subsec:app_proof} to assist the proofs regarding expected improvement. According to Lemmas~\ref{lem:invar_crossover} and \ref{lem:invar_mutation}, any $\pi'$ can be chosen to analyze the behaviors of SEP crossover, standard crossover, and mutation, without affecting the result of $d_e(\mA_{\hat{\gG}_{\mathrm{opt}}}, \mA_{\hat{\gG}_{\mathrm{new}}\rightarrow\hat{\gG}_{\mathrm{opt}}})$. Lemma~\ref{lem:common_parts} further derives the lower bound for common parts in $\gG_{\mathrm{opt}}$, $\gG_1$ and $\gG_2$. Now, choose $\pi'=\pi_1=\pi^*_{\hat{\gG}_{\mathrm{opt}},\hat{\gG}_1}$ and $\pi_2=\pi^*_{\hat{\gG}'_1,\hat{\gG}_2}$ so that there are at least $n_s=\max(n^2-d_{\hat{\gG}_{\mathrm{opt}}, \hat{\gG}_1}^*-d_{\hat{\gG}_1, \hat{\gG}_2}^*, 0)$ common entries among $\mA_{\hat{\gG}_{\mathrm{opt}}}$, $\mA_{\hat{\gG}_1\rightarrow\hat{\gG}_{\mathrm{opt}}}$ and $\mA_{\hat{\gG}_2\rightarrow\hat{\gG}'_1}$, where $\mA_{\hat{\gG}_1\rightarrow\hat{\gG}_{\mathrm{opt}}}=\mA_{\hat{\gG}'_1}$. Regarding the remaining entries, the following assumption is made:
\begin{assumption}[Uniform distribution of differences]\label{asm:diff_entry}
	The entries that are different between $\mA_{\hat{\gG}'_1}$ and $\mA_{\hat{\gG}_2\rightarrow\hat{\gG}'_1}$ are assumed to be uniformly distributed on the positions other than those $n_s$ common entries.
\end{assumption} 
With these lemmas and assumption, the expected improvement of SEP crossover, standard crossover and mutation can be derived:

\begin{theorem}[Expected improvement of SEP crossover]\label{thm:EI_SEPX} Following Assumption~\ref{asm:diff_entry}, let $n_{se}=\max(n\cdot(n-1)-d_{e,\hat{\gG}_{\mathrm{opt}}, \hat{\gG}_1}^*-d_{e,\hat{\gG}_1, \hat{\gG}_2}^*, 0)$. and suppose $\mA_{\hat{\gG}_{\mathrm{new}}}=r(\mA_{\hat{\gG}'_1}, \mP_{\pi^*_{\hat{\gG}'_1,\hat{\gG}_2}}\mA_{\hat{\gG}_2}\mP_{\pi^*_{\hat{\gG}'_1,\hat{\gG}_2}}^{\top})$. Then we have
{\footnotesize \begin{align*}
\displaystyle &\E(\max(d_e(\mA_{\hat{\gG}_{\mathrm{opt}}}, \mA_{\hat{\gG}_1\rightarrow\hat{\gG}_{\mathrm{opt}}})-d_e(\mA_{\hat{\gG}_{\mathrm{opt}}}, \mA_{\hat{\gG}_{\mathrm{new}}\rightarrow\hat{\gG}_{\mathrm{opt}}}),0))\\&\geq\E(\max(\frac{d_{e,\hat{\gG}_{\mathrm{opt}}, \hat{\gG}_1}^*\cdot d_{e,\hat{\gG}_1, \hat{\gG}_2}^*}{n\cdot(n-1)-n_{se}}-\mathcal{B}(d_{e,\hat{\gG}_1, \hat{\gG}_2}^*, 0.5),0))\\&=\mathrm{LBEI}_{\mathrm{SEPX}}, 
\end{align*}}where $\mathcal{B}(d_{e,\hat{\gG}_1, \hat{\gG}_2}^*, 0.5)$ denotes the number of successful trials after sampling from a binomial distribution with $d_{e,\hat{\gG}_1, \hat{\gG}_2}^*$ trials and success probability of 0.5, and $\mathrm{LBEI}_{\mathrm{SEPX}}$ denotes the lower bound of expected improvement of the SEP crossover.
\end{theorem}

\begin{theorem}[Expected improvement of standard crossover]\label{thm:EI_STDX}
	Suppose $\mA_{\hat{\gG}_{\mathrm{new}}}=r(\mA_{\hat{\gG}'_1}, \mP_{\pi_\mathrm{rand}}\mA_{\hat{\gG}_2}\mP_{\pi_\mathrm{rand}}^{\top})$. Then we have
	{\footnotesize \begin{align*}
	&\E(\max(d_e(\mA_{\hat{\gG}_{\mathrm{opt}}}, \mA_{\hat{\gG}_1\rightarrow\hat{\gG}_{\mathrm{opt}}})-d_e(\mA_{\hat{\gG}_{\mathrm{opt}}}, \mA_{\hat{\gG}_{\mathrm{new}}\rightarrow\hat{\gG}_{\mathrm{opt}}}),0))\\&\geq\E(\max(d_{e,\hat{\gG}_{\mathrm{opt}}, \hat{\gG}_1}^*-\mathcal{B}(\frac{n_1^1\cdot n_2^0+n_1^0\cdot n_2^1}{n\cdot(n-1)},0.5)-\\&\frac{(d_{e,\hat{\gG}_{\mathrm{opt}}, \hat{\gG}_1}^*+n_1^1-n_{\mathrm{opt}}^1)\cdot n_2^1+(d_{e,\hat{\gG}_{\mathrm{opt}}, \hat{\gG}_1}^*+n_1^0-n_{\mathrm{opt}}^0)\cdot n_2^0}{2n\cdot(n-1)}\\&,0))=\mathrm{LBEI}_{\mathrm{STDX}}, 
	\end{align*}}where $n_{\mathrm{opt}}^1$, $n_1^1$ and $n_2^1$ denote the number of ones in $\mA_{\hat{\gG}_{\mathrm{opt}}}$, $\mA_{\hat{\gG}_1}$ and $\mA_{\hat{\gG}_2}$ (excluding diagonal entries), respectively, $n_{\mathrm{opt}}^0$, $n_1^0$ and $n_2^0$ denote the number of zeros in $\mA_{\hat{\gG}_{\mathrm{opt}}}$, $\mA_{\hat{\gG}_1}$ and $\mA_{\hat{\gG}_2}$ (excluding diagonal entries), respectively, and $\mathrm{LBEI}_{\mathrm{STDX}}$ denotes the lower bound of expected improvement of the standard crossover.
\end{theorem}

\begin{theorem}[Expected improvement of mutation]\label{thm:EI_MUTA}
	Suppose $\mA_{\hat{\gG}_{\mathrm{new}}}=m(\mA_{\hat{\gG}'_1})$. Then we have
	{\footnotesize \begin{align*}
	\displaystyle &\E(\max(d_e(\mA_{\hat{\gG}_{\mathrm{opt}}}, \mA_{\hat{\gG}_1\rightarrow\hat{\gG}_{\mathrm{opt}}})-d_e(\mA_{\hat{\gG}_{\mathrm{opt}}}, \mA_{\hat{\gG}_{\mathrm{new}}\rightarrow\hat{\gG}_{\mathrm{opt}}}),0))\\&\geq\E(\max( d_{e,\hat{\gG}_{\mathrm{opt}}, \hat{\gG}_1}^*-\mathcal{B}(n\cdot(n-1)-d_{e,\hat{\gG}_{\mathrm{opt}}, \hat{\gG}_1}^*,p_m)\\&-\mathcal{B}(d_{e,\hat{\gG}_{\mathrm{opt}}, \hat{\gG}_1}^*, 1-p_m),0))=\mathrm{LBEI}_{\mathrm{MUTA}}, 
	\end{align*}}where $p_m$ is the mutation rate usually chosen to be $p_m=\frac{1}{n\cdot(n-1)}$, and $\mathrm{LBEI}_{\mathrm{MUTA}}$ denotes the lower bound of expected improvement of mutation.
\end{theorem}

\subsection{Expected Improvement with RL}\label{subsec:RL_theory}
First, let us interpret the RL approach using concepts established in Section~\ref{subsec:cross_muta}. The setup follows the implementation of \citet{Ying19}, which provides good performance in NAS-bench-101 dataset.
\begin{definition}[AA-matrix-based interpretation of RL]\label{def:AA_rl}
	RL invokes an agent that generates architectures following a probability distribution $\mQ_\theta$ defined in AA-matrix space. For $\mA_\theta \sim \mQ_\theta$, each entry $A_{i,j}^{\theta}$ is sampled from a separate categorical distribution defined by $Q_{i,j}^{\theta}$. The $\theta=\{z_{i,j}^k|k\in 0,1, \cdots, k_{i,j}^{\mathrm{max}}, \mathrm{for} \ i,j\in 1,2,\cdots,n\}$ is the parameter set that contains the logits for defining the categorical distributions through softmax functions $p(A_{i,j}^{\theta}=k)=\frac{\mathrm{e}^{z_{i,j}^k}}{\Sigma_k \mathrm{e}^{z_{i,j}^k}}$, for $k\in 0,1, \cdots, k_{i,j}^{\mathrm{max}}, \mathrm{and} \ i,j\in 1,2,\cdots,n$. The learning process of $\theta$ follows the standard REINFORCE rule \citep{Williams92}. The resulting scaled policy gradient is calculated as $\E_{\mA_\theta \sim \mQ_\theta}(\Sigma_{i,j} \bigtriangledown_\theta \log p(A_{i,j}^{\theta})\cdot (R-b))$,
	where $R$ is the reward for the currently sampled architecture (usually the validation accuracy) and $b$ is a baseline to reduce the variance of gradient estimate.
\end{definition}
The expected improvement of a policy update can then be defined. It is based on Lemma~\ref{lem:lb_expected_GED} in Appendix~\ref{subsec:app_proof} that defines $\mQ^*_{\theta}$ as the optimal permutation of $\mQ_{\theta}$ and establishes an upper bound of expected GED to optimal.
\begin{definition}[Expected improvement of a policy update]\label{def:ei_policy} Suppose the RL policy parameters are updated from $\theta_t$ to $\theta_{t+1}$, where $t$ indicates the current time step. The expected improvement is defined as $\Sigma_{i,j}p(A^{\theta_t*}_{i,j}\neq A^{\gG_{\mathrm{opt}}}_{i,j}|\mA^*_{\theta_t} \sim \mQ^*_{\theta_t}) - \Sigma_{i,j}p(A^{\theta_{t+1}*}_{i,j}\neq A^{\gG_{\mathrm{opt}}}_{i,j}|\mA^*_{\theta_{t+1}} \sim \mQ^*_{\theta_{t+1}})$ for $i,j\in 1,2,\cdots,n$ and $i\neq j$. That is, it is the change in the upper bound of expected GED to optimal after policy update, considering only the edge/connection differences (similar to that of crossover and mutation).
\end{definition}

Next, expected improvement can be derived in two extreme cases:

\begin{definition}[Unbiased RL agent and oracle RL agent]\label{def:extreme_agent}
	Given a pre-defined value for $\Sigma_{i,j}p(A^{\theta*}_{i,j}\neq A^{\gG_{\mathrm{opt}}}_{i,j}|\mA^*_\theta \sim \mQ^*_\theta)$ ($i,j\in 1,2,\cdots,n$ and $i\neq j$), an unbiased agent is one that has the same $p(A^{\theta*}_{i,j}\neq A^{\gG_{\mathrm{opt}}}_{i,j}|\mA^*_\theta \sim \mQ^*_\theta)$ value for any $i,j\in 1,2,\cdots,n$ and $i\neq j$, and an oracle agent is one that has the maximum number of non-diagonal entries in $\mA^*_\theta$ satisfying $p(A^{\theta*}_{i,j}\neq A^{\gG_{\mathrm{opt}}}_{i,j}|\mA^*_\theta \sim \mQ^*_\theta)=0$, while all the remaining non-diagonal entries have the same and positive value for $p(A^{\theta*}_{i,j} = A^{\gG_{\mathrm{opt}}}_{i,j}|\mA^*_\theta \sim \mQ^*_\theta)$.
\end{definition}

In practical NAS experiments, the RL agent is usually initially unbiased, and converges towards the oracle agent during learning. Therefore, it is possible to interpolate between these two cases to span the entire RL search process. Next, expected improvement in RL is derived for the two cases:

\begin{theorem}[Expected improvement of unbiased agent and oracle agent]\label{thm:EI_extreme}
	Suppose $\Sigma_{i,j}p(A^{\theta*}_{i,j}\neq A^{\gG_{\mathrm{opt}}}_{i,j}|\mA^*_\theta \sim \mQ^*_\theta)=b^*_{e,\theta}$, and further suppose $R-b=\alpha\cdot(\Sigma_{i,j}p(A^{\theta*}_{i,j}\neq A^{\gG_{\mathrm{opt}}}_{i,j}|\mA^*_\theta \sim \mQ^*_\theta)-d^*_{e,\gG_{\mathrm{opt}},\gG_{\theta_t}})$ for $i,j\in 1,2,\cdots,n$ and $i\neq j$, where $\alpha$ is a positive scaling factor and $\gG_{\theta_t}$ is a graph sampled at time step $t$ to obtain the empirical approximation of the policy gradient. With all $z^k_{i,j}$ initialized to 0, the expected improvement after one policy update with learning rate $\eta$ is no less than
	{\footnotesize \begin{align*}\\[-18pt]
	&\mathrm{LBEI}_{\mathrm{RLU}}=b^*_{e,\theta}-(n_w\cdot \frac{1}{1+(\frac{1}{p_w}-1)\cdot \mathrm{e}^{-2\alpha \eta(b^*_{e,\theta}-n_w)(1-p_w)}}\\&+(n(n-1)-n_w)\cdot \frac{1}{1+(\frac{1}{p_w}-1)\cdot \mathrm{e}^{2\alpha \eta(b^*_{e,\theta}-n_w)\cdot p_w}})
	\end{align*}}\\[-1.5ex]for unbiased agent, where $p_w=\frac{b^*_{e,\theta}}{n(n-1)}, n_w=\mathcal{B}(n(n-1), \frac{b^*_{e,\theta}}{n(n-1)})$, and no less than
	{\footnotesize \begin{align*}\\[-20pt]
	&\mathrm{LBEI}_{\mathrm{RLO}}=b^*_{e,\theta}-(n_w\cdot \frac{1}{1+(\frac{1}{p_w}-1)\cdot \mathrm{e}^{-2\alpha \eta(b^*_{e,\theta}-n_w)(1-p_w)}}\\&+(\lfloor b^*_{e,\theta} \rfloor +1-n_w)\cdot \frac{1}{1+(\frac{1}{p_w}-1)\cdot \mathrm{e}^{2\alpha \eta(b^*_{e,\theta}-n_w)\cdot p_w}})
	\end{align*}}\\[-1.5ex]for oracle agent, where $p_w=\frac{b^*_{e,\theta}}{\lfloor b^*_{e,\theta}\rfloor +1}, n_w=\mathcal{B}(\lfloor b^*_{e,\theta}\rfloor+1, \frac{b^*_{e,\theta}}{\lfloor b^*_{e,\theta}\rfloor+1})$, and $\lfloor\cdot\rfloor$ is the floor function.
\end{theorem}

\subsection{Comparisons based on Theory}\label{subsec:theory_comp}
As Theorems~\ref{thm:EI_SEPX}--\ref{thm:EI_MUTA} and \ref{thm:EI_extreme} indicate, expected improvement with the different methods depends on several factors, making problem-agnostic comparisons in closed form infeasible. It is, however, possible to compare these theoretical constructs numerically in specific representative settings, such as the various NAS benchmark domains.

To this end, $\mathrm{LBEI}_{\mathrm{SEPX}}$, $\mathrm{LBEI}_{\mathrm{MUTA}}$, $\mathrm{LBEI}_{\mathrm{STDX}}$, $\mathrm{LBEI}_{\mathrm{RLU}}$ and $\mathrm{LBEI}_{\mathrm{RLO}}$  were compared in NAS-bench-101 benchmark \citep{Ying19}. A numerical comparison requires instantiating the methods with specific parameter values. The standard NAS-bench-101 setup was used for $n=7$, $n_{\mathrm{opt}}^1=9$, $n_1^1=9$, and $n_2^1=9$, and for $d_{e,\hat{\gG}_{\mathrm{opt}}, \hat{\gG}_1}^*$ and $d_{e,\hat{\gG}_1, \hat{\gG}_2}^*$ different combinations within a reasonable range were evaluated (the validity of these ranges will be verified in Section~\ref{subsec:applicability}). 
The expected improvement in each case was then estimated through a Monte Carlo simulation with $10^6$ trials. For RL, $b^*_{e,\theta} \equiv d_{e,\hat{\gG}_{\mathrm{opt}}, \hat{\gG}_1}^*$, and $\alpha \cdot \eta=0.1$ was used because this value provides the best tradeoff between unbiased and oracle agents (Figure~\ref{fig:rl_self}). 

Figure~\ref{fig:EI_101} shows the main results: $\mathrm{LBEI}_{\mathrm{SEPX}}$ is larger than $\mathrm{LBEI}_{\mathrm{MUTA}}$, $\mathrm{LBEI}_{\mathrm{RLU}}$, and $\mathrm{LBEI}_{\mathrm{RLO}}$ in almost all cases. In contrast, Figure~\ref{fig:EI_stdx_mut} and Figure~\ref{fig:stdx_rl} in Appendix~\ref{subsec:add_theo_comp} show that the standard crossover leads to worse $\mathrm{LBEI}$ compared to mutation and RL. This numerical analysis thus illustrates the theoretical advantage of SEP crossover compared to mutation, standard crossover, and RL. More comparisons, as well as another benchmark \citep[NAS-bench-NLP;][]{Klyuchnikov22}, are included in Appendix~\ref{subsec:add_theo_comp}, reinforcing these conclusions.
\begin{figure*}[t]
	\centering
	\includegraphics[width=0.32\linewidth]{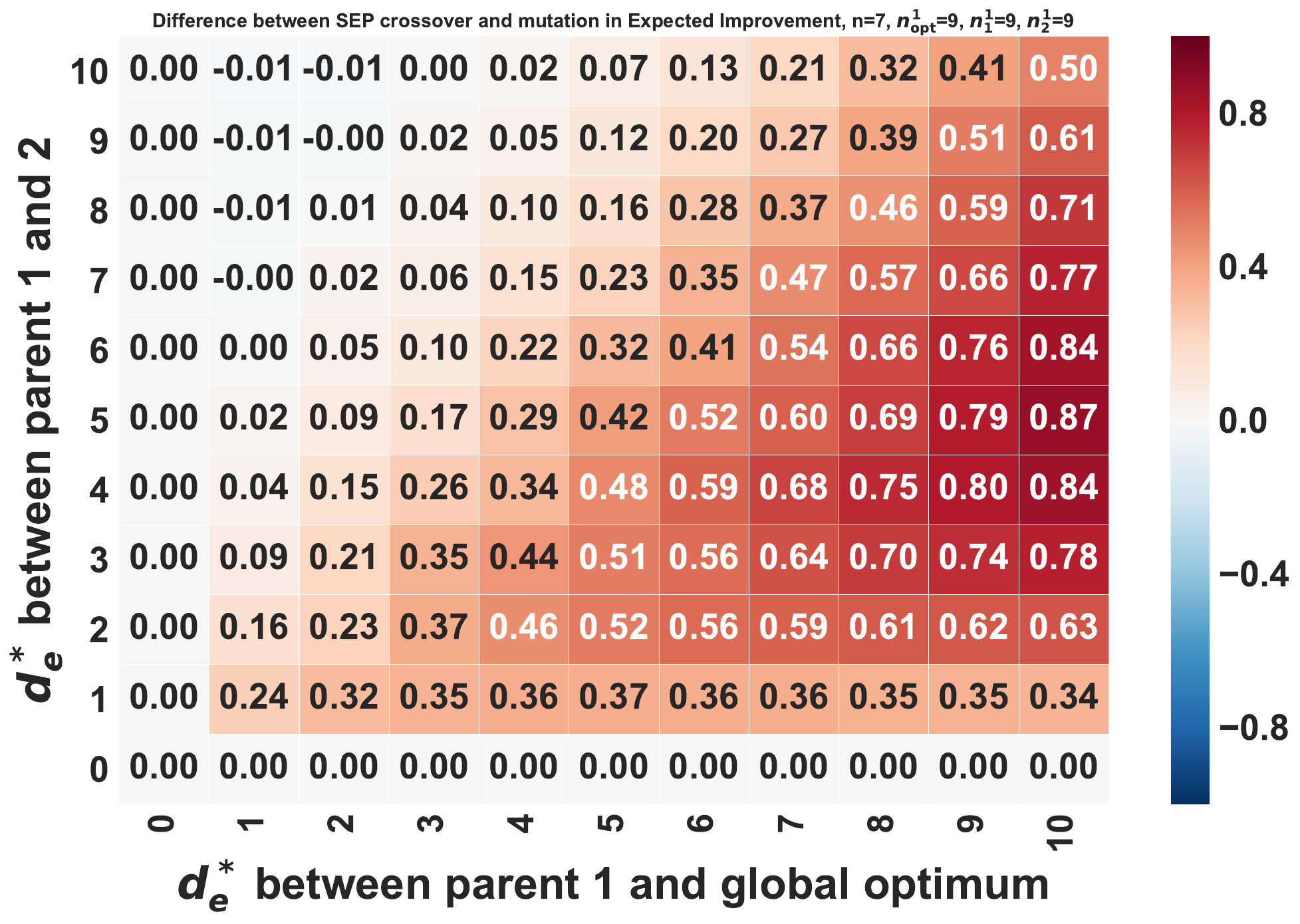}
    \includegraphics[width=0.32\linewidth]{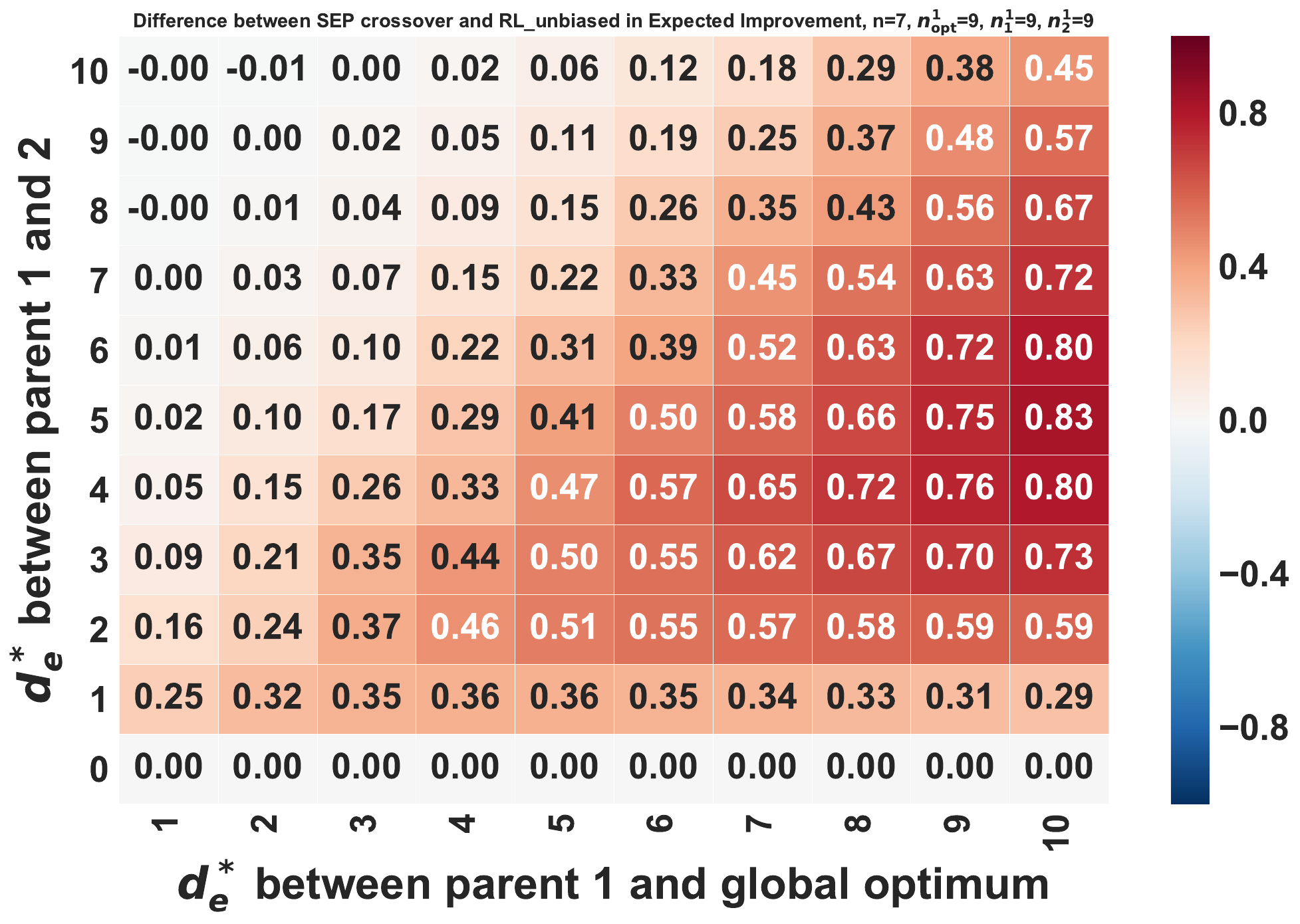}
	\includegraphics[width=0.32\linewidth]{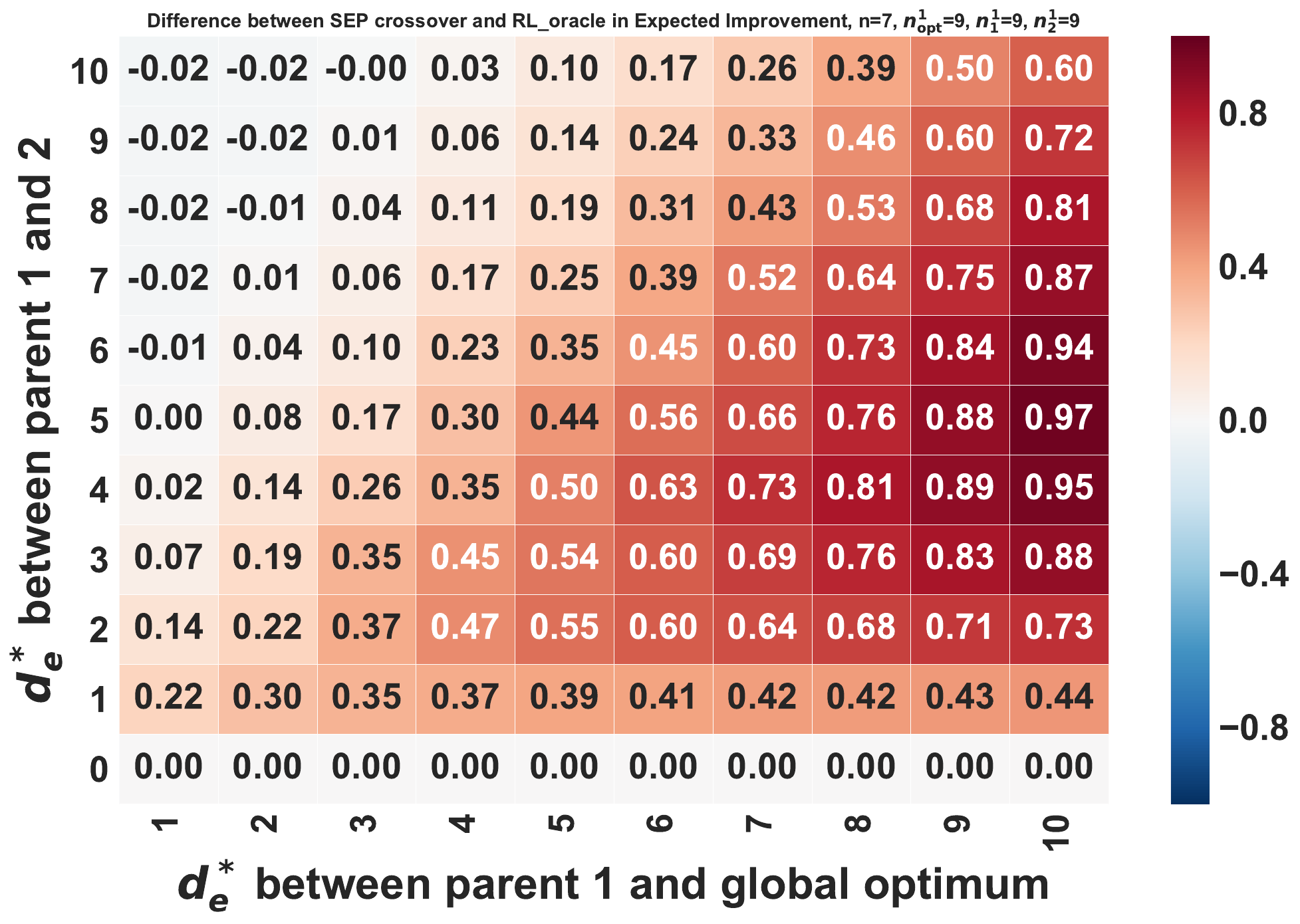}
\vspace*{-1.5ex}
	\caption{\textbf{Comparison of expected improvement between SEP crossover, mutation, and RL in NAS-bench-101.} (Left) Differences between $\mathrm{LBEI}_{\mathrm{SEPX}}$ and $\mathrm{LBEI}_{\mathrm{MUTA}}$ under different $d_{e,\hat{\gG}_1, \hat{\gG}_2}^*$ ($y$-axis) and $d_{e,\hat{\gG}_{\mathrm{opt}}, \hat{\gG}_1}^*$ ($x$-axis) combinations. (Middle) Differences between $\mathrm{LBEI}_{\mathrm{SEPX}}$ and $\mathrm{LBEI}_{\mathrm{RLU}}$. (Right) Differences between $\mathrm{LBEI}_{\mathrm{SEPX}}$ and $\mathrm{LBEI}_{\mathrm{RLO}}$. $\mathrm{LBEI}_{\mathrm{SEPX}}$ is larger (i.e.\ more red) than $\mathrm{LBEI}_{\mathrm{MUTA}}$, $\mathrm{LBEI}_{\mathrm{RLU}}$, and $\mathrm{LBEI}_{\mathrm{RLO}}$ almost everywhere. Thus, the SEP crossover has a theoretical advantage over mutation and RL.
		\label{fig:EI_101}
	}
\end{figure*}
\subsection{Effect of Errors during GED Calculation} \label{subsec:error_GED}
Finding the shortest edit path between two graphs requires calculating the GED between them, which is a NP-hard problem if an exact optimal solution is desired. Several fast approximation methods exist for GED calculation \citep{Riesen16,Serratosa15}. They can be run in polynomial time, at the cost of slightly reduced accuracy of the returned GED. To verify that SEP crossover is robust against such a loss of accuracy, a theoretical analysis was conducted. First, a corollary was derived to quantify the resulting expected improvement of SEP crossover with errors in the GED calculation. Second, a numerical analysis based on this corollary was run under three different levels of error.

Following Theorem~\ref{thm:EI_SEPX}, an error in calculating GED between two architectures $\hat{\gG}_1$ and $\hat{\gG}_2$ can be expressed as
\begin{equation*}
d_{e,\hat{\gG}_1, \hat{\gG}_2}^\epsilon=d_{e,\hat{\gG}_1, \hat{\gG}_2}^*\cdot(1+\epsilon),
\end{equation*}where $\epsilon>0$ is the error ratio and $d_{e,\hat{\gG}_1, \hat{\gG}_2}^\epsilon$ is the expectation of GED calculation result. Assuming the resulting GED is either $\lfloor d_{e,\hat{\gG}_1, \hat{\gG}_2}^\epsilon\rfloor$ or $\lfloor d_{e,\hat{\gG}_1, \hat{\gG}_2}^\epsilon\rfloor+1$ following a Bernoulli distribution, the following corollary can be obtained:\\[-1.5ex]
\begin{corollary}[Effect of GED errors on $\mathrm{LBEI}_{\mathrm{SEPX}}$]\label{coro:EI_SEPX_error} With error ratio $\epsilon$ in calculating $d_{e,\hat{\gG}_1, \hat{\gG}_2}^*$, $\mathrm{LBEI}_{\mathrm{SEPX}}$ becomes
	{\scriptsize \begin{align*}
	&\mathrm{LBEI}_{\mathrm{SEPX}}^\epsilon= \ (d_{e,\hat{\gG}_1, \hat{\gG}_2}^\epsilon-\lfloor d_{e,\hat{\gG}_1, \hat{\gG}_2}^\epsilon\rfloor)\\&\cdot\E(\max(\frac{d_{e,\hat{\gG}_{\mathrm{opt}}, \hat{\gG}_1}^*\cdot (\lfloor d_{e,\hat{\gG}_1, \hat{\gG}_2}^\epsilon\rfloor+1)}{n\cdot(n-1)-\lfloor n_{se}^\epsilon\rfloor}-\mathcal{B}(\lfloor d_{e,\hat{\gG}_1, \hat{\gG}_2}^\epsilon\rfloor+1, 0.5),0))\\&+(\lfloor d_{e,\hat{\gG}_1, \hat{\gG}_2}^\epsilon\rfloor+1-d_{e,\hat{\gG}_1, \hat{\gG}_2}^\epsilon)\\&\cdot\E(\max(\frac{d_{e,\hat{\gG}_{\mathrm{opt}}, \hat{\gG}_1}^*\cdot \lfloor d_{e,\hat{\gG}_1, \hat{\gG}_2}^\epsilon\rfloor}{n\cdot(n-1)-\lceil n_{se}^\epsilon\rceil}-\mathcal{B}(\lfloor d_{e,\hat{\gG}_1, \hat{\gG}_2}^\epsilon \rfloor, 0.5),0)),
	\end{align*}}where $n_{se}^\epsilon=\max(n\cdot(n-1)-d_{e,\hat{\gG}_{\mathrm{opt}}, \hat{\gG}_1}^*-d_{e,\hat{\gG}_1, \hat{\gG}_2}^\epsilon, 0).$
\end{corollary}

As in Section~\ref{subsec:theory_comp}, Monte Carlo simulations with $10^6$ trials each were performed to estimate the values of $\mathrm{LBEI}_{\mathrm{SEPX}}^\epsilon$ under different error ratios $\epsilon$. Figure~\ref{fig:EI_101_error} in Appendix~\ref{subsec:add_error_GED} compares $\mathrm{LBEI}_{\mathrm{SEPX}}^\epsilon$ with the $\mathrm{LBEI}$ values for other methods under error ratios $\epsilon=$ 0.1, 0.2, and 0.3.

The conclusion is that the SEP crossover has a theoretical advantage in expected improvement compared to mutation, standard crossover, and RL even with a very high error ratio of $30\%$ in the GED calculations. Thus, approximation methods can be used for GED if the computational cost of the SEP crossover needs to be reduced.
\section{Empirical Verification}\label{sec:exp}

This section first verifies that the parameter values used in the numerical analysis indeed apply to real-world problems. It then demonstrates that the SEP crossover is effective in real NAS problems under both noise-free and noisy environments. Experiment setup is provided in Appendix~\ref{subsec:Exp_Setup}.

\subsection{Applicability of the Theory}\label{subsec:applicability}

Figures~\ref{fig:EI_101} (and Figure~\ref{fig:EI_nlp} in Appendix~\ref{subsec:add_theo_comp}) demonstrate the theoretical advantage of SEP crossover numerically. However, it is important to verify that the parameter values used in the Monte Carlo simulation  indeed lie within the favorable regions in real NAS problems. In particular, the values used for $d_{e,\hat{\gG}_{\mathrm{opt}}, \hat{\gG}_1}^*$, $d_{e,\hat{\gG}_1, \hat{\gG}_2}^*$, $n_1^1$, and $n_2^1$ are critical to the expected improvement and need to be verified in standard benchmarks and with a standard NAS algorithm.

A NAS benchmark is said to be queryable if it directly returns the predictive performance of any architecture in the search space. While NAS-bench-101 has the most flexible graph search space among all queryable NAS benchmarks, NAS-bench-NLP (which is not queryable) has the largest search space among all existing NAS benchmarks \citep{Mehta22}. They were both thus used to evaluate the parameter ranges. In order to evaluate the SEP crossover with a standard NAS algorithm, it was incorporated into the state-of-the-art Regularized Evolution method \citep[RE;][]{Real19}. RE employs only a mutation operator; SEP crossover was integrated into it by alternating crossover with mutation. To measure the parameter ranges, RE was run on both benchmarks, and the relative frequency distributions of the above parameters recorded (see Appendix~\ref{subsec:add_applicability}).

The results indeed show that the parameters lie within the range of the numerical analysis in Section~\ref{subsec:theory_comp}. Moreover, they are within the subrange where the SEP crossover has a theoretical advantage (Figure~\ref{fig:EI_101}). The results thus verify that the theory applies to NAS in real-world problems.

\subsection{Performance in Noise-free Environments}\label{subsec:noise-free}
\begin{figure*}[t]
	\centering
	\includegraphics[width=0.23\linewidth]{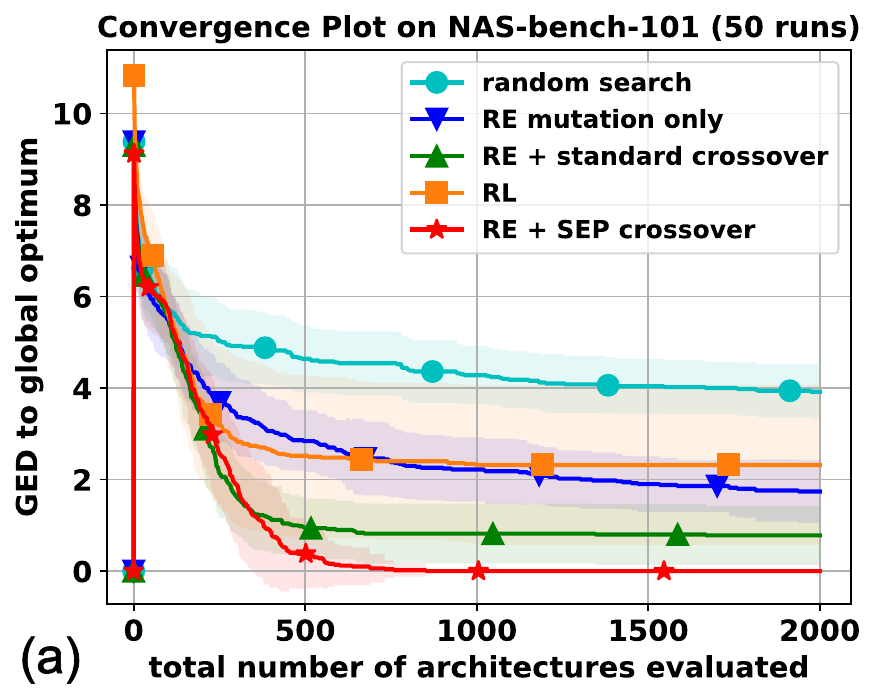}
	\includegraphics[width=0.235\linewidth]{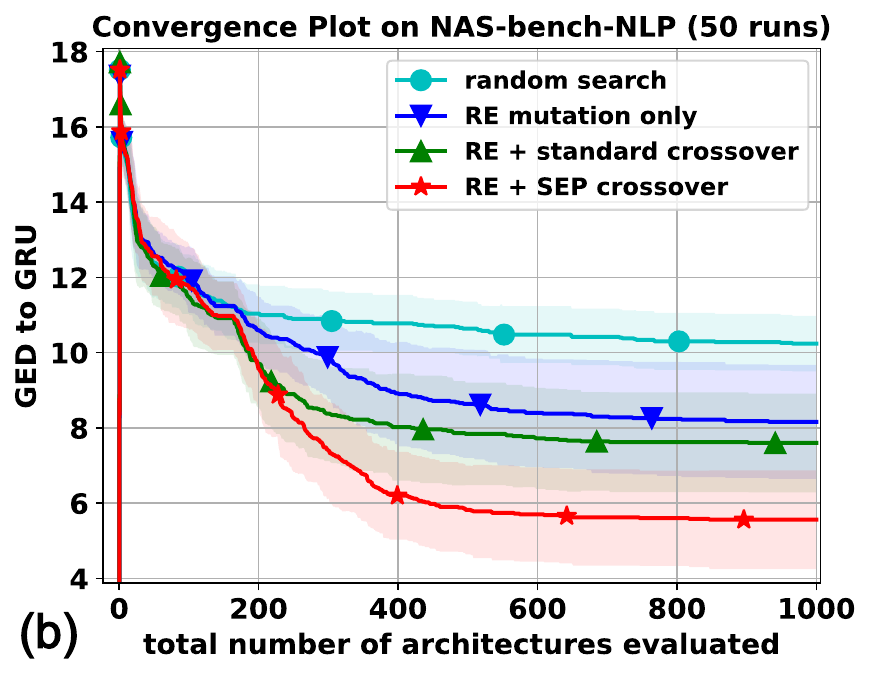}
	\includegraphics[width=0.2525\linewidth]{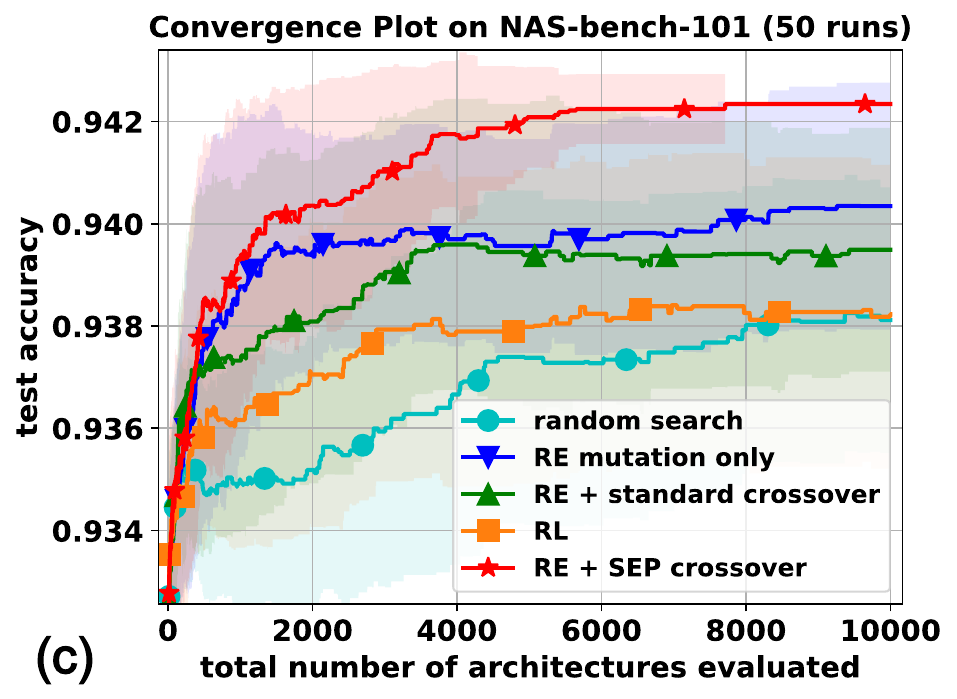}
	\includegraphics[width=0.26\linewidth]{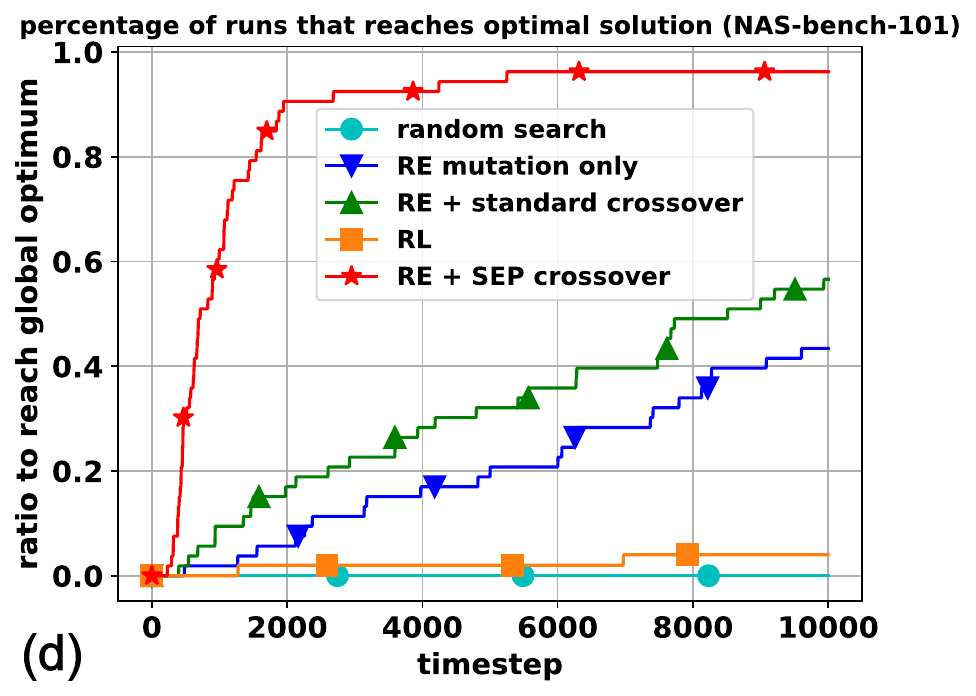}
\vspace*{-1.5ex}
	\caption{\textbf{Convergence in noise-free ((a) and (b)) and noisy environments ((c) and (d)).} (a) GED to global optimum in NAS-bench-101. (b) GED to GRU in NAS-bench-NLP. (c) Average testing accuracy in NAS-bench-101. (d) Percentage of runs that reach the global optimal architecture in NAS-bench-101. In all experiments, the SEP crossover performs consistently better than the other methods in both noise-free and noisy environments. The SEP crossover also reaches the global optimum significantly more efficiently than the other methods in NAS-bench-101. Together the experiments show that SEP consistently improves evolutionary NAS in practice. \label{fig:convergence}
	}
\end{figure*}
The evaluation step in NAS, i.e.\ the training and testing of an architecture, can be very noisy \citep{White21a}. To evaluate the search efficiency of the SEP crossover without the confounding effects of such noise, a noise-free evaluation function was first employed as the GED between the candidate architecture and the target architecture. The global optimum was selected as the target in NAS-bench-101, while the GRU \citep{Cho14} and LSTM \citep{Hochreiter97} models were used as targets in NAS-bench-NLP. Because the NAS-bench-NLP is not queryable, the global optimum is unknown. However, GRU and LSTM are two known top-performing models in this search space, and can therefore be used as a proxy for the global optimum. Since the RL method discussed in this work is only applicable to NAS-bench-101 space, it is not included in experiments on other benchmarks.

Plots (a) and (b) in Figure~\ref{fig:convergence} compare the performance of random search, the original RE with mutation only, a modified RE augmented with standard crossover, RL \citep{Ying19}, and a modified RE augmented with the SEP crossover. The SEP crossover performs significantly better than the other methods, demonstrating its value in practical NAS in noise-free environments. The experiments using LSTM as the target on NAS-bench-NLP is shown in Figure~\ref{fig:empirical} in Appendix~\ref{subsec:add_noise}, and a similar advantage of the SEP crossover can be observed. Note that the standard crossover also performs better than mutation; the population is not very diverse in these experiments and thus most parent graphs are already well aligned for crossover.

\subsection{Performance in Noisy Environments}\label{subsec:noisy}

In the third experiment, the robustness of the SEP crossover was evaluated by applying it to NAS problems with noisy evaluations. Noise arises from two sources: (1) the direct fitness/reward, e.g., the validation accuracy, used for search strategy is noisy; (2) The mapping between the final objective, e.g., the test accuracy, and direct fitness/reward is noisy. The validation accuracy in NAS-bench-101, which consists of random sampling of three real-world training trials was used as the direct fitness/reward. The average test accuracy in NAS-bench-101 was used as the final objectives.

Plots (c) and (d) of Figure~\ref{fig:convergence} again compare the performance of random search, RE with mutation-only, RE augmented with standard crossover, RL \citep{Ying19}, and RE augmented with the SEP crossover. The SEP crossover consistently outperforms other variants in this setup as well. Its performance is superior to the others in reaching the global optimal architecture (in terms of direct fitness/reward) in NAS-bench-101. Appendix~\ref{subsec:add_bo} and ~\ref{subsec:add_path} show more comparisons to two Bayesian optimization (BO) methods, namely BOHB \citep{Falkner18} and SMAC \citep{Hutter2011}, and crossover based on path encoding \citep{White21b}. The SEP crossover significantly outperforms all these approaches. The empirical results thus demonstrate that the proposed SEP crossover is robust and effective in realistic noisy environments as well. Note that the standard crossover performs worse than mutation on NAS-bench-101 in both test accuracy and validation accuracy (see Figure~\ref{fig:empirical} in Appendix~\ref{subsec:add_noise}). The population converges slower in these noisy environments, and the parent graphs are not as well aligned. Supplementary experiments using the surrogate predictions on NAS-bench-301 \citep{Zela22} are included in Figure~\ref{fig:empirical} of Appendix~\ref{subsec:add_noise}; the SEP crossover shows consistently better search ability.

\vspace*{-0.5ex}
\section{Discussion and Future Work}
To the best of our knowledge, this paper presents the first theoretical analysis on evolutionary NAS. In addition to the SEP crossover operator itself, the definitions, assumptions, lemmas and theorems can form a foundation for future theory development. The work thus deepens our understanding of the behaviors of EAs and provides useful insights toward developing better evolutionary NAS methods.

Although the advantage of the SEP crossover over mutation is demonstrated both theoretically and empirically, it does not mean that mutation should be avoided. To make any crossover operators work, diversity in the population is important. Mutation is critical in introducing new architectures into the population, thereby increasing and maintaining diversity. Search that takes advantage of both a proper crossover and mutation, such as RE augmented with SEP, is likely to be the most effective.

The theoretical results show that the standard crossover is not as good as mutation in terms of expected improvement. This conclusion is consistent with observations in prior literature: Applying crossover without resolving the permutation problem may simply make search less efficient. On the other hand, the advantage of the SEP crossover demonstrates that crossover can indeed help evolutionary search in NAS problems if the permutation problem can be avoided.

The computational cost of the SEP crossover depends on the calculation of GED between two parent graphs. Appendix~\ref{subsec:add_ged_time} reports the computation time for exact GED calculation in the NAS experiments. This cost is still negligible compared to the training and evaluation of an architecture, which may take several GPU hours or even days. GED calculation is therefore not the computation bottleneck for existing NAS problems. Moreover, analysis in Section~\ref{subsec:error_GED} suggests that the SEP crossover is robust to inaccurate GED calculation, and that if needed, approximate methods can be used to further reduce its computational costs.

Future directions include: (1) Developing a generative model that can output the offspring architecture for SEP crossover given two parents directly without a GED calculation; (2) applying the SEP crossover to more evolutionary NAS approaches and large-scale real-world NAS problems; and (3) applying the SEP crossover to other types of graph search/optimization problems, thus evaluating it as a general solution to optimization problems that involve graph search.

\section{Conclusion}

The SEP crossover is proposed as a solution the permutation problem in evolutionary NAS. Its advantage over standard crossover, mutation and RL was first shown theoretically, with a focus on the expected improvement of GED to global optimal. Empirical studies were then performed to verify the applicability of the theoretical results, and demonstrate the superior performance of the SEP crossover in both noise-free and noisy environments. The SEP crossover therefore allows taking full advantage of evolution in NAS, and potentially other similar design problems as well.


\bibliography{example_paper}
\bibliographystyle{icml2023}

\newpage
\appendix
\counterwithin{figure}{section}
\counterwithin{table}{section}
\onecolumn
\section{Appendix}\label{sec:app}
\subsection{Lemmas and Theorems with Proof Details}\label{subsec:app_proof}
\begin{lemma}[Invariance of SEP and standard crossover to parent permutation]\label{lem:invar_crossover}
	For any permutation $\pi'$, suppose graph $\hat{\gG}'_1$ has the corresponding AA-matrix $\mA_{\hat{\gG}'_1}=\mP_{\pi'}\mA_{\hat{\gG}_1}\mP_{\pi'}^{\top}$, $\mA_{\hat{\gG}'_{\mathrm{new}}}=r(\mA_{\hat{\gG}'_1}, \mP_{\pi_a}\mA_{\hat{\gG}_2}\mP_{\pi_a}^{\top})$, $\mA_{\hat{\gG}_{\mathrm{new}}}=r(\mA_{\hat{\gG}_1}, \mP_{\pi_b}\mA_{\hat{\gG}_2}\mP_{\pi_b}^{\top})$, and $\gG'_{\mathrm{new}}$,  $\gG_{\mathrm{new}}$ are the graphs after removing all null vertices from $\hat{\gG}'_{\mathrm{new}}$ and $\hat{\gG}_{\mathrm{new}}$, respectively. If $\pi_a=\pi^*_{\hat{\gG}'_1,\hat{\gG}_2}$, $\pi_b=\pi^*_{\hat{\gG}_1,\hat{\gG}_2}$, or $\pi_a=\pi_\mathrm{rand}$, $\pi_b=\pi_\mathrm{rand}$ ($\pi_a$ and $\pi_b$ are sampled independently), then $\mathrm{GED}(\gG_{\mathrm{opt}}, \gG'_{\mathrm{new}})=\mathrm{GED}(\gG_{\mathrm{opt}}, \gG_{\mathrm{new}})$, $d_v(\mA_{\hat{\gG}_{\mathrm{opt}}}, \mA_{\hat{\gG}'_{\mathrm{new}}\rightarrow\hat{\gG}_{\mathrm{opt}}})=d_v(\mA_{\hat{\gG}_{\mathrm{opt}}}, \mA_{\hat{\gG}_{\mathrm{new}}\rightarrow\hat{\gG}_{\mathrm{opt}}})$, and $d_e(\mA_{\hat{\gG}_{\mathrm{opt}}}, \mA_{\hat{\gG}'_{\mathrm{new}}\rightarrow\hat{\gG}_{\mathrm{opt}}})=d_e(\mA_{\hat{\gG}_{\mathrm{opt}}}, \mA_{\hat{\gG}_{\mathrm{new}}\rightarrow\hat{\gG}_{\mathrm{opt}}})$.
\end{lemma}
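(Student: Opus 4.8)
The plan is to prove the stronger statement that the offspring produced from the relabeled parent is, in distribution, merely a relabeling of the offspring produced from the original parent: $\mA_{\hat{\gG}'_{\mathrm{new}}}$ has the same law as $\mP_{\pi'}\mA_{\hat{\gG}_{\mathrm{new}}}\mP_{\pi'}^{\top}$ (and in fact, coupling the independent coin flips in the two calls to $r(\cdot)$ appropriately, this holds pathwise). Every claimed equality then follows because $\mathrm{GED}(\gG_{\mathrm{opt}},\cdot)$, the aligned matrix $\mA_{\cdot\rightarrow\hat{\gG}_{\mathrm{opt}}}$, and its $d_v$/$d_e$ distances to $\mA_{\hat{\gG}_{\mathrm{opt}}}$ are all invariant under a simultaneous row/column permutation of the argument's AA-matrix.

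First I would record three elementary facts. (i) For any permutation $\sigma\in S_n$ and equal-shape matrices $\mM,\mN$, we have $d(\mP_{\sigma}\mM\mP_{\sigma}^{\top},\mP_{\sigma}\mN\mP_{\sigma}^{\top})=d(\mM,\mN)$, and likewise for $d_v$ and $d_e$, since conjugating by $\mP_{\sigma}$ just permutes the index pairs $(i,j)$ over which the mismatch indicators are summed. (ii) A product of permutation matrices is a permutation matrix, and for fixed $\pi'$ the map sending $\pi$ to the permutation represented by $\mP_{\pi'}^{\top}\mP_{\pi}$ is a bijection of $S_n$. (iii) $r(\mP_{\sigma}\mM\mP_{\sigma}^{\top},\mP_{\sigma}\mN\mP_{\sigma}^{\top})$ has the same distribution as $\mP_{\sigma}\,r(\mM,\mN)\,\mP_{\sigma}^{\top}$, because the $n^2$ independent $\mathrm{Bernoulli}(1/2)$ choices defining $r$ are just reindexed by that same bijection on $(i,j)$.

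Next comes the core claim, handled per crossover type. For the SEP crossover, fact (i) with $\sigma=\pi'$ gives $d(\mA_{\hat{\gG}'_1},\mP_{\pi}\mA_{\hat{\gG}_2}\mP_{\pi}^{\top})=d(\mA_{\hat{\gG}_1},\mP_{\tau}\mA_{\hat{\gG}_2}\mP_{\tau}^{\top})$ with $\mP_{\tau}=\mP_{\pi'}^{\top}\mP_{\pi}$; by (ii) this reparametrizes the minimization defining $\pi_a=\pi^*_{\hat{\gG}'_1,\hat{\gG}_2}$ onto the one defining $\pi_b=\pi^*_{\hat{\gG}_1,\hat{\gG}_2}$, so $\mP_{\pi_a}=\mP_{\pi'}\mP_{\pi_b}$. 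Substituting this together with $\mA_{\hat{\gG}'_1}=\mP_{\pi'}\mA_{\hat{\gG}_1}\mP_{\pi'}^{\top}$ into $\mA_{\hat{\gG}'_{\mathrm{new}}}=r(\mA_{\hat{\gG}'_1},\mP_{\pi_a}\mA_{\hat{\gG}_2}\mP_{\pi_a}^{\top})$ factors a common $\mP_{\pi'}(\cdot)\mP_{\pi'}^{\top}$ out of both arguments of $r$, and (iii) yields $\mA_{\hat{\gG}'_{\mathrm{new}}}\stackrel{d}{=}\mP_{\pi'}\,r(\mA_{\hat{\gG}_1},\mP_{\pi_b}\mA_{\hat{\gG}_2}\mP_{\pi_b}^{\top})\,\mP_{\pi'}^{\top}=\mP_{\pi'}\mA_{\hat{\gG}_{\mathrm{new}}}\mP_{\pi'}^{\top}$. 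For the standard crossover, $\pi_a=\pi_{\mathrm{rand}}$ is uniform on $S_n$ and independent of everything, so by (ii) the derived $\tau$ with $\mP_{\tau}=\mP_{\pi'}^{\top}\mP_{\pi_a}$ is again uniform; the same substitution plus (iii) gives $\mA_{\hat{\gG}'_{\mathrm{new}}}\stackrel{d}{=}\mP_{\pi'}\,r(\mA_{\hat{\gG}_1},\mP_{\tau}\mA_{\hat{\gG}_2}\mP_{\tau}^{\top})\,\mP_{\pi'}^{\top}\stackrel{d}{=}\mP_{\pi'}\mA_{\hat{\gG}_{\mathrm{new}}}\mP_{\pi'}^{\top}$, since $\tau$ and $\pi_b=\pi_{\mathrm{rand}}$ share the uniform law.

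Finally I would cash in the core claim. Isomorphic extended graphs stay isomorphic after deleting null vertices, and by the Null Vertex definition null vertices do not affect GED, so $\mathrm{GED}(\gG_{\mathrm{opt}},\gG'_{\mathrm{new}})=\mathrm{GED}(\gG_{\mathrm{opt}},\gG_{\mathrm{new}})$. For the $d_v$ and $d_e$ equalities, the set of conjugates $\{\mP_{\pi}\mA_{\hat{\gG}'_{\mathrm{new}}}\mP_{\pi}^{\top}:\pi\in S_n\}$ coincides with $\{\mP_{\rho}\mA_{\hat{\gG}_{\mathrm{new}}}\mP_{\rho}^{\top}:\rho\in S_n\}$ by (ii), so minimizing $d(\mA_{\hat{\gG}_{\mathrm{opt}}},\cdot)$ over each set selects the same aligned matrix $\mA_{\hat{\gG}'_{\mathrm{new}}\rightarrow\hat{\gG}_{\mathrm{opt}}}=\mA_{\hat{\gG}_{\mathrm{new}}\rightarrow\hat{\gG}_{\mathrm{opt}}}$, hence the same $d_v$ and $d_e$ distances to $\mA_{\hat{\gG}_{\mathrm{opt}}}$. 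The main delicacy is the non-uniqueness of the GED-minimizing permutations — both $\pi^*_{\hat{\gG}_1,\hat{\gG}_2}$ and the alignment to $\hat{\gG}_{\mathrm{opt}}$ — which I would dispatch by noting that left-multiplication by $\mP_{\pi'}$ is an explicit bijection between the relevant minimizer sets, so the argument goes through under any fixed consistent tie-breaking rule (equivalently, read at the level of these sets). A secondary point is that the three equalities are equalities in distribution (or pathwise under the coin-flip coupling above); this suffices for the later expected-improvement results, since the parent-side term $d_e(\mA_{\hat{\gG}_{\mathrm{opt}}},\mA_{\hat{\gG}_1\rightarrow\hat{\gG}_{\mathrm{opt}}})$ is deterministic and, being isomorphism-invariant, is unchanged by the relabeling $\pi'$.
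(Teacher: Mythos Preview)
Your proposal is correct and follows essentially the same route as the paper: establish $\mP_{\pi_a}=\mP_{\pi'}\mP_{\pi_b}$ (deterministically for SEP, in law for standard crossover), use that $r$ commutes with simultaneous conjugation to deduce $\mA_{\hat{\gG}'_{\mathrm{new}}}$ is a conjugate of $\mA_{\hat{\gG}_{\mathrm{new}}}$, and then invoke isomorphism-invariance of $\mathrm{GED}$, $d_v$, and $d_e$. You are in fact more careful than the paper on two points it glosses over---the distributional (rather than pathwise) nature of the equalities when $r$ and $\pi_{\mathrm{rand}}$ are involved, and the non-uniqueness of the GED-minimizing permutations---both of which you handle correctly.
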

\begin{proof}
	Since a permutation of nodes (without changing their attributes and connections) simply generates an isomorphism of the original graph, $\hat{\gG}'_1$ is an isomorphism of $\hat{\gG}_1$. Calculations of the graph edit distance between two graphs are invariant to isomorphisms of either graph, so we have $\mA_{\hat{\gG}_2\rightarrow\hat{\gG}'_1}=\mP_{\pi'}\mA_{\hat{\gG}_2\rightarrow\hat{\gG}_1}\mP_{\pi'}^{\top}\Rightarrow \mP_{\pi^*_{\hat{\gG}'_1,\hat{\gG}_2}}\mA_{\hat{\gG}_2}\mP_{\pi^*_{\hat{\gG}'_1,\hat{\gG}_2}}^{\top}=\mP_{\pi'}\mP_{\pi^*_{\hat{\gG}_1,\hat{\gG}_2}}\mA_{\hat{\gG}_2}\mP_{\pi^*_{\hat{\gG}_1,\hat{\gG}_2}}^{\top}\mP_{\pi'}^{\top}\Rightarrow\mP_{\pi^*_{\hat{\gG}'_1,\hat{\gG}_2}}=\mP_{\pi'}\mP_{\pi^*_{\hat{\gG}_1,\hat{\gG}_2}}$. Because $r(\mA,\mB)$ is an element-wise operation that randomly chooses each entry either from $\mA$ or $\mB$, we have $r(\mP\mA\mP^{\top},\mP\mB\mP^{\top})=\mP r(\mA,\mB)\mP^{\top}$ for any $\mP$. Given $\pi_a=\pi^*_{\hat{\gG}'_1,\hat{\gG}_2}$, $\pi_b=\pi^*_{\hat{\gG}_1,\hat{\gG}_2}$, we have $\mA_{\hat{\gG}'_{\mathrm{new}}}=r(\mA_{\hat{\gG}'_1}, \mP_{\pi^*_{\hat{\gG}'_1,\hat{\gG}_2}}\mA_{\hat{\gG}_2}\mP_{\pi^*_{\hat{\gG}'_1,\hat{\gG}_2}}^{\top})=r(\mP_{\pi'}\mA_{\hat{\gG}_1}\mP_{\pi'}^{\top}, \mP_{\pi'}\mP_{\pi^*_{\hat{\gG}_1,\hat{\gG}_2}}\mA_{\hat{\gG}_2}\mP_{\pi^*_{\hat{\gG}_1,\hat{\gG}_2}}^{\top}\mP_{\pi'}^{\top})=\mP_{\pi'}r(\mA_{\hat{\gG}_1}, \mP_{\pi^*_{\hat{\gG}_1,\hat{\gG}_2}}\mA_{\hat{\gG}_2}\mP_{\pi^*_{\hat{\gG}_1,\hat{\gG}_2}}^{\top})\mP_{\pi'}^{\top}=\mP_{\pi'}\mA_{\hat{\gG}_{\mathrm{new}}}\mP_{\pi'}^{\top}$, which shows $\gG'_{\mathrm{new}}$ is an isomorphism of $\gG_{\mathrm{new}}$. Therefore, calculating $\mathrm{GED}(\gG_{\mathrm{opt}}, \gG'_{\mathrm{new}})$ is equivalent to calculating $\mathrm{GED}(\gG_{\mathrm{opt}}, \gG_{\mathrm{new}})$, and $d_v(\mA_{\hat{\gG}_{\mathrm{opt}}}, \mA_{\hat{\gG}'_{\mathrm{new}}\rightarrow\hat{\gG}_{\mathrm{opt}}})=d_v(\mA_{\hat{\gG}_{\mathrm{opt}}}, \mA_{\hat{\gG}_{\mathrm{new}}\rightarrow\hat{\gG}_{\mathrm{opt}}})$, $d_e(\mA_{\hat{\gG}_{\mathrm{opt}}}, \mA_{\hat{\gG}'_{\mathrm{new}}\rightarrow\hat{\gG}_{\mathrm{opt}}})=d_e(\mA_{\hat{\gG}_{\mathrm{opt}}}, \mA_{\hat{\gG}_{\mathrm{new}}\rightarrow\hat{\gG}_{\mathrm{opt}}})$.
	
	For the situation where $\pi_a=\pi_\mathrm{rand}$, $\pi_b=\pi_\mathrm{rand}$ ($\pi_a$ and $\pi_b$ are sampled independently), since any permutation of a randomly generated sequence is equivalent to directly generating a random sequence, we have $\mP_{\pi'}\mP_{\pi_\mathrm{rand}}=\mP_{\pi_\mathrm{rand}}$ for any $\pi'$. We can then derive the same conclusion as we did with $\mP_{\pi'}\mP_{\pi^*_{\hat{\gG}_1,\hat{\gG}_2}}=\mP_{\pi^*_{\hat{\gG}'_1,\hat{\gG}_2}}$.
\end{proof}
\begin{lemma}[Invariance of mutation to parent permutation]\label{lem:invar_mutation}
	For any permutation $\pi'$, suppose graph $\hat{\gG}'_1$ has the corresponding AA-matrix $\mA_{\hat{\gG}'_1}=\mP_{\pi'}\mA_{\hat{\gG}_1}\mP_{\pi'}^{\top}$, $\mA_{\hat{\gG}'_{\mathrm{new}}}=m(\mA_{\hat{\gG}'_1})$, $\mA_{\hat{\gG}_{\mathrm{new}}}=m(\mA_{\hat{\gG}_1})$, and $\gG'_{\mathrm{new}}$,  $\gG_{\mathrm{new}}$ are the graphs after removing all null vertices from $\hat{\gG}'_{\mathrm{new}}$ and $\hat{\gG}_{\mathrm{new}}$, respectively, then $\mathrm{GED}(\gG_{\mathrm{opt}}, \gG'_{\mathrm{new}})=\mathrm{GED}(\gG_{\mathrm{opt}}, \gG_{\mathrm{new}})$, $d_v(\mA_{\hat{\gG}_{\mathrm{opt}}}, \mA_{\hat{\gG}'_{\mathrm{new}}\rightarrow\hat{\gG}_{\mathrm{opt}}})=d_v(\mA_{\hat{\gG}_{\mathrm{opt}}}, \mA_{\hat{\gG}_{\mathrm{new}}\rightarrow\hat{\gG}_{\mathrm{opt}}})$, and $d_e(\mA_{\hat{\gG}_{\mathrm{opt}}}, \mA_{\hat{\gG}'_{\mathrm{new}}\rightarrow\hat{\gG}_{\mathrm{opt}}})=d_e(\mA_{\hat{\gG}_{\mathrm{opt}}}, \mA_{\hat{\gG}_{\mathrm{new}}\rightarrow\hat{\gG}_{\mathrm{opt}}})$.
\end{lemma}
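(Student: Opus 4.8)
The plan is to follow the template of the proof of Lemma~\ref{lem:invar_crossover}, with the single extra wrinkle that mutation is a \emph{random} operation, so the three stated equalities must be understood distributionally and obtained through a coupling of the two mutation runs.

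First I would isolate the key commutation property of the mutation operator $m(\cdot)$ from Definition~\ref{def:AA_mutation}: for every permutation $\pi'$ and every AA-matrix $\mA$, the random matrices $m(\mP_{\pi'}\mA\mP_{\pi'}^{\top})$ and $\mP_{\pi'}m(\mA)\mP_{\pi'}^{\top}$ have the same law. To see this, note that conjugation by $\mP_{\pi'}$ acts on a matrix merely by the bijection of positions $(i,j)\mapsto(\pi'(i),\pi'(j))$; in particular it maps diagonal positions to diagonal positions and off-diagonal positions to off-diagonal positions. Since $m$ perturbs each entry independently, alters it with the same probability $p_m$, and, when it does, resamples uniformly among the allowed values distinct from the current one --- the allowed set being the vertex-attribute alphabet on the diagonal and the edge-attribute alphabet off the diagonal --- the joint distribution of the perturbed matrix is invariant under this relabeling of positions. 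Hence $m$ and conjugation by $\mP_{\pi'}$ commute in distribution.

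Next I would use this to couple the two runs. Because $\mA_{\hat{\gG}'_1}=\mP_{\pi'}\mA_{\hat{\gG}_1}\mP_{\pi'}^{\top}$, the commutation property lets us realize $\mA_{\hat{\gG}'_{\mathrm{new}}}=m(\mA_{\hat{\gG}'_1})$ and $\mA_{\hat{\gG}_{\mathrm{new}}}=m(\mA_{\hat{\gG}_1})$ on one probability space so that $\mA_{\hat{\gG}'_{\mathrm{new}}}=\mP_{\pi'}\mA_{\hat{\gG}_{\mathrm{new}}}\mP_{\pi'}^{\top}$ almost surely. Under this coupling $\hat{\gG}'_{\mathrm{new}}$ is an isomorphism of $\hat{\gG}_{\mathrm{new}}$, and since deleting null vertices preserves isomorphism, $\gG'_{\mathrm{new}}$ is an isomorphism of $\gG_{\mathrm{new}}$. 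As established in the proof of Lemma~\ref{lem:invar_crossover}, $\mathrm{GED}(\gG_{\mathrm{opt}},\cdot)$ and the quantities $d_v(\mA_{\hat{\gG}_{\mathrm{opt}}},\,\cdot\rightarrow\hat{\gG}_{\mathrm{opt}})$ and $d_e(\mA_{\hat{\gG}_{\mathrm{opt}}},\,\cdot\rightarrow\hat{\gG}_{\mathrm{opt}})$ are invariant under isomorphism of their second argument; hence along this coupling all three quantities agree pointwise, so their distributions coincide, which is exactly the assertion of the lemma.

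The main obstacle here is bookkeeping rather than mathematics: stating precisely that the conclusion is an equality in distribution, and verifying the invariance of $m(\cdot)$ under index relabeling. The one point that needs care is that diagonal and off-diagonal entries have different ``allowed-value'' sets (vertex-attribute substitution versus edge insertion/deletion are distinct edit types), so one must check that conjugation by $\mP_{\pi'}$ never mixes the two classes of positions --- which it does not, as it acts as $(i,j)\mapsto(\pi'(i),\pi'(j))$. Once that is in hand, the reduction to the isomorphism-invariance already proved in Lemma~\ref{lem:invar_crossover} is immediate.
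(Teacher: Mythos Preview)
Your proposal is correct and follows essentially the same route as the paper: use that $m(\cdot)$ acts element-wise to obtain $m(\mP_{\pi'}\mA\mP_{\pi'}^{\top})=\mP_{\pi'}m(\mA)\mP_{\pi'}^{\top}$, conclude that $\hat{\gG}'_{\mathrm{new}}$ is an isomorphism of $\hat{\gG}_{\mathrm{new}}$, and then invoke the isomorphism-invariance of $\mathrm{GED}$, $d_v$, and $d_e$. You are in fact more careful than the paper, which writes the commutation as a bare equality; your explicit treatment of it as equality in law together with a coupling, and your check that conjugation preserves the diagonal/off-diagonal split, make the argument rigorous where the paper is informal.
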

\begin{proof}
	Since $m(\mA)$ is an element-wise operation, we have $m(\mP\mA\mP^{\top})=\mP m(\mA)\mP^{\top}$ for any $\mP$. We then have $\mA_{\hat{\gG}'_{\mathrm{new}}}=m(\mA_{\hat{\gG}'_1})=m(\mP_{\pi'}\mA_{\hat{\gG}_1}\mP_{\pi'}^{\top})=\mP_{\pi'}m(\mA_{\hat{\gG}_1})\mP_{\pi'}^{\top}=\mP_{\pi'}\mA_{\hat{\gG}_{\mathrm{new}}}\mP_{\pi'}^{\top}$, so $\gG'_{\mathrm{new}}$ is an isomorphism of $\gG_{\mathrm{new}}$. Therefore, we have $GED(\gG_{\mathrm{opt}}, \gG'_{\mathrm{new}})=GED(\gG_{\mathrm{opt}}, \gG_{\mathrm{new}})$, $d_v(\mA_{\hat{\gG}_{\mathrm{opt}}}, \mA_{\hat{\gG}'_{\mathrm{new}}\rightarrow\hat{\gG}_{\mathrm{opt}}})=d_v(\mA_{\hat{\gG}_{\mathrm{opt}}}, \mA_{\hat{\gG}_{\mathrm{new}}\rightarrow\hat{\gG}_{\mathrm{opt}}})$, and $d_e(\mA_{\hat{\gG}_{\mathrm{opt}}}, \mA_{\hat{\gG}'_{\mathrm{new}}\rightarrow\hat{\gG}_{\mathrm{opt}}})=d_e(\mA_{\hat{\gG}_{\mathrm{opt}}}, \mA_{\hat{\gG}_{\mathrm{new}}\rightarrow\hat{\gG}_{\mathrm{opt}}})$.
\end{proof}

\begin{lemma}[Lower bound for common parts in $\gG_{\mathrm{opt}}$, $\gG_1$ and $\gG_2$]\label{lem:common_parts}
	Suppose $\mathrm{GED}(\gG_{\mathrm{opt}}, \gG_1)=d_v(\mA_{\hat{\gG}_{\mathrm{opt}}}, \mA_{\hat{\gG}_1\rightarrow\hat{\gG}_{\mathrm{opt}}})+d_e(\mA_{\hat{\gG}_{\mathrm{opt}}}, \mA_{\hat{\gG}_1\rightarrow\hat{\gG}_{\mathrm{opt}}})=d_{v,\hat{\gG}_{\mathrm{opt}}, \hat{\gG}_1}^*+d_{e,\hat{\gG}_{\mathrm{opt}}, \hat{\gG}_1}^*=d_{\hat{\gG}_{\mathrm{opt}}, \hat{\gG}_1}^*$, $\mathrm{GED}(\gG_1, \gG_2)=d_v(\mA_{\hat{\gG}_1}, \mA_{\hat{\gG}_2\rightarrow\hat{\gG}_1})+d_e(\mA_{\hat{\gG}_1}, \mA_{\hat{\gG}_2\rightarrow\hat{\gG}_1})=d_{v,\hat{\gG}_1, \hat{\gG}_2}^*+d_{e,\hat{\gG}_1, \hat{\gG}_2}^*=d_{\hat{\gG}_1, \hat{\gG}_2}^*$, there exist $\pi_1$ and $\pi_2$ so that $s(\mA_{\hat{\gG}_{\mathrm{opt}}}, \mP_{\pi_1}\mA_{\hat{\gG}_1}\mP_{\pi_1}^{\top}, \mP_{\pi_2}\mA_{\hat{\gG}_2}\mP_{\pi_2}^{\top})>=\max(n^2-d_{\hat{\gG}_{\mathrm{opt}}, \hat{\gG}_1}^*-d_{\hat{\gG}_1, \hat{\gG}_2}^*, 0)$, $s_v(\mA_{\hat{\gG}_{\mathrm{opt}}}, \mP_{\pi_1}\mA_{\hat{\gG}_1}\mP_{\pi_1}^{\top}, \mP_{\pi_2}\mA_{\hat{\gG}_2}\mP_{\pi_2}^{\top})>=\max(n-d_{v,\hat{\gG}_{\mathrm{opt}}, \hat{\gG}_1}^*-d_{v,\hat{\gG}_1, \hat{\gG}_2}^*, 0)$, $s_e(\mA_{\hat{\gG}_{\mathrm{opt}}}, \mP_{\pi_1}\mA_{\hat{\gG}_1}\mP_{\pi_1}^{\top}, \mP_{\pi_2}\mA_{\hat{\gG}_2}\mP_{\pi_2}^{\top})>=\max(n\cdot(n-1)-d_{e,\hat{\gG}_{\mathrm{opt}}, \hat{\gG}_1}^*-d_{e,\hat{\gG}_1, \hat{\gG}_2}^*, 0)$, where $s(\mA, \mB, \mC)=\sum_i\sum_j\1_{A_{i,j}= B_{i,j}=C_{i,j}}$, $s_v(\mA, \mB, \mC)=\sum_i\1_{A_{i,i}= B_{i,i}=C_{i,i}}$, $s_e(\mA, \mB, \mC)=\sum_i\sum_{j\neq i}\1_{A_{i,j}= B_{i,j}=C_{i,j}}$.
\end{lemma}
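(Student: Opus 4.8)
The plan is to exhibit explicit permutations $\pi_1,\pi_2$ and then control the number of entry positions at which the three aligned AA-matrices disagree by a one-line union bound; the three claimed inequalities then follow by restricting that bound to all $n^2$ positions, to the $n$ diagonal positions, and to the $n(n-1)$ off-diagonal positions, and by noting that $s,s_v,s_e$ are counts of positions and hence nonnegative.

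First I would set $\pi_1=\pi^*_{\hat{\gG}_{\mathrm{opt}},\hat{\gG}_1}$, the permutation realizing $\mathrm{GED}(\gG_{\mathrm{opt}},\gG_1)$, so that $\mP_{\pi_1}\mA_{\hat{\gG}_1}\mP_{\pi_1}^{\top}=\mA_{\hat{\gG}_1\rightarrow\hat{\gG}_{\mathrm{opt}}}$; by the hypothesis this matrix disagrees with $\mA_{\hat{\gG}_{\mathrm{opt}}}$ in exactly $d_{v,\hat{\gG}_{\mathrm{opt}},\hat{\gG}_1}^*$ diagonal entries and $d_{e,\hat{\gG}_{\mathrm{opt}},\hat{\gG}_1}^*$ off-diagonal entries. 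Next let $\sigma=\pi^*_{\hat{\gG}_1,\hat{\gG}_2}$, so that $\mA_{\hat{\gG}_2\rightarrow\hat{\gG}_1}=\mP_{\sigma}\mA_{\hat{\gG}_2}\mP_{\sigma}^{\top}$ disagrees with $\mA_{\hat{\gG}_1}$ in $d_{v,\hat{\gG}_1,\hat{\gG}_2}^*$ diagonal and $d_{e,\hat{\gG}_1,\hat{\gG}_2}^*$ off-diagonal entries, and define $\pi_2$ by $\mP_{\pi_2}=\mP_{\pi_1}\mP_{\sigma}$ (a product of permutation matrices is a permutation matrix, so $\pi_2\in S_n$). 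The one step that needs care is checking that re-expressing everything in the $\pi_1$-frame preserves these disagreement counts: since $\mP_{\pi_2}\mA_{\hat{\gG}_2}\mP_{\pi_2}^{\top}=\mP_{\pi_1}\bigl(\mP_{\sigma}\mA_{\hat{\gG}_2}\mP_{\sigma}^{\top}\bigr)\mP_{\pi_1}^{\top}$ and conjugation by $\mP_{\pi_1}$ applies the same index permutation to rows and columns, it induces a bijection on index pairs that maps diagonal positions to diagonal positions and off-diagonal to off-diagonal; hence $\mP_{\pi_2}\mA_{\hat{\gG}_2}\mP_{\pi_2}^{\top}$ disagrees with $\mP_{\pi_1}\mA_{\hat{\gG}_1}\mP_{\pi_1}^{\top}$ in precisely $d_{v,\hat{\gG}_1,\hat{\gG}_2}^*$ diagonal and $d_{e,\hat{\gG}_1,\hat{\gG}_2}^*$ off-diagonal entries.

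Finally I would apply the union bound entrywise. Denote by $a_{i,j},b_{i,j},c_{i,j}$ the $(i,j)$-entries of $\mA_{\hat{\gG}_{\mathrm{opt}}}$, $\mP_{\pi_1}\mA_{\hat{\gG}_1}\mP_{\pi_1}^{\top}$, $\mP_{\pi_2}\mA_{\hat{\gG}_2}\mP_{\pi_2}^{\top}$ respectively. A position contributes to $s$ iff $a_{i,j}=b_{i,j}=c_{i,j}$, and this fails iff $a_{i,j}\neq b_{i,j}$ or $b_{i,j}\neq c_{i,j}$ (if the first two agree and the last two agree, all three agree). Therefore the number of positions on which the triple fails to agree is at most $d_{\hat{\gG}_{\mathrm{opt}},\hat{\gG}_1}^*+d_{\hat{\gG}_1,\hat{\gG}_2}^*$; restricted to diagonal positions it is at most $d_{v,\hat{\gG}_{\mathrm{opt}},\hat{\gG}_1}^*+d_{v,\hat{\gG}_1,\hat{\gG}_2}^*$, and to off-diagonal positions at most $d_{e,\hat{\gG}_{\mathrm{opt}},\hat{\gG}_1}^*+d_{e,\hat{\gG}_1,\hat{\gG}_2}^*$. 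Subtracting these from the totals $n^2$, $n$, $n(n-1)$ gives the three lower bounds; combining with the trivial bounds $s,s_v,s_e\geq 0$ produces the stated $\max(\cdot,0)$ form.

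I expect no real obstacle here: the only non-bookkeeping ingredient is the union bound, and the one place to be careful is the conjugation argument that the $\pi_1$-frame preserves the diagonal/off-diagonal split of disagreements (which follows because simultaneous row/column permutation is a bijection of index pairs fixing the diagonal set). If one wanted to be pedantic, one could also spell out that $d=d_v+d_e$ by definition so that the hypothesis really does split $\mathrm{GED}(\gG_{\mathrm{opt}},\gG_1)$ and $\mathrm{GED}(\gG_1,\gG_2)$ as claimed, and that $d(\mA_{\hat{\gG}_1},\mP_\sigma\mA_{\hat{\gG}_2}\mP_\sigma^{\top})=\mathrm{GED}(\gG_1,\gG_2)$ by the AA-matrix interpretation of GED.
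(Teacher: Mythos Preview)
Your proposal is correct and follows essentially the same approach as the paper: choose $\pi_1=\pi^*_{\hat{\gG}_{\mathrm{opt}},\hat{\gG}_1}$, align $\hat{\gG}_2$ to the permuted $\hat{\gG}_1$, and then apply a union bound on the two sets of disagreeing positions. The only cosmetic difference is that the paper sets $\pi_2=\pi^*_{\hat{\gG}'_1,\hat{\gG}_2}$ directly (relying on GED's invariance under isomorphism to get $d^*_{\hat{\gG}'_1,\hat{\gG}_2}=d^*_{\hat{\gG}_1,\hat{\gG}_2}$), whereas you build $\pi_2$ explicitly as the composition $\mP_{\pi_2}=\mP_{\pi_1}\mP_{\sigma}$ and verify the count is preserved via the conjugation argument; these are equivalent.
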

\begin{proof}
	Let's choose $\pi_1=\pi^*_{\hat{\gG}_\mathrm{opt},\hat{\gG}_1}$ and $\pi_2=\pi^*_{\hat{\gG}'_1,\hat{\gG}_2}$, where $\hat{\gG}'_1$ has the corresponding AA-matrix $\mA_{\hat{\gG}'_1}=\mP_{\pi_1}\mA_{\hat{\gG}_1}\mP_{\pi_1}^{\top}$, then we will have $d(\mA_{\hat{\gG}_{\mathrm{opt}}}, \mP_{\pi_1}\mA_{\hat{\gG}_1}\mP_{\pi_1}^{\top})=d_{\hat{\gG}_{\mathrm{opt}}, \hat{\gG}_1}^*$ and $d(\mP_{\pi_1}\mA_{\hat{\gG}_1}\mP_{\pi_1}^{\top}, \mP_{\pi_2}\mA_{\hat{\gG}_2}\mP_{\pi_2}^{\top})=d_{\hat{\gG}'_1, \hat{\gG}_2}^*=d_{\hat{\gG}_1, \hat{\gG}_2}^*$. In the worst case that the $d_{\hat{\gG}_{\mathrm{opt}}, \hat{\gG}_1}^*$ entries and $d_{\hat{\gG}_1, \hat{\gG}_2}^*$ entries have the least overlaps in positions, the number of same entries in $\mA_{\hat{\gG}_{\mathrm{opt}}}$, $\mP_{\pi_1}\mA_{\hat{\gG}_1}\mP_{\pi_1}^{\top}$ and $\mP_{\pi_2}\mA_{\hat{\gG}_2}\mP_{\pi_2}^{\top}$ will be no less than $n^2-d_{\hat{\gG}_{\mathrm{opt}}, \hat{\gG}_1}^*-d_{\hat{\gG}_1, \hat{\gG}_2}^*$ (if it is not negative). As a result, we have $s(\mA_{\hat{\gG}_{\mathrm{opt}}}, \mP_{\pi_1}\mA_{\hat{\gG}_1}\mP_{\pi_1}^{\top}, \mP_{\pi_2}\mA_{\hat{\gG}_2}\mP_{\pi_2}^{\top})>=\max(n^2-d_{\hat{\gG}_{\mathrm{opt}}, \hat{\gG}_1}^*-d_{\hat{\gG}_1, \hat{\gG}_2}^*, 0)$. When we decompose $s(\mA_{\hat{\gG}_{\mathrm{opt}}}, \mP_{\pi_1}\mA_{\hat{\gG}_1}\mP_{\pi_1}^{\top}, \mP_{\pi_2}\mA_{\hat{\gG}_2}\mP_{\pi_2}^{\top})$ into $s_v(\mA_{\hat{\gG}_{\mathrm{opt}}}, \mP_{\pi_1}\mA_{\hat{\gG}_1}\mP_{\pi_1}^{\top}, \mP_{\pi_2}\mA_{\hat{\gG}_2}\mP_{\pi_2}^{\top})$ and $s_e(\mA_{\hat{\gG}_{\mathrm{opt}}}, \mP_{\pi_1}\mA_{\hat{\gG}_1}\mP_{\pi_1}^{\top}, \mP_{\pi_2}\mA_{\hat{\gG}_2}\mP_{\pi_2}^{\top})$, we can easily obtain $s_v(\mA_{\hat{\gG}_{\mathrm{opt}}}, \mP_{\pi_1}\mA_{\hat{\gG}_1}\mP_{\pi_1}^{\top}, \mP_{\pi_2}\mA_{\hat{\gG}_2}\mP_{\pi_2}^{\top})>=\max(n-d_{v,\hat{\gG}_{\mathrm{opt}}, \hat{\gG}_1}^*-d_{v,\hat{\gG}_1, \hat{\gG}_2}^*, 0)$ and $s_e(\mA_{\hat{\gG}_{\mathrm{opt}}}, \mP_{\pi_1}\mA_{\hat{\gG}_1}\mP_{\pi_1}^{\top}, \mP_{\pi_2}\mA_{\hat{\gG}_2}\mP_{\pi_2}^{\top})>=\max(n\cdot(n-1)-d_{e,\hat{\gG}_{\mathrm{opt}}, \hat{\gG}_1}^*-d_{e,\hat{\gG}_1, \hat{\gG}_2}^*, 0)$.
\end{proof}

\begin{lemma}[Upper bound of expected GED to optimal]\label{lem:lb_expected_GED} Given an RL agent as defined in Definition~\ref{def:AA_rl}, its expected GED to optimal is defined as $\E_{\mA_\theta \sim \mQ_\theta}(\mathrm{GED}(\gG_{\mathrm{opt}},\gG_\theta))$, where $\gG_\theta$ is the corresponding graph of $\mA_\theta$. Suppose $\gG_{\mathrm{opt}}$ is within the sample space of the RL agent, and $\mQ_\theta$ is permuted to be $\mQ^*_\theta=\mP_{\pi^*_{\gG_{\mathrm{opt}},\theta}}\mQ_\theta\mP_{\pi^*_{\gG_{\mathrm{opt}},\theta}}^{\top}$ such that for any permutation $\pi^\prime$, $\Sigma_{i,j}p(A^{\theta*}_{i,j}\neq A^{\gG_{\mathrm{opt}}}_{i,j}|\mA^*_\theta \sim \mQ^*_\theta)\leq \Sigma_{i,j}p(A^{\theta \prime}_{i,j}\neq A^{\gG_{\mathrm{opt}}}_{i,j}|\mA^\prime_\theta \sim \mQ^\prime_\theta)$, where $\mQ^\prime_\theta=\mP_{\pi^\prime}\mQ_\theta\mP_{\pi^\prime}^{\top}$, we have $\E_{\mA_\theta \sim \mQ_\theta}(\mathrm{GED}(\gG_{\mathrm{opt}},\gG_\theta))\leq\Sigma_{i,j}p(A^{\theta*}_{i,j}\neq A^{\gG_{\mathrm{opt}}}_{i,j}|\mA^*_\theta \sim \mQ^*_\theta)$, for $i,j\in 1,2,\cdots,n$.
\end{lemma}
\begin{proof}
	$\mQ^*_\theta$ is one of the permutations of $\mQ_\theta$ that minimizes the expected number of different entries between $\mA^*_\theta \sim \mQ^*_\theta$ and $\mA_{\gG_\mathrm{opt}}$, i.e., $\Sigma_{i,j}p(A^{\theta*}_{i,j}\neq A^{\gG_{\mathrm{opt}}}_{i,j}|\mA^*_\theta \sim \mQ^*_\theta)=\E_{\mA^*_\theta \sim \mQ^*_\theta}d(\mA^*_\theta, \mA_{\gG_{\mathrm{opt}}})$. Since for every sampled $\mA^*_\theta$, we have $\mathrm{GED}(\gG_{\mathrm{opt}},\gG^*_\theta)\leq d(\mA^*_\theta, \mA_{\gG_{\mathrm{opt}}})$, which leads to $\E_{\mA^*_\theta \sim \mQ^*_\theta}(\mathrm{GED}(\gG_{\mathrm{opt}},\gG^*_\theta))\leq \E_{\mA^*_\theta \sim \mQ^*_\theta}d(\mA^*_\theta, \mA_{\gG_{\mathrm{opt}}})$. Because $\mQ^*_\theta$ is a permutation of $\mQ_\theta$, we have $\E_{\mA_\theta \sim \mQ_\theta}(\mathrm{GED}(\gG_{\mathrm{opt}},\gG_\theta))=\E_{\mA^*_\theta \sim \mQ^*_\theta}(\mathrm{GED}(\gG_{\mathrm{opt}},\gG^*_\theta))\leq \E_{\mA^*_\theta \sim \mQ^*_\theta}d(\mA^*_\theta, \mA_{\gG_{\mathrm{opt}}})=\Sigma_{i,j}p(A^{\theta*}_{i,j}\neq A^{\gG_{\mathrm{opt}}}_{i,j}|\mA^*_\theta \sim \mQ^*_\theta)$, for $i,j\in 1,2,\cdots,n$.
\end{proof}

\begin{repeatthm}{\ref{thm:EI_SEPX}}[Expected improvement of SEP crossover] Following Assumption~\ref{asm:diff_entry}, let $n_{se}=\max(n\cdot(n-1)-d_{e,\hat{\gG}_{\mathrm{opt}}, \hat{\gG}_1}^*-d_{e,\hat{\gG}_1, \hat{\gG}_2}^*, 0)$. and suppose $\mA_{\hat{\gG}_{\mathrm{new}}}=r(\mA_{\hat{\gG}'_1}, \mP_{\pi^*_{\hat{\gG}'_1,\hat{\gG}_2}}\mA_{\hat{\gG}_2}\mP_{\pi^*_{\hat{\gG}'_1,\hat{\gG}_2}}^{\top})$. Then we have 
\begin{align*}
\displaystyle &\E(\max(d_e(\mA_{\hat{\gG}_{\mathrm{opt}}}, \mA_{\hat{\gG}_1\rightarrow\hat{\gG}_{\mathrm{opt}}})-d_e(\mA_{\hat{\gG}_{\mathrm{opt}}}, \mA_{\hat{\gG}_{\mathrm{new}}\rightarrow\hat{\gG}_{\mathrm{opt}}}),0))\\&\geq\E(\max(\frac{d_{e,\hat{\gG}_{\mathrm{opt}}, \hat{\gG}_1}^*\cdot d_{e,\hat{\gG}_1, \hat{\gG}_2}^*}{n\cdot(n-1)-n_{se}}-\mathcal{B}(d_{e,\hat{\gG}_1, \hat{\gG}_2}^*, 0.5),0))=\mathrm{LBEI}_{\mathrm{SEPX}}, 
\end{align*}
where $\mathcal{B}(d_{e,\hat{\gG}_1, \hat{\gG}_2}^*, 0.5)$ denotes a binomial distribution with $d_{e,\hat{\gG}_1, \hat{\gG}_2}^*$ trials and success probability of 0.5, and $\mathrm{LBEI}_{\mathrm{SEPX}}$ denotes the lower bound of expected improvement of the SEP crossover.
\end{repeatthm}
\begin{proof}
	Following Assumption~\ref{asm:diff_entry}, since $n_{se}$ elements are shared by $\mA_{\hat{\gG}'_1}$ and $\mP_{\pi^*_{\hat{\gG}'_1,\hat{\gG}_2}}\mA_{\hat{\gG}_2}\mP_{\pi^*_{\hat{\gG}'_1,\hat{\gG}_2}}^{\top}$, the $d_{e,\hat{\gG}_1, \hat{\gG}_2}^*$ different elements among them are uniformly distributed within the remaining $n\cdot(n-1)-n_{se}$ entries. As a result, the chance for any one of these $n\cdot(n-1)-n_{se}$ entries to have the same values in both parents equals $1-\frac{d_{e,\hat{\gG}_1, \hat{\gG}_2}^*}{n\cdot(n-1)-n_{se}}$, then the number of entries in $\mA_{\hat{\gG}'_1}$ that is originally different from $\mA_{\hat{\gG}_{\mathrm{opt}}}$ and stay intact after crossover is $(1-\frac{d_{e,\hat{\gG}_1, \hat{\gG}_2}^*}{n\cdot(n-1)-n_{se}})\cdot d_{e,\hat{\gG}_{\mathrm{opt}}, \hat{\gG}_1}^*$. Since all the non-diagonal elements in $\mA_{\hat{\gG}'_1}$ and $\mP_{\pi^*_{\hat{\gG}'_1,\hat{\gG}_2}}\mA_{\hat{\gG}_2}\mP_{\pi^*_{\hat{\gG}'_1,\hat{\gG}_2}}^{\top}$ are either 0 or 1 (indicating whether there is an edge between two nodes), the number of remaining entries that one of the parents is correct while the other is incorrect equals $d_{e,\hat{\gG}_1, \hat{\gG}_2}^*$. Therefore, $d_e(\mA_{\hat{\gG}_{\mathrm{opt}}}, \mA_{\hat{\gG}_{\mathrm{new}}})=(1-\frac{d_{e,\hat{\gG}_1, \hat{\gG}_2}^*}{n\cdot(n-1)-n_{se}})\cdot d_{e,\hat{\gG}_{\mathrm{opt}}, \hat{\gG}_1}^*+\mathcal{B}(d_{e,\hat{\gG}_1, \hat{\gG}_2}^*, 0.5)$.
	Considering the fact that $d_e(\mA_{\hat{\gG}_{\mathrm{opt}}}, \mA_{\hat{\gG}_{\mathrm{new}}\rightarrow\hat{\gG}_{\mathrm{opt}}})\leq d_e(\mA_{\hat{\gG}_{\mathrm{opt}}}, \mA_{\hat{\gG}_{\mathrm{new}}})$, we have
	\begin{align*}
	&\E(\max(d_e(\mA_{\hat{\gG}_{\mathrm{opt}}}, \mA_{\hat{\gG}_1\rightarrow\hat{\gG}_{\mathrm{opt}}})-d_e(\mA_{\hat{\gG}_{\mathrm{opt}}}, \mA_{\hat{\gG}_{\mathrm{new}}\rightarrow\hat{\gG}_{\mathrm{opt}}}),0))\geq\E(\max(d_e(\mA_{\hat{\gG}_{\mathrm{opt}}}, \mA_{\hat{\gG}_1\rightarrow\hat{\gG}_{\mathrm{opt}}})-d_e(\mA_{\hat{\gG}_{\mathrm{opt}}}, \mA_{\hat{\gG}_{\mathrm{new}}}),0))\\&=\E(\max(d_{e,\hat{\gG}_{\mathrm{opt}}, \hat{\gG}_1}^*-((1-\frac{d_{e,\hat{\gG}_1, \hat{\gG}_2}^*}{n\cdot(n-1)-n_{se}})\cdot d_{e,\hat{\gG}_{\mathrm{opt}}, \hat{\gG}_1}^*+\mathcal{B}(d_{e,\hat{\gG}_1, \hat{\gG}_2}^*, 0.5))\\&=\E(\max(\frac{d_{e,\hat{\gG}_{\mathrm{opt}}, \hat{\gG}_1}^*\cdot d_{e,\hat{\gG}_1, \hat{\gG}_2}^*}{n\cdot(n-1)-n_{se}}-\mathcal{B}(d_{e,\hat{\gG}_1, \hat{\gG}_2}^*, 0.5),0)).
	\end{align*}
\end{proof}

\begin{repeatthm}{\ref{thm:EI_STDX}}[Expected improvement of standard crossover]
	Suppose $\mA_{\hat{\gG}_{\mathrm{new}}}=r(\mA_{\hat{\gG}'_1}, \mP_{\pi_\mathrm{rand}}\mA_{\hat{\gG}_2}\mP_{\pi_\mathrm{rand}}^{\top})$. Then we have
	\begin{align*}
	\displaystyle &\E(\max(d_e(\mA_{\hat{\gG}_{\mathrm{opt}}}, \mA_{\hat{\gG}_1\rightarrow\hat{\gG}_{\mathrm{opt}}})-d_e(\mA_{\hat{\gG}_{\mathrm{opt}}}, \mA_{\hat{\gG}_{\mathrm{new}}\rightarrow\hat{\gG}_{\mathrm{opt}}}),0))\\&\geq\E(\max(d_{e,\hat{\gG}_{\mathrm{opt}}, \hat{\gG}_1}^*-\frac{(d_{e,\hat{\gG}_{\mathrm{opt}}, \hat{\gG}_1}^*+n_1^1-n_{\mathrm{opt}}^1)\cdot n_2^1+(d_{e,\hat{\gG}_{\mathrm{opt}}, \hat{\gG}_1}^*+n_1^0-n_{\mathrm{opt}}^0)\cdot n_2^0}{2n\cdot(n-1)}-\mathcal{B}(\frac{n_1^1\cdot n_2^0+n_1^0\cdot n_2^1}{n\cdot(n-1)},0.5),0))\\&=\mathrm{LBEI}_{\mathrm{STDX}}, 
	\end{align*}
	where $n_{\mathrm{opt}}^1$, $n_1^1$ and $n_2^1$ denote the number of ones in $\mA_{\hat{\gG}_{\mathrm{opt}}}$, $\mA_{\hat{\gG}_1}$ and $\mA_{\hat{\gG}_2}$ (excluding diagonal entries), respectively, $n_{\mathrm{opt}}^0$, $n_1^0$ and $n_2^0$ denote the number of zeros in $\mA_{\hat{\gG}_{\mathrm{opt}}}$, $\mA_{\hat{\gG}_1}$ and $\mA_{\hat{\gG}_2}$ (excluding diagonal entries), respectively, and $\mathrm{LBEI}_{\mathrm{STDX}}$ denotes the lower bound of expected improvement of the standard crossover.
\end{repeatthm}
\begin{proof}
	The resulting corresponding graph of $\mP_{\pi_\mathrm{rand}}\mA_{\hat{\gG}_2}\mP_{\pi_\mathrm{rand}}^{\top}$ is equivalent to an isomorphism that randomly shuffles the order of vertices of $\hat{\gG}_2$, therefore any non-diagonal entries in $\mA_{\hat{\gG}_2}$, which represents the connection status between two vertices, has the same chance to be moved to any non-diagonal positions in $\mP_{\pi_\mathrm{rand}}\mA_{\hat{\gG}_2}\mP_{\pi_\mathrm{rand}}^{\top}$ after the vertices shuffling. The number of different non-diagonal entries between $\mA_{\hat{\gG}'_1}$ and $ \mP_{\pi_\mathrm{rand}}\mA_{\hat{\gG}_2}\mP_{\pi_\mathrm{rand}}^{\top}$ then equals to $n_1^1\cdot \frac{n_2^0}{n\cdot(n-1)}+n_1^0\cdot \frac{n_2^1}{n\cdot(n-1)}=\frac{n_1^1\cdot n_2^0+n_1^0\cdot n_2^1}{n\cdot(n-1)}$. The number of non-diagonal entries that are same in $\mA_{\hat{\gG}'_1}$ and $ \mP_{\pi_\mathrm{rand}}\mA_{\hat{\gG}_2}\mP_{\pi_\mathrm{rand}}^{\top}$ but are different from $\mA_{\hat{\gG}_{\mathrm{opt}}}$ equals $n_\mathrm{w}^1\cdot \frac{n_2^1}{n\cdot(n-1)}+n_\mathrm{w}^0\cdot \frac{n_2^0}{n\cdot(n-1)}$, where $n_\mathrm{w}^1$ and $n_\mathrm{w}^0$ denotes the number of 1s and 0s in the non-diagonal entries where $\mA_{\hat{\gG}'_1}$ and $\mA_{\hat{\gG}_{\mathrm{opt}}}$ are different (we treat these entries as "wrong" entries, so we use the subscript "w"), respectively. To calculate $n_\mathrm{w}^1$, we need to consider two cases: (1) if $n_1^1\geq n_{\mathrm{opt}}^1$, then $n_\mathrm{w}^1$ consists of two parts, namely $n_1^1-n_{\mathrm{opt}}^1$, which represents the number of extra 1s in $\mA_{\hat{\gG}'_1}$ that $\mA_{\hat{\gG}_{\mathrm{opt}}}$ can never match, and $\frac{d_{e,\hat{\gG}_{\mathrm{opt}}, \hat{\gG}_1}^*-(n_1^1-n_{\mathrm{opt}}^1)}{2}$, which is derived from the fact that in the remaining entries where $\mA_{\hat{\gG}'_1}$ and $\mA_{\hat{\gG}_{\mathrm{opt}}}$ have the same number of 1s, one misplace (compared to $\mA_{\hat{\gG}_{\mathrm{opt}}}$) of 1 in $\mA_{\hat{\gG}'_1}$ also leads to one misplace of 0 in $\mA_{\hat{\gG}'_1}$ (otherwise the number of 1s will be unequal in $\mA_{\hat{\gG}'_1}$ and $\mA_{\hat{\gG}_{\mathrm{opt}}}$), so exactly half of these $d_{e,\hat{\gG}_{\mathrm{opt}}, \hat{\gG}_1}^*-(n_1^1-n_{\mathrm{opt}}^1)$ mismatched entries will be 1 in $\mA_{\hat{\gG}'_1}$. After summing these two parts up, we obtain $n_1^1-n_{\mathrm{opt}}^1+\frac{d_{e,\hat{\gG}_{\mathrm{opt}}, \hat{\gG}_1}^*-(n_1^1-n_{\mathrm{opt}}^1)}{2}=\frac{d_{e,\hat{\gG}_{\mathrm{opt}}, \hat{\gG}_1}^*-(n_{\mathrm{opt}}^1-n_1^1)}{2}$. (2) if $n_1^1< n_{\mathrm{opt}}^1$, we only need to consider the entries that excluding those extra 1s in $\mA_{\hat{\gG}_{\mathrm{opt}}}$ that cannot be matched by $\mA_{\hat{\gG}'_1}$, that is, half of the remaining $d_{e,\hat{\gG}_{\mathrm{opt}}, \hat{\gG}_1}^*-(n_{\mathrm{opt}}^1-n_1^1)$ mismatched entries. We then have $n_\mathrm{w}^1=\frac{d_{e,\hat{\gG}_{\mathrm{opt}}, \hat{\gG}_1}^*-(n_{\mathrm{opt}}^1-n_1^1)}{2}$, which also equals to the result of the first case. Similarly, we can get $n_\mathrm{w}^0=\frac{d_{e,\hat{\gG}_{\mathrm{opt}}, \hat{\gG}_1}^*-(n_{\mathrm{opt}}^0-n_1^0)}{2}$.
	
	Given the above intermediate results, we can obtain $d_e(\mA_{\hat{\gG}_{\mathrm{opt}}}, \mA_{\hat{\gG}_{\mathrm{new}}})=n_\mathrm{w}^1\cdot \frac{n_2^1}{n\cdot(n-1)}+n_\mathrm{w}^0\cdot \frac{n_2^0}{n\cdot(n-1)}+\mathcal{B}(\frac{n_1^1\cdot n_2^0+n_1^0\cdot n_2^1}{n\cdot(n-1)},0.5)=\frac{(d_{e,\hat{\gG}_{\mathrm{opt}}, \hat{\gG}_1}^*+n_1^1-n_{\mathrm{opt}}^1)\cdot n_2^1+(d_{e,\hat{\gG}_{\mathrm{opt}}, \hat{\gG}_1}^*+n_1^0-n_{\mathrm{opt}}^0)\cdot n_2^0}{2n\cdot(n-1)}+\mathcal{B}(\frac{n_1^1\cdot n_2^0+n_1^0\cdot n_2^1}{n\cdot(n-1)},0.5)$. Since $d_e(\mA_{\hat{\gG}_{\mathrm{opt}}}, \mA_{\hat{\gG}_{\mathrm{new}}\rightarrow\hat{\gG}_{\mathrm{opt}}})\leq d_e(\mA_{\hat{\gG}_{\mathrm{opt}}}, \mA_{\hat{\gG}_{\mathrm{new}}})$, we have
	\begin{align*}
	&\E(\max(d_e(\mA_{\hat{\gG}_{\mathrm{opt}}}, \mA_{\hat{\gG}_1\rightarrow\hat{\gG}_{\mathrm{opt}}})-d_e(\mA_{\hat{\gG}_{\mathrm{opt}}}, \mA_{\hat{\gG}_{\mathrm{new}}\rightarrow\hat{\gG}_{\mathrm{opt}}}),0))\geq\E(\max(d_e(\mA_{\hat{\gG}_{\mathrm{opt}}}, \mA_{\hat{\gG}_1\rightarrow\hat{\gG}_{\mathrm{opt}}})-d_e(\mA_{\hat{\gG}_{\mathrm{opt}}}, \mA_{\hat{\gG}_{\mathrm{new}}}),0))\\&=\E(\max(d_{e,\hat{\gG}_{\mathrm{opt}}, \hat{\gG}_1}^*-\frac{(d_{e,\hat{\gG}_{\mathrm{opt}}, \hat{\gG}_1}^*+n_1^1-n_{\mathrm{opt}}^1)\cdot n_2^1+(d_{e,\hat{\gG}_{\mathrm{opt}}, \hat{\gG}_1}^*+n_1^0-n_{\mathrm{opt}}^0)\cdot n_2^0}{2n\cdot(n-1)}-\mathcal{B}(\frac{n_1^1\cdot n_2^0+n_1^0\cdot n_2^1}{n\cdot(n-1)},0.5),0)).
	\end{align*}
\end{proof}

\begin{repeatthm}{\ref{thm:EI_MUTA}}[Expected improvement of mutation]
	Suppose $\mA_{\hat{\gG}_{\mathrm{new}}}=m(\mA_{\hat{\gG}'_1})$. Then we have
	\begin{align*}
	\displaystyle &\E(\max(d_e(\mA_{\hat{\gG}_{\mathrm{opt}}}, \mA_{\hat{\gG}_1\rightarrow\hat{\gG}_{\mathrm{opt}}})-d_e(\mA_{\hat{\gG}_{\mathrm{opt}}}, \mA_{\hat{\gG}_{\mathrm{new}}\rightarrow\hat{\gG}_{\mathrm{opt}}}),0))\\&\geq\E(\max( d_{e,\hat{\gG}_{\mathrm{opt}}, \hat{\gG}_1}^*-\mathcal{B}(n\cdot(n-1)-d_{e,\hat{\gG}_{\mathrm{opt}}, \hat{\gG}_1}^*,p_m)-\mathcal{B}(d_{e,\hat{\gG}_{\mathrm{opt}}, \hat{\gG}_1}^*, 1-p_m),0))=\mathrm{LBEI}_{\mathrm{MUTA}}, 
	\end{align*}
	where $p_m$ is the mutation rate usually chosen to be $p_m=\frac{1}{n\cdot(n-1)}$, and $\mathrm{LBEI}_{\mathrm{MUTA}}$ denotes the lower bound of expected improvement of mutation.
\end{repeatthm}
\begin{proof}
	Since $\mA_{\hat{\gG}'_1}=\mP_{\pi^*_{\hat{\gG}_{\mathrm{opt}},\hat{\gG}_1}}\mA_{\hat{\gG}_1}\mP_{\pi^*_{\hat{\gG}_{\mathrm{opt}},\hat{\gG}_1}}^{\top}$, there are $d_{e,\hat{\gG}_{\mathrm{opt}}, \hat{\gG}_1}^*$ non-diagonal elements in $\mA_{\hat{\gG}'_1}$ that are different from $\mA_{\hat{\gG}_{\mathrm{opt}}}$. Because all the non-diagonal elements in $\mA_{\hat{\gG}'_1}$ are either 0 or 1, and $m(\mA_{\hat{\gG}'_1})$ has $p_m$ probability to flip each non-diagonal element of $\mA_{\hat{\gG}'_1}$, we have $d_e(\mA_{\hat{\gG}_{\mathrm{opt}}}, \mA_{\hat{\gG}_{\mathrm{new}}})=\mathcal{B}(n\cdot(n-1)-d_{e,\hat{\gG}_{\mathrm{opt}}, \hat{\gG}_1}^*,p_m)+\mathcal{B}(d_{e,\hat{\gG}_{\mathrm{opt}}, \hat{\gG}_1}^*, 1-p_m)$. Since $d_e(\mA_{\hat{\gG}_{\mathrm{opt}}}, \mA_{\hat{\gG}_{\mathrm{new}}\rightarrow\hat{\gG}_{\mathrm{opt}}})\leq d_e(\mA_{\hat{\gG}_{\mathrm{opt}}}, \mA_{\hat{\gG}_{\mathrm{new}}})$, we have
	\begin{align*}
	\displaystyle &\E(\max(d_e(\mA_{\hat{\gG}_{\mathrm{opt}}}, \mA_{\hat{\gG}_1\rightarrow\hat{\gG}_{\mathrm{opt}}})-d_e(\mA_{\hat{\gG}_{\mathrm{opt}}}, \mA_{\hat{\gG}_{\mathrm{new}}\rightarrow\hat{\gG}_{\mathrm{opt}}}),0))\geq\E(\max(d_e(\mA_{\hat{\gG}_{\mathrm{opt}}}, \mA_{\hat{\gG}_1\rightarrow\hat{\gG}_{\mathrm{opt}}})-d_e(\mA_{\hat{\gG}_{\mathrm{opt}}}, \mA_{\hat{\gG}_{\mathrm{new}}}),0))\\&=\E(\max( d_{e,\hat{\gG}_{\mathrm{opt}}, \hat{\gG}_1}^*-\mathcal{B}(n\cdot(n-1)-d_{e,\hat{\gG}_{\mathrm{opt}}, \hat{\gG}_1}^*,p_m)-\mathcal{B}(d_{e,\hat{\gG}_{\mathrm{opt}}, \hat{\gG}_1}^*, 1-p_m),0)).
	\end{align*}
\end{proof}
\begin{repeatthm}{\ref{thm:EI_extreme}}[Expected improvement of unbiased agent and oracle agent]
	Suppose $\Sigma_{i,j}p(A^{\theta*}_{i,j}\neq A^{\gG_{\mathrm{opt}}}_{i,j}|\mA^*_\theta \sim \mQ^*_\theta)=b^*_{e,\theta}$ and assume $R-b=\alpha\cdot(\Sigma_{i,j}p(A^{\theta*}_{i,j}\neq A^{\gG_{\mathrm{opt}}}_{i,j}|\mA^*_\theta \sim \mQ^*_\theta)-d^*_{e,\gG_{\mathrm{opt}},\gG_{\theta_t}})$ for $i,j\in 1,2,\cdots,n$ and $i\neq j$, where $\alpha$ is a positive scaling factor and $\gG_{\theta_t}$ is a graph sampled at time step $t$ for obtaining the empirical approximation of policy gradient. With all $z^k_{i,j}$ initialized to 0, the expected improvement after one policy update with learning rate $\eta$ is no less than 
	\begin{equation*}
	\mathrm{LBEI}_{\mathrm{RLU}}=b^*_{e,\theta}-(n_w\cdot \frac{1}{1+(\frac{1}{p_w}-1)\cdot \mathrm{e}^{-2\alpha \eta(b^*_{e,\theta}-n_w)(1-p_w)}}+(n(n-1)-n_w)\cdot \frac{1}{1+(\frac{1}{p_w}-1)\cdot \mathrm{e}^{2\alpha \eta(b^*_{e,\theta}-n_w)\cdot p_w}})
	\end{equation*}
	for unbiased agent, where $p_w=\frac{b^*_{e,\theta}}{n(n-1)}, n_w=\mathcal{B}(n(n-1), \frac{b^*_{e,\theta}}{n(n-1)})$, and no less than
	\begin{equation*}
	\mathrm{LBEI}_{\mathrm{RLO}}=b^*_{e,\theta}-(n_w\cdot \frac{1}{1+(\frac{1}{p_w}-1)\cdot \mathrm{e}^{-2\alpha \eta(b^*_{e,\theta}-n_w)(1-p_w)}}+(b^*_{e,\theta}+1-n_w)\cdot \frac{1}{1+(\frac{1}{p_w}-1)\cdot \mathrm{e}^{2\alpha \eta(b^*_{e,\theta}-n_w)\cdot p_w}})
	\end{equation*}
	for oracle agent, where $p_w=\frac{b^*_{e,\theta}}{b^*_{e,\theta}+1}, n_w=\mathcal{B}(b^*_{e,\theta}+1, \frac{b^*_{e,\theta}}{b^*_{e,\theta}+1})$.
\end{repeatthm}
\begin{proof}
Under the REINFORCE rule, the policy gradient based on one sample is $\Sigma_{i,j} \bigtriangledown_\theta \log p(A_{i,j}^{\theta})\cdot (R-b)$ for $i,j\in 1,2,\cdots,n$, and the constraint $i\neq j$ can be added to only consider edges/connections. Since only two values 0 and 1 are allowed for each entry that denotes an edge connection, they can be mapped to ``correct'' and ``wrong'' by comparing entries between $\mA_\theta^*$ and $\mA_{\gG_{\mathrm{opt}}}$: ``correct'' means $A^{\theta*}_{i,j}= A^{\gG_{\mathrm{opt}}}_{i,j}$ and ``wrong'' means $A^{\theta*}_{i,j}\neq A^{\gG_{\mathrm{opt}}}_{i,j}$. For an entry in $\mA_\theta^*\sim \mQ^*_\theta$, let $p_c$ be the probability for it to be correct and $p_w$ the probability for it to be wrong, and let $z_c$ and $z_w$ be the logits for generating $p_c$ and $p_w$, respectively. Then
	\begin{align*}
	\frac{\partial \log p_c}{\partial z_c}=&\frac{\partial}{\partial z_c}\log(\frac{\mathrm{e}^{z_c}}{\mathrm{e}^{z_c}+\mathrm{e}^{z_w}})=\frac{\partial}{\partial z_c}(z_c-\log(\mathrm{e}^{z_c}+\mathrm{e}^{z_w}))\\=&1-\frac{1}{\mathrm{e}^{z_c}+\mathrm{e}^{z_w}}\cdot(\frac{\partial}{\partial z_c}(\mathrm{e}^{z_c}+\mathrm{e}^{z_w}))= 1-\frac{\mathrm{e}^{z_c}}{\mathrm{e}^{z_c}+\mathrm{e}^{z_w}}=1-p_c.
	\end{align*}
	Similarly, $\frac{\partial \log p_c}{\partial z_w}=-p_w$, $\frac{\partial \log p_w}{\partial z_w}=1-p_w$ and $\frac{\partial \log p_w}{\partial z_c}=-p_c$. For the entries that sample correctly, the policy gradient for updating $z_c$ and $z_w$ is $\frac{\partial\log p_c}{\partial z_c}\cdot \alpha(b^*_{e,\theta}-d^*_{e,\gG_{\mathrm{opt}},\gG_{\theta_t}})=(1-p_c)\cdot \alpha(b^*_{e,\theta}-d^*_{e,\gG_{\mathrm{opt}},\gG_{\theta_t}})$ and $-p_w\cdot \alpha(b^*_{e,\theta}-d^*_{e,\gG_{\mathrm{opt}},\gG_{\theta_t}})$, respectively. For the entries that sample wrong, the policy gradient for updating $z_c$ and $z_w$ is $\frac{\partial\log p_w}{\partial z_c}\cdot \alpha(b^*_{e,\theta}-d^*_{e,\gG_{\mathrm{opt}},\gG_{\theta_t}})=-p_c\cdot \alpha(b^*_{e,\theta}-d^*_{e,\gG_{\mathrm{opt}},\gG_{\theta_t}})$ and $(1-p_w)\cdot \alpha(b^*_{e,\theta}-d^*_{e,\gG_{\mathrm{opt}},\gG_{\theta_t}})$, respectively. Since $p_c = 1-p_w$, the policy gradients are always opposite but the same magnitude for $z_c$ and $z_w$. Because all the $z^k_{i,j}$ are initialized to 0, thus $z_c=-z_w$ for each entry. Given $p_w=\frac{\mathrm{e}^{z_w}}{\mathrm{e}^{z_c}+\mathrm{e}^{z_w}}$ and $z_c=-z_w$, then $\mathrm{e}^{z_c}=\sqrt{\frac{1-p_w}{p_w}}$ and $\mathrm{e}^{z_w}=\sqrt{\frac{p_w}{1-p_w}}$. Therefore, for the entries that sample correctly, $p_c$ is updated as 
	\begin{equation*}
	p_c^\prime = \frac{\mathrm{e}^{z_c}\cdot \mathrm{e}^{(1-p_c)\cdot\alpha(b^*_{e,\theta}-d^*_{e,\gG_{\mathrm{opt}},\gG_{\theta_t}})\cdot \eta}}{\mathrm{e}^{z_c}\cdot \mathrm{e}^{(1-p_c)\cdot\alpha(b^*_{e,\theta}-d^*_{e,\gG_{\mathrm{opt}},\gG_{\theta_t}})\cdot \eta}+\mathrm{e}^{z_w}\cdot \mathrm{e}^{-p_w\cdot\alpha(b^*_{e,\theta}-d^*_{e,\gG_{\mathrm{opt}},\gG_{\theta_t}})\cdot \eta}}.
	\end{equation*}
	For the entries that sample wrong, $p_w$ is updated as 
	\begin{equation*}
	p_w^\prime = \frac{\mathrm{e}^{z_w}\cdot \mathrm{e}^{(1-p_w)\cdot\alpha(b^*_{e,\theta}-d^*_{e,\gG_{\mathrm{opt}},\gG_{\theta_t}})\cdot \eta}}{\mathrm{e}^{z_w}\cdot \mathrm{e}^{(1-p_w)\cdot\alpha(b^*_{e,\theta}-d^*_{e,\gG_{\mathrm{opt}},\gG_{\theta_t}})\cdot \eta}+\mathrm{e}^{z_c}\cdot \mathrm{e}^{-p_c\cdot\alpha(b^*_{e,\theta}-d^*_{e,\gG_{\mathrm{opt}},\gG_{\theta_t}})\cdot \eta}}.
	\end{equation*}
	
	For the unbiased agent, in order to have $\Sigma_{i,j}p(A^{\theta*}_{i,j}\neq A^{\gG_{\mathrm{opt}}}_{i,j}|\mA^*_\theta \sim \mQ^*_\theta)=b^*_{e,\theta}$, every entry should have the same $p_w=\frac{b^*_{e,\theta}}{n(n-1)}$. The number of different non-diagonal entries between $\mA_\theta^*$ and $\mA_{\gG_{\mathrm{opt}}}$ is then $n_w = d^*_{e,\gG_{\mathrm{opt}},\gG_{\theta_t}}=\mathcal{B}(n(n-1), \frac{b^*_{e,\theta}}{n(n-1)})$. Since there are $n_w$ entries sampled wrong and $n(n-1)-n_w$ sampled correctly, the expected number of different non-diagonal entries between $\mA^*_\theta \sim \mQ^*_\theta$ and $\mA_{\gG_{\mathrm{opt}}}$ after updating every $p_c$ and $p_w$ becomes $n_w\cdot p_w^\prime+(n(n-1)-n_w)\cdot (1-p_c^\prime)$. Considering the possibility of further permuting the updated entries in $\mQ^*_{\theta_t}$ to obtain $\mQ^*_{\theta_{t+1}}$, and supposing that $\mQ^*_{\theta_t}=\mQ^*_\theta$, then $n_w\cdot p_w^\prime+(n(n-1)-n_w)\cdot (1-p_c^\prime)\geq  \Sigma_{i,j}p(A^{\theta_{t+1}*}_{i,j}\neq A^{\gG_{\mathrm{opt}}}_{i,j}|\mA^*_{\theta_{t+1}} \sim \mQ^*_{\theta_{t+1}})$. As a result, the expected improvement is 
	\begin{align*}
	&\Sigma_{i,j}p(A^{\theta_t*}_{i,j}\neq A^{\gG_{\mathrm{opt}}}_{i,j}|\mA^*_{\theta_t} \sim \mQ^*_{\theta_t}) - \Sigma_{i,j}p(A^{\theta_{t+1}*}_{i,j}\neq A^{\gG_{\mathrm{opt}}}_{i,j}|\mA^*_{\theta_{t+1}} \sim \mQ^*_{\theta_{t+1}})\\&\geq b^*_{e,\theta}-(n_w\cdot p_w^\prime+(n(n-1)-n_w)\cdot (1-p_c^\prime))\\&=b^*_{e,\theta}-(n_w\cdot \frac{1}{1+(\frac{1}{p_w}-1)\cdot \mathrm{e}^{-2\alpha \eta(b^*_{e,\theta}-n_w)(1-p_w)}}+(n(n-1)-n_w)\cdot \frac{1}{1+(\frac{1}{p_w}-1)\cdot \mathrm{e}^{2\alpha \eta(b^*_{e,\theta}-n_w)\cdot p_w}}), 
	\end{align*}
	where $p_w=\frac{b^*_{e,\theta}}{n(n-1)}, n_w=\mathcal{B}(n(n-1), \frac{b^*_{e,\theta}}{n(n-1)})$.
	
	For the oracle agent, if $b^*_{e,\theta}$ is an integer and there are exactly $n(n-1)-b^*_{e,\theta}$ entries that have $p_c=1.0$, the remaining $b^*_{e,\theta}$ entries can only have $p_c=0$, due to the pre-condition that $\Sigma_{i,j}p(A^{\theta*}_{i,j}\neq A^{\gG_{\mathrm{opt}}}_{i,j}|\mA^*_\theta \sim \mQ^*_\theta)=b^*_{e,\theta}$. This setup results in a stuck agent that can no longer explore and update itself, and it does not satisfy Definition~\ref{def:extreme_agent}. Therefore, the maximum number of entries with $p_c=1.0$ can only be $n(n-1)-b^*_{e,\theta}-1$ when $b^*_{e,\theta}$ is an integer, and $n(n-1)-\lceil b^*_{e,\theta} \rceil$ otherwise. Since $n(n-1)-\lceil b^*_{e,\theta} \rceil = n(n-1)-\lfloor b^*_{e,\theta}\rfloor-1$ always holds true and $n(n-1)-b^*_{e,\theta}-1=n(n-1)-\lfloor b^*_{e,\theta}\rfloor-1$ is true when $b^*_{e,\theta}$ is an integer, $n(n-1)-\lfloor b^*_{e,\theta}\rfloor-1$ can always be used to describe the maximum number of entries with $p_c=1.0$. The remaining $\lfloor b^*_{e,\theta} \rfloor+1$ entries would have $p_w=\frac{b^*_{e,\theta}}{\lfloor b^*_{e,\theta}\rfloor+1}$, within which $n_w$ entries are sampled wrong, and $\lfloor b^*_{e,\theta}\rfloor+1-n_w$ entries are sampled correctly, with $n_w=\mathcal{B}(\lfloor b^*_{e,\theta}\rfloor+1,\frac{b^*_{e,\theta}}{\lfloor b^*_{e,\theta}\rfloor+1})$. The expected number of different non-diagonal entries between $\mA^*_\theta \sim \mQ^*_\theta$ and $\mA_{\gG_{\mathrm{opt}}}$ after updating every $p_c$ and $p_w$ thus becomes $n_w\cdot p_w^\prime+(\lfloor b^*_{e,\theta}\rfloor+1-n_w)\cdot (1-p_c^\prime)$. Similar to the analysis of the unbiased agent, we have
	\begin{align*}
	&\Sigma_{i,j}p(A^{\theta_t*}_{i,j}\neq A^{\gG_{\mathrm{opt}}}_{i,j}|\mA^*_{\theta_t} \sim \mQ^*_{\theta_t}) - \Sigma_{i,j}p(A^{\theta_{t+1}*}_{i,j}\neq A^{\gG_{\mathrm{opt}}}_{i,j}|\mA^*_{\theta_{t+1}} \sim \mQ^*_{\theta_{t+1}})\\&\geq b^*_{e,\theta}-(n_w\cdot \frac{1}{1+(\frac{1}{p_w}-1)\cdot \mathrm{e}^{-2\alpha \eta(b^*_{e,\theta}-n_w)(1-p_w)}}+(\lfloor b^*_{e,\theta}\rfloor+1-n_w)\cdot \frac{1}{1+(\frac{1}{p_w}-1)\cdot \mathrm{e}^{2\alpha \eta(b^*_{e,\theta}-n_w)\cdot p_w}})
	\end{align*}
	for the oracle agent, where $p_w=\frac{b^*_{e,\theta}}{\lfloor b^*_{e,\theta}\rfloor+1}, n_w=\mathcal{B}(\lfloor b^*_{e,\theta}\rfloor+1, \frac{b^*_{e,\theta}}{\lfloor b^*_{e,\theta}\rfloor+1})$.
\end{proof}

\begin{repeatcoro}{\ref{coro:EI_SEPX_error}}[Effect of GED errors on $\mathrm{LBEI}_{\mathrm{SEPX}}$]
With error ratio $\epsilon$ in calculating $d_{e,\hat{\gG}_1, \hat{\gG}_2}^*$, $\mathrm{LBEI}_{\mathrm{SEPX}}$ becomes
\begin{align*}
\mathrm{LBEI}_{\mathrm{SEPX}}^\epsilon=& \ (d_{e,\hat{\gG}_1, \hat{\gG}_2}^\epsilon-\lfloor d_{e,\hat{\gG}_1, \hat{\gG}_2}^\epsilon\rfloor)\cdot\E(\max(\frac{d_{e,\hat{\gG}_{\mathrm{opt}}, \hat{\gG}_1}^*\cdot (\lfloor d_{e,\hat{\gG}_1, \hat{\gG}_2}^\epsilon\rfloor+1)}{n\cdot(n-1)-\lfloor n_{se}^\epsilon\rfloor}-\mathcal{B}(\lfloor d_{e,\hat{\gG}_1, \hat{\gG}_2}^\epsilon\rfloor+1, 0.5),0))\\
&+(\lfloor d_{e,\hat{\gG}_1, \hat{\gG}_2}^\epsilon\rfloor+1-d_{e,\hat{\gG}_1, \hat{\gG}_2}^\epsilon)\cdot\E(\max(\frac{d_{e,\hat{\gG}_{\mathrm{opt}}, \hat{\gG}_1}^*\cdot \lfloor d_{e,\hat{\gG}_1, \hat{\gG}_2}^\epsilon\rfloor}{n\cdot(n-1)-\lceil n_{se}^\epsilon\rceil}-\mathcal{B}(\lfloor d_{e,\hat{\gG}_1, \hat{\gG}_2}^\epsilon \rfloor, 0.5),0)),
\end{align*}
where $n_{se}^\epsilon=\max(n\cdot(n-1)-d_{e,\hat{\gG}_{\mathrm{opt}}, \hat{\gG}_1}^*-d_{e,\hat{\gG}_1, \hat{\gG}_2}^\epsilon, 0).$
\end{repeatcoro}
\begin{proof}
Given the assumption that the resulting GED can only be either $\lfloor d_{e,\hat{\gG}_1, \hat{\gG}_2}^\epsilon\rfloor$ or $\lfloor d_{e,\hat{\gG}_1, \hat{\gG}_2}^\epsilon\rfloor+1$ following a Bernoulli distribution, and the expectation is $d_{e,\hat{\gG}_1, \hat{\gG}_2}^\epsilon=d_{e,\hat{\gG}_1, \hat{\gG}_2}^*\cdot(1+\epsilon)$, the probabilities for getting the two results can be derived as $p(\lfloor d_{e,\hat{\gG}_1, \hat{\gG}_2}^\epsilon\rfloor)=\lfloor d_{e,\hat{\gG}_1, \hat{\gG}_2}^\epsilon\rfloor+1-d_{e,\hat{\gG}_1, \hat{\gG}_2}^\epsilon$ and $p(\lfloor d_{e,\hat{\gG}_1, \hat{\gG}_2}^\epsilon\rfloor+1)=d_{e,\hat{\gG}_1, \hat{\gG}_2}^\epsilon-\lfloor d_{e,\hat{\gG}_1, \hat{\gG}_2}^\epsilon\rfloor$. In case of getting $\lfloor d_{e,\hat{\gG}_1, \hat{\gG}_2}^\epsilon\rfloor$ as the GED calculation result, $\mathrm{LBEI}_{\mathrm{SEPX}}$ becomes
\begin{equation*}
\mathrm{LBEI}_{\mathrm{SEPX}}|\lfloor d_{e,\hat{\gG}_1, \hat{\gG}_2}^\epsilon\rfloor=\E(\max(\frac{d_{e,\hat{\gG}_{\mathrm{opt}}, \hat{\gG}_1}^*\cdot \lfloor d_{e,\hat{\gG}_1, \hat{\gG}_2}^\epsilon\rfloor}{n\cdot(n-1)-\lceil n_{se}^\epsilon\rceil}-\mathcal{B}(\lfloor d_{e,\hat{\gG}_1, \hat{\gG}_2}^\epsilon \rfloor, 0.5),0)),
\end{equation*} where $\lceil n_{se}^\epsilon\rceil=\max(n\cdot(n-1)-d_{e,\hat{\gG}_{\mathrm{opt}}, \hat{\gG}_1}^*-\lfloor d_{e,\hat{\gG}_1, \hat{\gG}_2}^\epsilon\rfloor, 0).$ Similarly,
\begin{equation*}
\mathrm{LBEI}_{\mathrm{SEPX}}|(\lfloor d_{e,\hat{\gG}_1, \hat{\gG}_2}^\epsilon\rfloor+1)=\E(\max(\frac{d_{e,\hat{\gG}_{\mathrm{opt}}, \hat{\gG}_1}^*\cdot (\lfloor d_{e,\hat{\gG}_1, \hat{\gG}_2}^\epsilon\rfloor+1)}{n\cdot(n-1)-\lfloor n_{se}^\epsilon\rfloor}-\mathcal{B}(\lfloor d_{e,\hat{\gG}_1, \hat{\gG}_2}^\epsilon\rfloor+1, 0.5),0)),
\end{equation*} if $\lfloor d_{e,\hat{\gG}_1, \hat{\gG}_2}^\epsilon\rfloor+1$ is not an integer. If $\lfloor d_{e,\hat{\gG}_1, \hat{\gG}_2}^\epsilon\rfloor+1$ is an integer, then $p(\lfloor d_{e,\hat{\gG}_1, \hat{\gG}_2}^\epsilon\rfloor+1)=d_{e,\hat{\gG}_1, \hat{\gG}_2}^\epsilon-\lfloor d_{e,\hat{\gG}_1, \hat{\gG}_2}^\epsilon\rfloor=0$, so this case does not need to be considered.
By combining the two cases,
\begin{align*}
\mathrm{LBEI}_{\mathrm{SEPX}}^\epsilon=& \ p(\lfloor d_{e,\hat{\gG}_1, \hat{\gG}_2}^\epsilon\rfloor)*\mathrm{LBEI}_{\mathrm{SEPX}}|\lfloor d_{e,\hat{\gG}_1, \hat{\gG}_2}^\epsilon\rfloor+p(\lfloor d_{e,\hat{\gG}_1, \hat{\gG}_2}^\epsilon\rfloor+1)\cdot\mathrm{LBEI}_{\mathrm{SEPX}}|(\lfloor d_{e,\hat{\gG}_1, \hat{\gG}_2}^\epsilon\rfloor+1)\\
=& \ (d_{e,\hat{\gG}_1, \hat{\gG}_2}^\epsilon-\lfloor d_{e,\hat{\gG}_1, \hat{\gG}_2}^\epsilon\rfloor)\cdot\E(\max(\frac{d_{e,\hat{\gG}_{\mathrm{opt}}, \hat{\gG}_1}^*\cdot (\lfloor d_{e,\hat{\gG}_1, \hat{\gG}_2}^\epsilon\rfloor+1)}{n\cdot(n-1)-\lfloor n_{se}^\epsilon\rfloor}-\mathcal{B}(\lfloor d_{e,\hat{\gG}_1, \hat{\gG}_2}^\epsilon\rfloor+1, 0.5),0))\\
&+(\lfloor d_{e,\hat{\gG}_1, \hat{\gG}_2}^\epsilon\rfloor+1-d_{e,\hat{\gG}_1, \hat{\gG}_2}^\epsilon)\cdot\E(\max(\frac{d_{e,\hat{\gG}_{\mathrm{opt}}, \hat{\gG}_1}^*\cdot \lfloor d_{e,\hat{\gG}_1, \hat{\gG}_2}^\epsilon\rfloor}{n\cdot(n-1)-\lceil n_{se}^\epsilon\rceil}-\mathcal{B}(\lfloor d_{e,\hat{\gG}_1, \hat{\gG}_2}^\epsilon \rfloor, 0.5),0)).
\end{align*}
\end{proof}

\subsection{List of Mathematical Symbols}\label{subsec:add_notation}
This section provides a list of all mathematical symbols used in this paper.

\bgroup
\def\arraystretch{1.5}
\begin{longtable}{p{1in}p{3.5in}}
	$\displaystyle \sV$ & A set of vertices\\
	$\displaystyle v_i$ & Vertex(node) with index $i$\\
	$\displaystyle \mathbb{E}$ & A set of directed edges\\
	$\displaystyle e_{i,j}$ & A directed edge from vertex $i$ to vertex $j$\\
	$\displaystyle \gG$ & A directed graph\\
	$\displaystyle |\gG|$ & The order of a directed graph $\gG$, which equals the number of its vertices\\
	$\displaystyle \gamma_v$ & A function that assigns an attribute (e.g., an integer) to each vertex of a directed graph\\
	$\displaystyle \gamma_e$ & A function that assigns an attribute (e.g., an integer) to each edge of a directed graph\\
	$\displaystyle \delta: \gG \rightarrow \gG'$ & A function that applies an elementary graph edit to transform $\gG$ to $\gG'$\\
	$\displaystyle \overline{\delta}=\delta_1, \delta_2, \ldots, \delta_d$ & A sequence of graph edit operations, and $d$ is the length of the resulting edit path\\
	$\displaystyle \mathrm{GED}(\gG_1, \gG_2)$ & The graph edit distance (GED) between $\gG_1$ and $\gG_2$\\
	$\displaystyle \Delta(\gG_1, \gG_2)$ & The set of all edit paths that transform $\gG_1$ to an isomorphism of $\gG_2$ (including $\gG_2$ itself)\\
	$\displaystyle c(\delta_i)$ & The cost of edit $\delta_i$ (in this work, all types of edit operations are defined to have the same cost of $1$)\\
	$\displaystyle \overline{\delta}_{\gG_1, \gG_2}^*$ & The edit path that minimizes the total edit cost to transform $\gG_1$ to an isomorphism of $\gG_2$ (including $\gG_2$ itself)\\
	$\displaystyle d_{\gG_1, \gG_2}^*$ & The length of the shortest edit path that transforms $\gG_1$ to an isomorphism of $\gG_2$ (including $\gG_2$ itself)\\
	$\displaystyle \pi$ & A permutation of multiple elements/indices\\
	$\displaystyle \lceil \cdot \rceil$ & The ceiling function\\
	$\displaystyle \lfloor\cdot\rfloor$ & The floor function\\
	$\displaystyle \mA_\gG$ & The attributed adjacency matrix (AA-matrix) for graph $\gG$\\
	$\displaystyle A^\gG_{i,j}$ & The entry in $i$th row and $j$th column of matrix $\mA_\gG$\\
	$\displaystyle \mI_n$ & Identity matrix with $n$ rows and $n$ columns\\
	$\displaystyle \mP_{\pi}$ & A permutation matrix based on permutation $\pi$\\
	$\displaystyle P^{\pi}_{i,j}$ & The entry in $i$th row and $j$th column of matrix $\mP_{\pi}$\\
	$\displaystyle d(\mA,\mB)$ & A function that returns the number of different entries between two matrices ($\mA$ and $\mB$ here) with same shape\\
	$\displaystyle \hat{\gG}$ & The extended graph of $\gG$ after adding null vertices\\
	$\displaystyle S_n$ & The set of all permutations of $\{1, 2, 3, \ldots, n\}$\\
	$\displaystyle \1_\mathrm{condition}$ & A function that returns 1 if the condition is true, 0 otherwise\\
	$\displaystyle \pi^*_{\hat{\gG}_1,\hat{\gG}_2}$ & The permutation that minimizes $d(\mA_{\hat{\gG}_1}, \mP_{\pi}\mA_{\hat{\gG}_2}\mP_{\pi}^{\top})$\\
	$\displaystyle \mA_{\hat{\gG}_2\rightarrow\hat{\gG}_1}$ & The permuted AA-matrix of $\hat{\gG}_2$ using permutation matrix $\mP_{\pi^*_{\hat{\gG}_1,\hat{\gG}_2}}$\\
	$\displaystyle r(\mA, \mB)$ & A function that returns a matrix inheriting each entry from $\mA$ or $\mB$ with probability 0.5 (that is, if $\mC=r(\mA, \mB)$, then $p(C_{i,j}=A_{i,j})=p(C_{i,j}=B_{i,j})=0.5$ for any valid $i, j$)\\
	$\displaystyle m(\mA)$ & A function that alters each element of $\mA$ with an equal probability\\
	$\displaystyle p_m$ & Mutation probability\\
	$\displaystyle f(\gG)$ & The fitness/reward of $\gG$\\
	$\displaystyle \gG_{\mathrm{opt}}$ & The global optimal graph\\
	$\displaystyle \max(\cdot,\cdot,\ldots,\cdot)$ & A function that returns the maximum value among all inputs\\
	$\displaystyle d_v(\mA, \mB)$ & A function that returns the number of different diagonal entries between two matrices ($\mA$ and $\mB$ here) with same shape\\
	$\displaystyle d_e(\mA, \mB)$ & A function that returns the number of different non-diagonal entries between two matrices ($\mA$ and $\mB$ here) with same shape\\
	$\displaystyle d_{e, \gG_1, \gG_2}^*$ & A simplified symbol to denote $d_e(\mA_{\gG_1}, \mA_{\gG_2\rightarrow\gG_1})$\\
	$\displaystyle n_s$ & Number of common entries among multiple matrices\\
	$\displaystyle n_{se}$ & Number of common non-diagonal entries among multiple matrices\\
	$\displaystyle \mathcal{B}(n, p)$ & The number of successful trials after sampling from a binomial distribution with $n$ trials and success probability of $p$\\
	$\displaystyle \mathrm{LBEI}_{\mathrm{SEPX}}$ & The lower bound of expected improvement of the SEP crossover\\
	$\displaystyle \mathrm{LBEI}_{\mathrm{STDX}}$ & The lower bound of expected improvement of the standard crossover\\
	$\displaystyle \mathrm{LBEI}_{\mathrm{MUTA}}$ & The lower bound of expected improvement of mutation\\
	$\displaystyle \mathrm{LBEI}_{\mathrm{RLU}}$ & The lower bound of expected improvement of the unbiased agent\\
	$\displaystyle \mathrm{LBEI}_{\mathrm{RLO}}$ & The lower bound of expected improvement of the oracle agent\\
	$\displaystyle n_{\mathrm{opt}}^1, n_1^1\ \mathrm{and}\ n_2^1$ & The number of ones in $\mA_{\hat{\gG}_{\mathrm{opt}}}$, $\mA_{\hat{\gG}_1}$ and $\mA_{\hat{\gG}_2}$, excluding diagonal entries\\
	$\displaystyle n_{\mathrm{opt}}^0, n_1^0\ \mathrm{and}\ n_2^0$ & The number of zeros in $\mA_{\hat{\gG}_{\mathrm{opt}}}$, $\mA_{\hat{\gG}_1}$ and $\mA_{\hat{\gG}_2}$, excluding diagonal entries\\
	$\displaystyle \mQ_\theta$ & A matrix in which each entry $Q_{i,j}^{\theta}$ defines a separate categorical distribution\\
	$\displaystyle \theta$ & The parameter set that contains the logits for defining the categorical distributions in $\mQ_\theta$\\
	$\displaystyle z_{i,j}^k$ & The logits used to defining a categorical distribution through softmax functions, where $k$ denotes the class label\\
	$\displaystyle R$ & The reward for the currently sampled architecture in a RL run\\
	$\displaystyle b$ & A baseline reward to reduce the variance of gradient estimate\\
	$\displaystyle \mQ^*_{\theta}$ & The optimal permutation of $\mQ_{\theta}$ as defined in Lemma~\ref{lem:lb_expected_GED} in Appendix~\ref{subsec:app_proof}\\
	$\displaystyle t$ & The current time step\\
	$\displaystyle \theta_t$ & Policy parameter at time step $t$\\
	$\displaystyle \alpha$ & A positive scaling factor\\
	$\displaystyle \eta$ & Learning rate\\
	$\displaystyle \epsilon$ & Error ratio\\
	$\displaystyle d_{e,\gG_1, \gG_2}^\epsilon$ & The resulting $d_{e, \gG_1, \gG_2}^*$ with error ratio $\epsilon$ in GED calculation\\
	$\displaystyle \mathrm{LBEI}_{\mathrm{SEPX}}^\epsilon$ & The resulting $\mathrm{LBEI}_{\mathrm{SEPX}}$ with error ratio $\epsilon$ in GED calculation\\
	$\displaystyle n_{se}^\epsilon$ & The resulting $n_{se}$ with error ratio $\epsilon$ in GED calculation\\
\end{longtable}
\egroup
\vspace{0.25cm}

\subsection{Example for Demonstrating the Permutation Problem and the SEP Crossover Solution}\label{subsec:add_sep_demo}
Figure~\ref{fig:demo_permutation} provides a visual example of the permutation problem and how the SEP crossover solves it.

\begin{figure*}[h]
	\centering
	\includegraphics[width=0.95\linewidth]{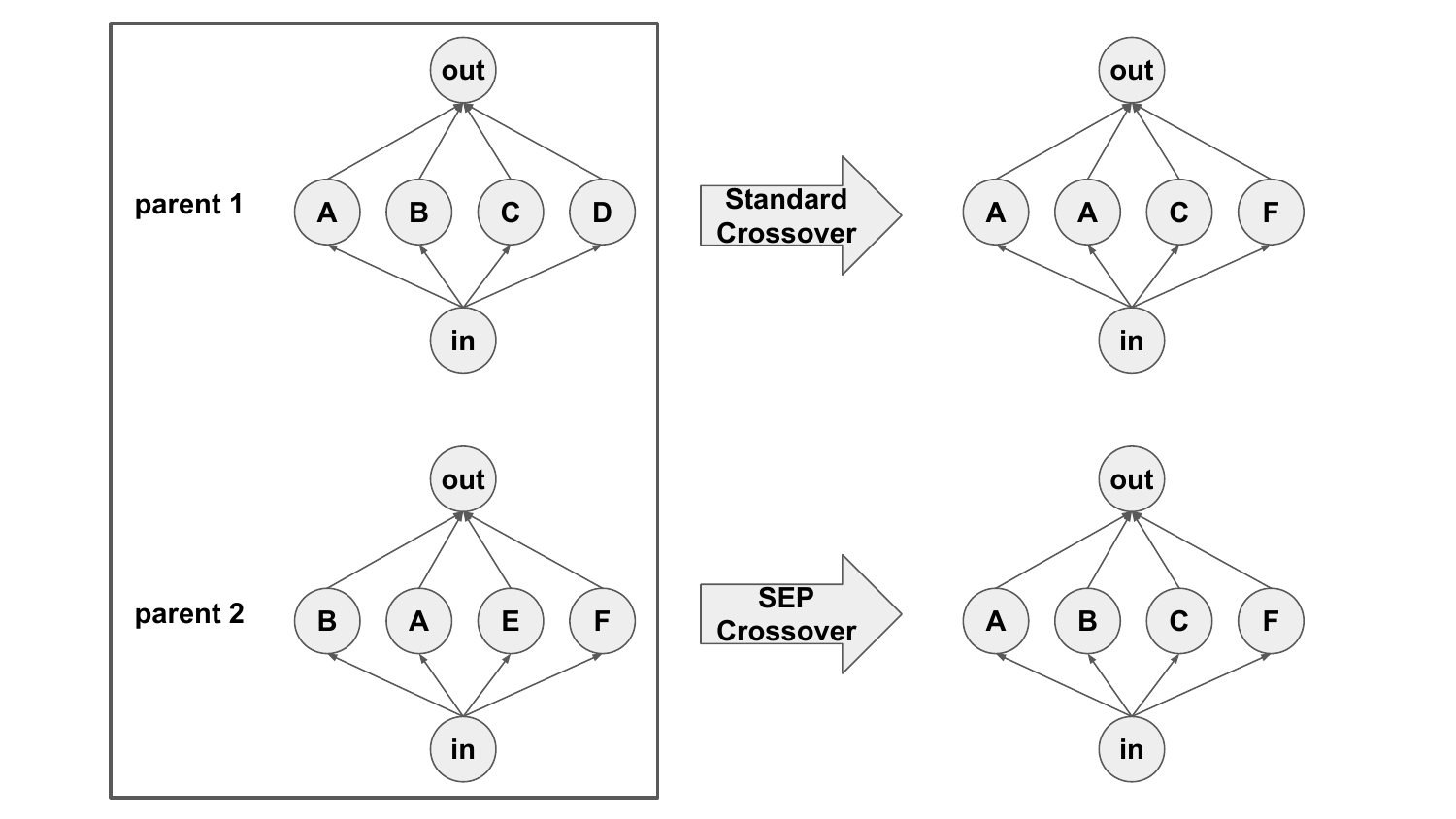}
	\caption{\textbf{The permutation problem and the SEP crossover solution.} The two parent architectures share vertices A and B. Although these two vertices appear in a different order, together they implement the same function, and this function should not be disrupted during crossover. However, standard crossover cannot identify the subgraph isomorphism, and it loses this substructure. In contrast, the shortest edit path calculation recognizes the isomorphism, and as a result, the SEP crossover preserves this substructure. Thus, the SEP crossover only explores the parts that are functionally inconsistent between the two parents.
		\label{fig:demo_permutation}
	}
\end{figure*}

\subsection{Experimental Setup Details}\label{subsec:Exp_Setup}
For experiments in Section~\ref{subsec:applicability}, all the RE-based variants used a population size of 100 and tournament selection with size 10. For NAS-bench-101, GED to the global optimal architecture was used as the fitness, and 50 independent runs were performed, each with a maximum number of evaluations of $10^3$. The allowed mutation operations were the same as in the original NAS-bench-101 example code (\href{https://github.com/google-research/nasbench}{https://github.com/google-research/nasbench}). For RL, the same implementation as in \href{https://github.com/automl/nas_benchmarks}{https://github.com/automl/nas\_benchmarks} was used. For experiments in Section~\ref{subsec:noisy}, the learning rate was 0.5, as recommended by \citet{Ying19}.

For NAS-bench-NLP, GED to the GRU architecture was used as fitness, and 50 independent runs were performed, each with a maximum number of evaluations of $10^3$. The mutation operation was the same as in \href{https://github.com/automl/NASLib}{https://github.com/automl/NASLib}. In both benchmarks, for each crossover operation, the offspring was evaluated only if it was a valid architecture in the benchmark space and different from both parents. The maximum number of trials was 50, i.e.\ the current crossover was skipped after reaching this limit.

For experiments on NAS-bench-101 in Section~\ref{subsec:noise-free}, the experimental setups was the same as in Section~\ref{subsec:applicability} except the maximum number of evaluations was $2\times10^3$.

For experiments in Section~\ref{subsec:noisy}, the setup for the NAS-bench-101 experiments was the same as in Section~\ref{subsec:applicability}, except validation accuracy was used as the fitness during evolution and test accuracy as the final performance of each architecture. The maximum number of evaluations was $10^4$. In the experiments on NAS-bench-301, all the RE-based variants had a population size of 100 and tournament size of 10, and 30 independent runs were performed, each with a maximum number of evaluations of $2\times10^3$. The mutation operation followed the standard strategy in \href{https://github.com/automl/NASLib}{https://github.com/automl/NASLib}.

For experiments regarding BO methods (Appendix~\ref{subsec:add_bo}), the same setup as in \href{https://github.com/automl/nas_benchmarks}{https://github.com/automl/nas\_benchmarks} is used.

For experiments regarding path encoding (Appendix~\ref{subsec:add_path}), the default setup without cutoff as in \href{https://github.com/naszilla/naszilla}{https://github.com/naszilla/naszilla} is used.

\subsection{Additional Figures for Section~\ref{subsec:theory_comp}}\label{subsec:add_theo_comp}

This section includes the rest of the comparisons between the SEP crossover, standard crossover, mutation, and the two RL variants. Note that the color scales differ between figures to make the conclusions more clear.

Figure~\ref{fig:EI_nlp} compares $\mathrm{LBEI}_{\mathrm{SEPX}}$ vs.\ $\mathrm{LBEI}_{\mathrm{MUTA}}$ and $\mathrm{LBEI}_{\mathrm{STDX}}$ vs.\ $\mathrm{LBEI}_{\mathrm{MUTA}}$ for different combinations of $d_{e,\hat{\gG}_{\mathrm{opt}}, \hat{\gG}_1}^*$ and $d_{e,\hat{\gG}_1, \hat{\gG}_2}^*$ in NAS-bench-NLP \citep{Klyuchnikov22}. The standard setup was used: $n=12$, $n_{\mathrm{opt}}^1=14$, $n_1^1=11$, and $n_2^1=11$. Although the standard crossover is slightly worse than mutation in most cases, the SEP crossover has a considerable theoretical advantage.
\begin{figure}
	\centering
	\includegraphics[width=0.32\linewidth]{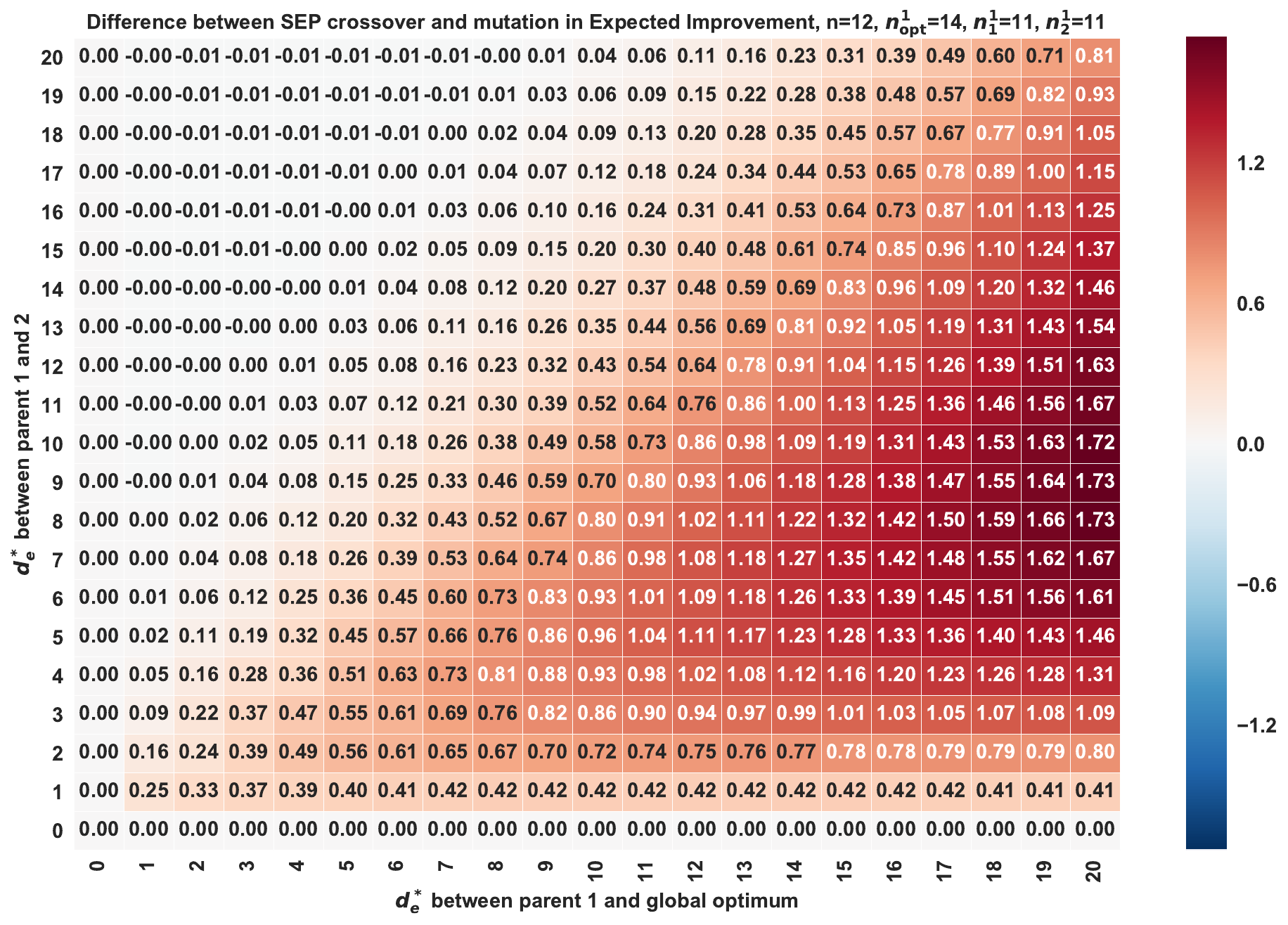}
	\includegraphics[width=0.32\linewidth]{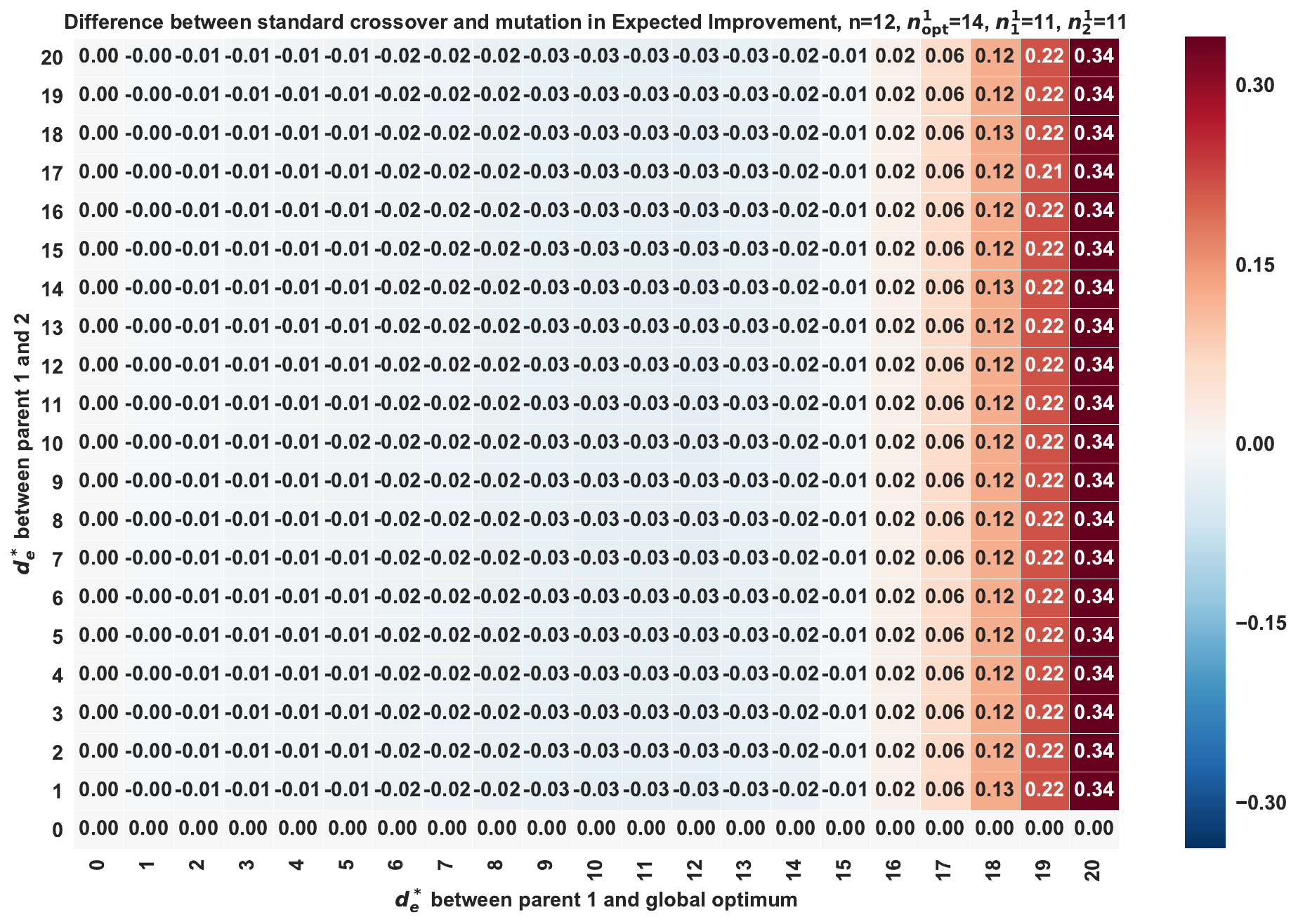}
	\caption{\textbf{Comparison of expected improvement in NAS-bench-NLP.} (Left) Differences between $\mathrm{LBEI}_{\mathrm{SEPX}}$ and $\mathrm{LBEI}_{\mathrm{MUTA}}$ under different $d_{e,\hat{\gG}_1, \hat{\gG}_2}^*$ ($y$-axis) and $d_{e,\hat{\gG}_{\mathrm{opt}}, \hat{\gG}_1}^*$ ($x$-axis) combinations. $\mathrm{LBEI}_{\mathrm{SEPX}}$ is larger than $\mathrm{LBEI}_{\mathrm{MUTA}}$ in most situations. (Right) Differences between $\mathrm{LBEI}_{\mathrm{STDX}}$ and $\mathrm{LBEI}_{\mathrm{MUTA}}$ under different $d_{e,\hat{\gG}_1, \hat{\gG}_2}^*$ ($y$-axis) and $d_{e,\hat{\gG}_{\mathrm{opt}}, \hat{\gG}_1}^*$ ($x$-axis) combinations. $\mathrm{LBEI}_{\mathrm{STDX}}$ is slightly smaller than $\mathrm{LBEI}_{\mathrm{MUTA}}$ in most situations. These two observations lead to the same conclusion for NAS-bench-NLP as for NAS-bench-101 in Figure~\ref{fig:EI_101}: Although the standard crossover has a slightly worse expected improvement than mutation under most circumstances, the SEP crossover has a considerable theoretical advantage.
		\label{fig:EI_nlp}
	}
\end{figure}

Figure~\ref{fig:EI_stdx_mut} compares $\mathrm{LBEI}_{\mathrm{STDX}}$ vs. $\mathrm{LBEI}_{\mathrm{MUTA}}$ under different $d_{e,\hat{\gG}_1, \hat{\gG}_2}^*$ and $d_{e,\hat{\gG}_{\mathrm{opt}}, \hat{\gG}_1}^*$ combinations. $\mathrm{LBEI}_{\mathrm{STDX}}$ is smaller than $\mathrm{LBEI}_{\mathrm{MUTA}}$ in most cases.
\begin{figure}
	\centering
	\includegraphics[width=0.32\linewidth]{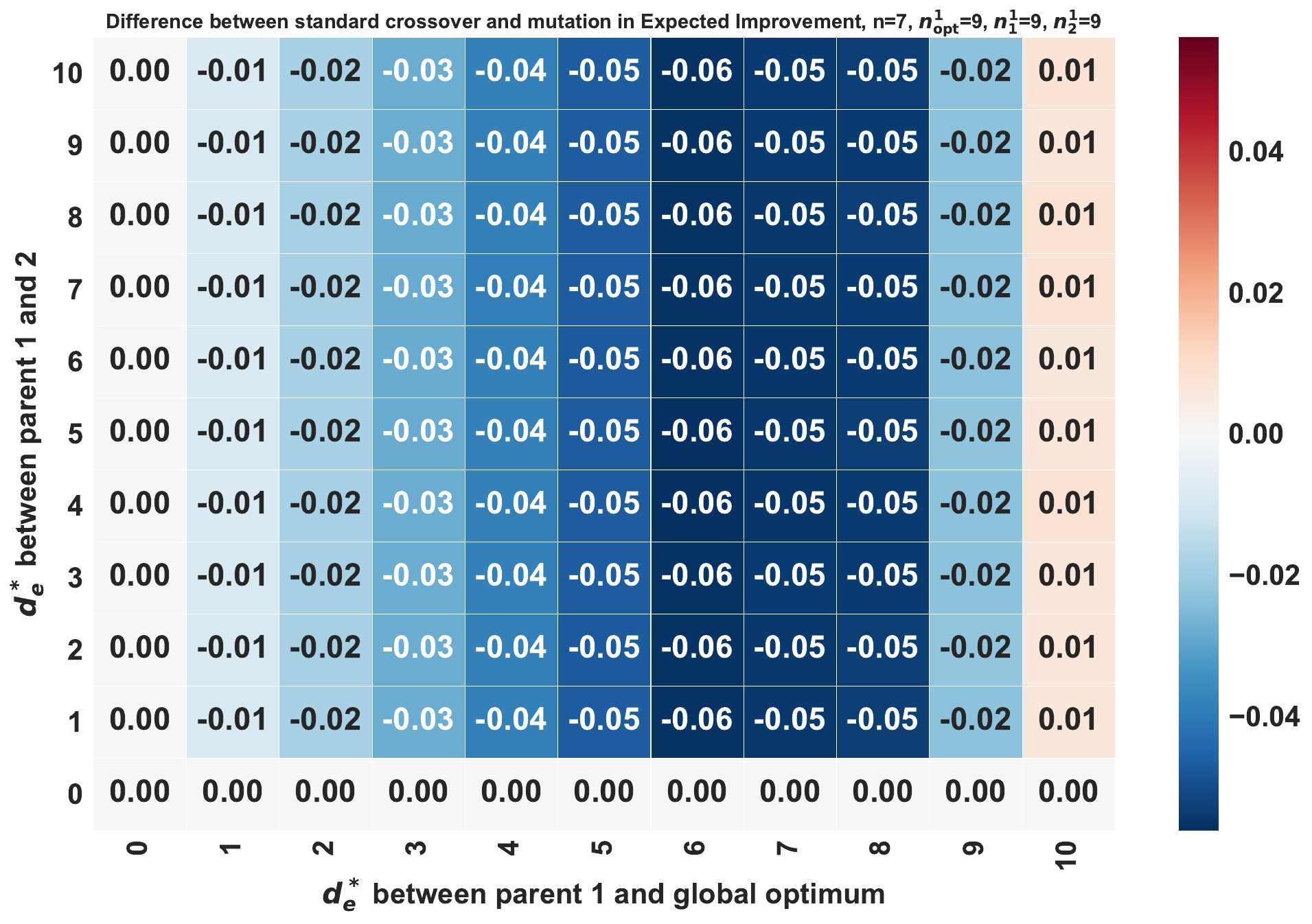}
	\caption{\textbf{Comparison of expected improvement between standard crossover and mutation in NAS-bench-101.} Differences between $\mathrm{LBEI}_{\mathrm{STDX}}$ and $\mathrm{LBEI}_{\mathrm{MUTA}}$ under different $d_{e,\hat{\gG}_1, \hat{\gG}_2}^*$ ($y$-axis) and $d_{e,\hat{\gG}_{\mathrm{opt}}, \hat{\gG}_1}^*$ ($x$-axis) combinations. $\mathrm{LBEI}_{\mathrm{STDX}}$ is smaller than $\mathrm{LBEI}_{\mathrm{MUTA}}$ in most cases.
		\label{fig:EI_stdx_mut}
	}
\end{figure}

Figure~\ref{fig:rl_self} shows $\mathrm{LBEI}_{\mathrm{RLU}}$, $\mathrm{LBEI}_{\mathrm{RLO}}$ and $\mathrm{LBEI}_{\mathrm{RLU}}$ vs. $\mathrm{LBEI}_{\mathrm{RLO}}$ under different $\alpha \cdot \eta$ values. A $\alpha \cdot \eta$ value of 0.1 provides the best tradeoff between unbiased agent and oracle agent.
\begin{figure}
	\centering
	\includegraphics[width=0.32\linewidth]{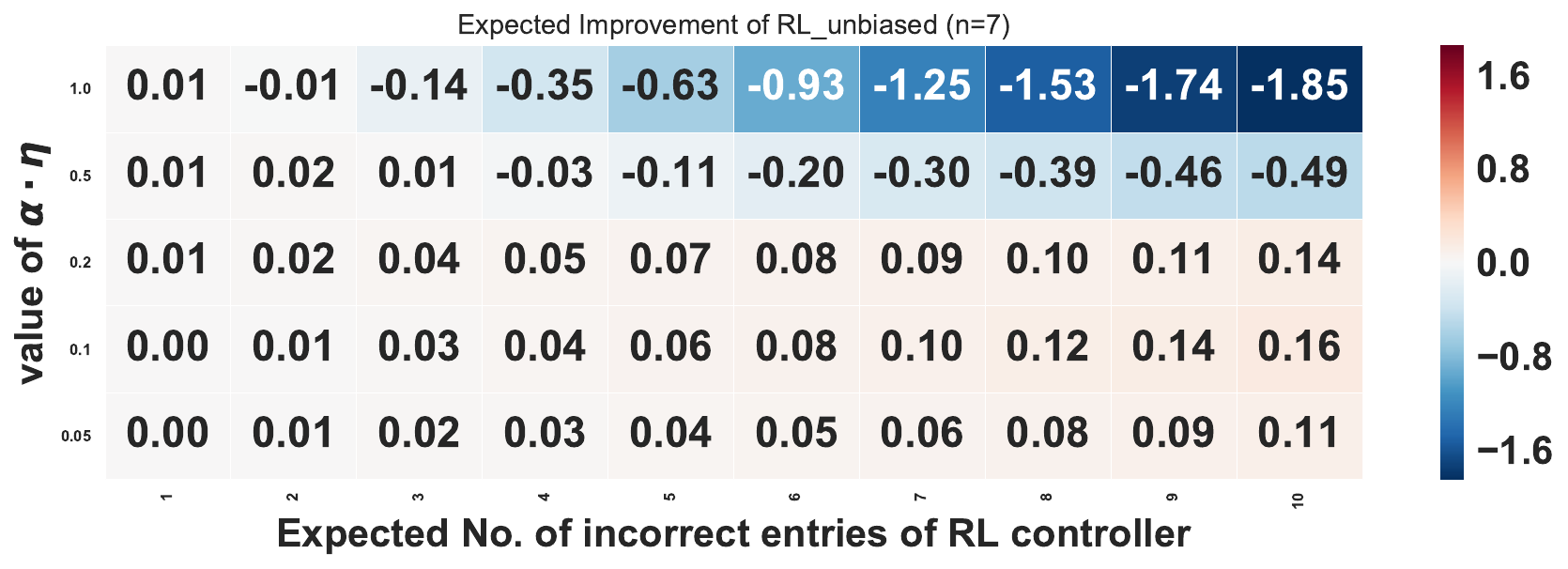}
	\includegraphics[width=0.32\linewidth]{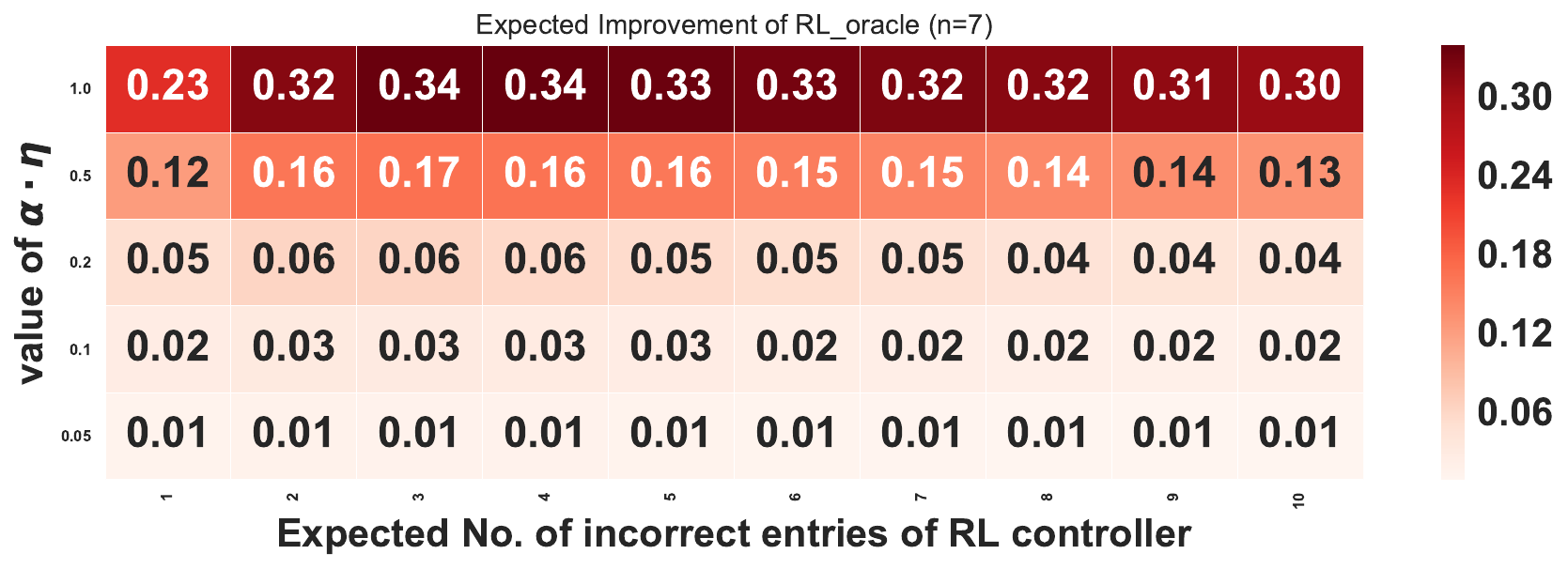}
        \includegraphics[width=0.32\linewidth]{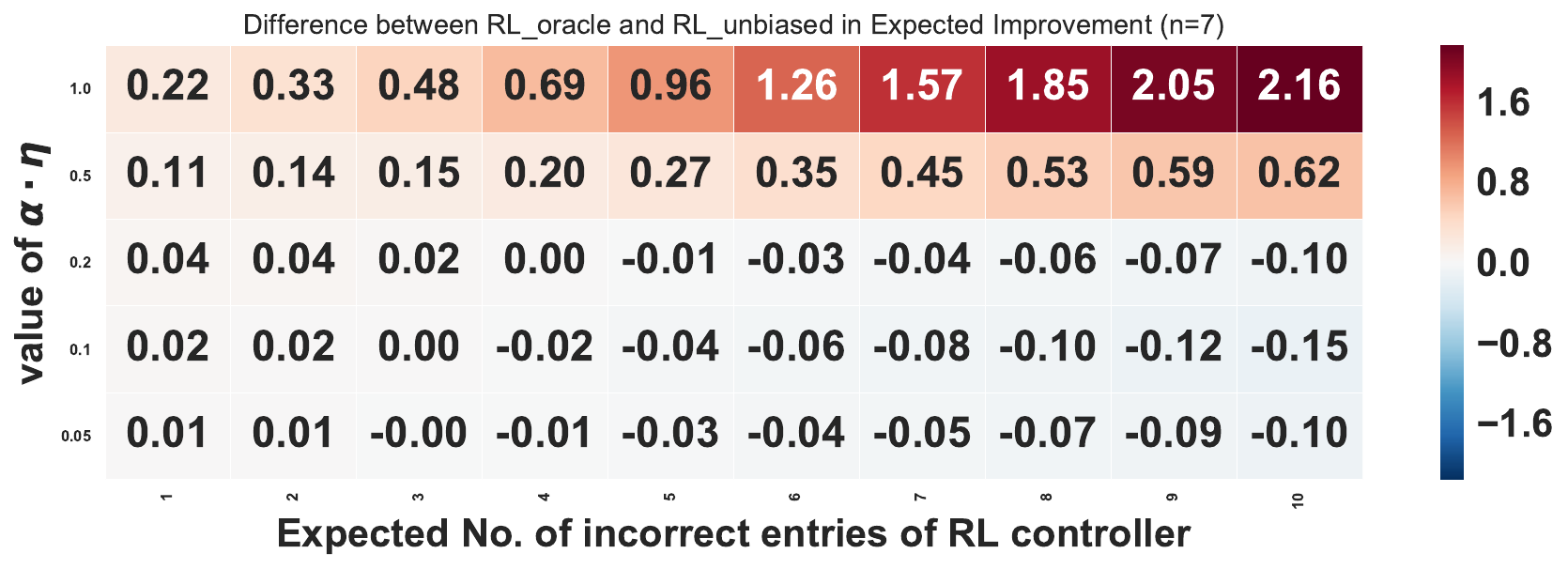}
	\caption{\textbf{Expected improvement of RL} (Left) $\mathrm{LBEI}_{\mathrm{RLU}}$ under different $\alpha \cdot \eta$ values. (middle) $\mathrm{LBEI}_{\mathrm{RLO}}$ under different $\alpha \cdot \eta$ values. (right) $\mathrm{LBEI}_{\mathrm{RLO}}-\mathrm{LBEI}_{\mathrm{RLU}}$ under different $\alpha \cdot \eta$ values. A $\alpha \cdot \eta$ value of 0.1 provides the best tradeoff between unbiased agent and oracle agent.
		\label{fig:rl_self}
	}
\end{figure}

Figure~\ref{fig:rl_muta} compares $\mathrm{LBEI}_{\mathrm{RLU}}$ vs. $\mathrm{LBEI}_{\mathrm{MUTA}}$ and $\mathrm{LBEI}_{\mathrm{RLO}}$ vs. $\mathrm{LBEI}_{\mathrm{MUTA}}$ under different $\alpha \cdot \eta$ values. Whereas unbiased agent is generally worse than mutation, the oracle agent is better in some cases and worse in others.
\begin{figure}
	\centering
	\includegraphics[width=0.32\linewidth]{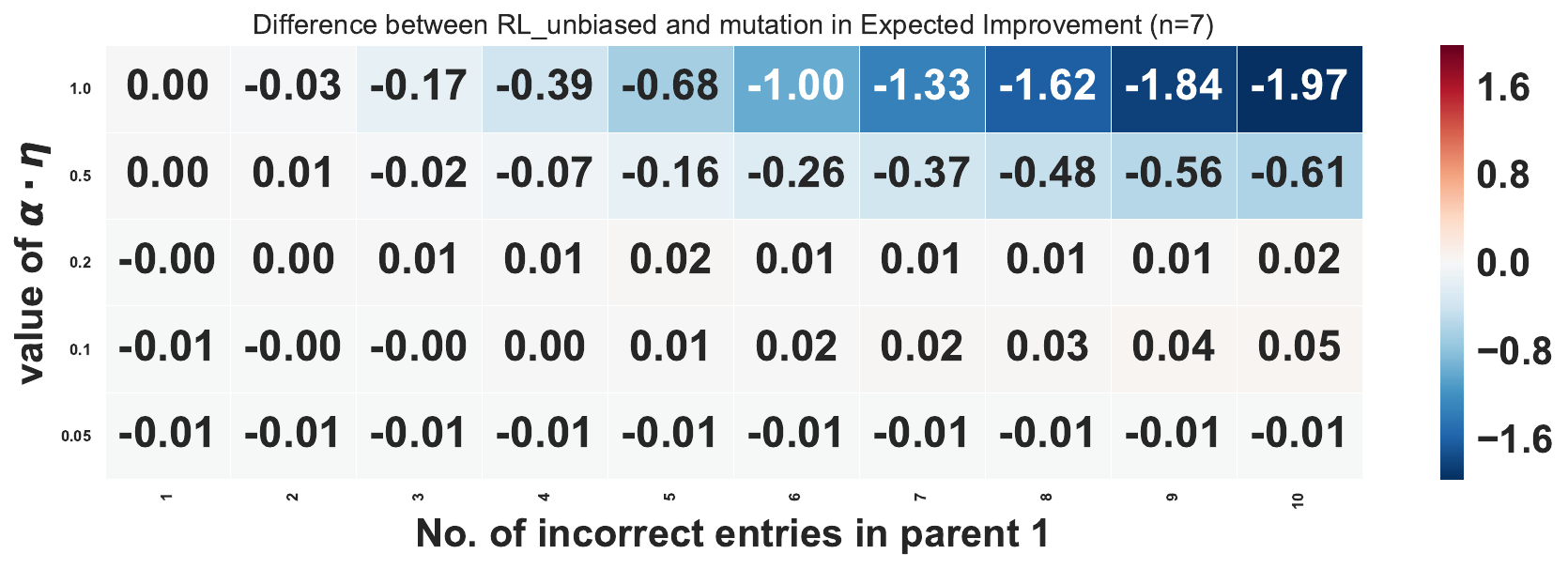}
	\includegraphics[width=0.32\linewidth]{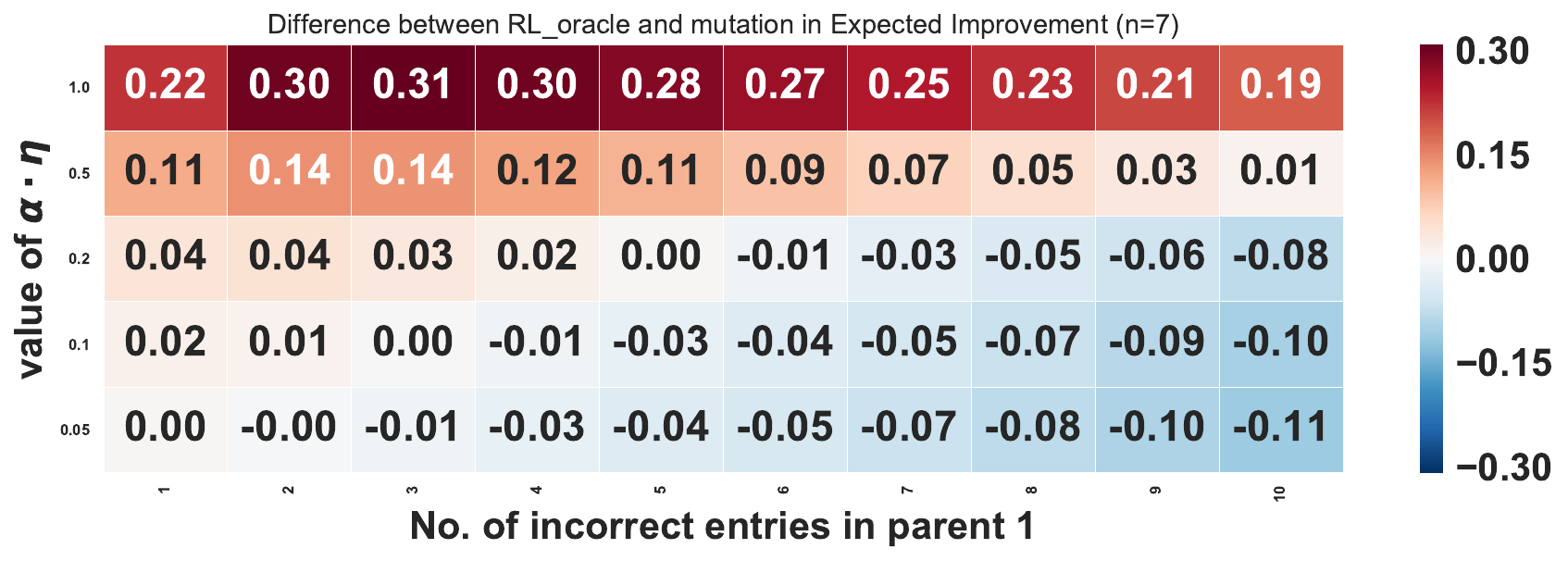}
	\caption{\textbf{Comparison of expected improvement between RL and mutation} (Left) $\mathrm{LBEI}_{\mathrm{RLU}}-\mathrm{LBEI}_{\mathrm{MUTA}}$ under different $\alpha \cdot \eta$ values. (right) $\mathrm{LBEI}_{\mathrm{RLO}}-\mathrm{LBEI}_{\mathrm{MUTA}}$ under different $\alpha \cdot \eta$ values. Whereas unbiased agent is generally worse than mutation, the oracle agent is better in some cases and worse in others.
		\label{fig:rl_muta}
	}
\end{figure}

Figure~\ref{fig:stdx_rl} compares $\mathrm{LBEI}_{\mathrm{STDX}}$ vs. $\mathrm{LBEI}_{\mathrm{RLU}}$ and $\mathrm{LBEI}_{\mathrm{STDX}}$ vs. $\mathrm{LBEI}_{\mathrm{RLO}}$ under different $d_{e,\hat{\gG}_1, \hat{\gG}_2}^*$ and $d_{e,\hat{\gG}_{\mathrm{opt}}, \hat{\gG}_1}^*$ combinations. The standard crossover is slightly worse than both RL agents in most cases.
\begin{figure}
	\centering
	\includegraphics[width=0.32\linewidth]{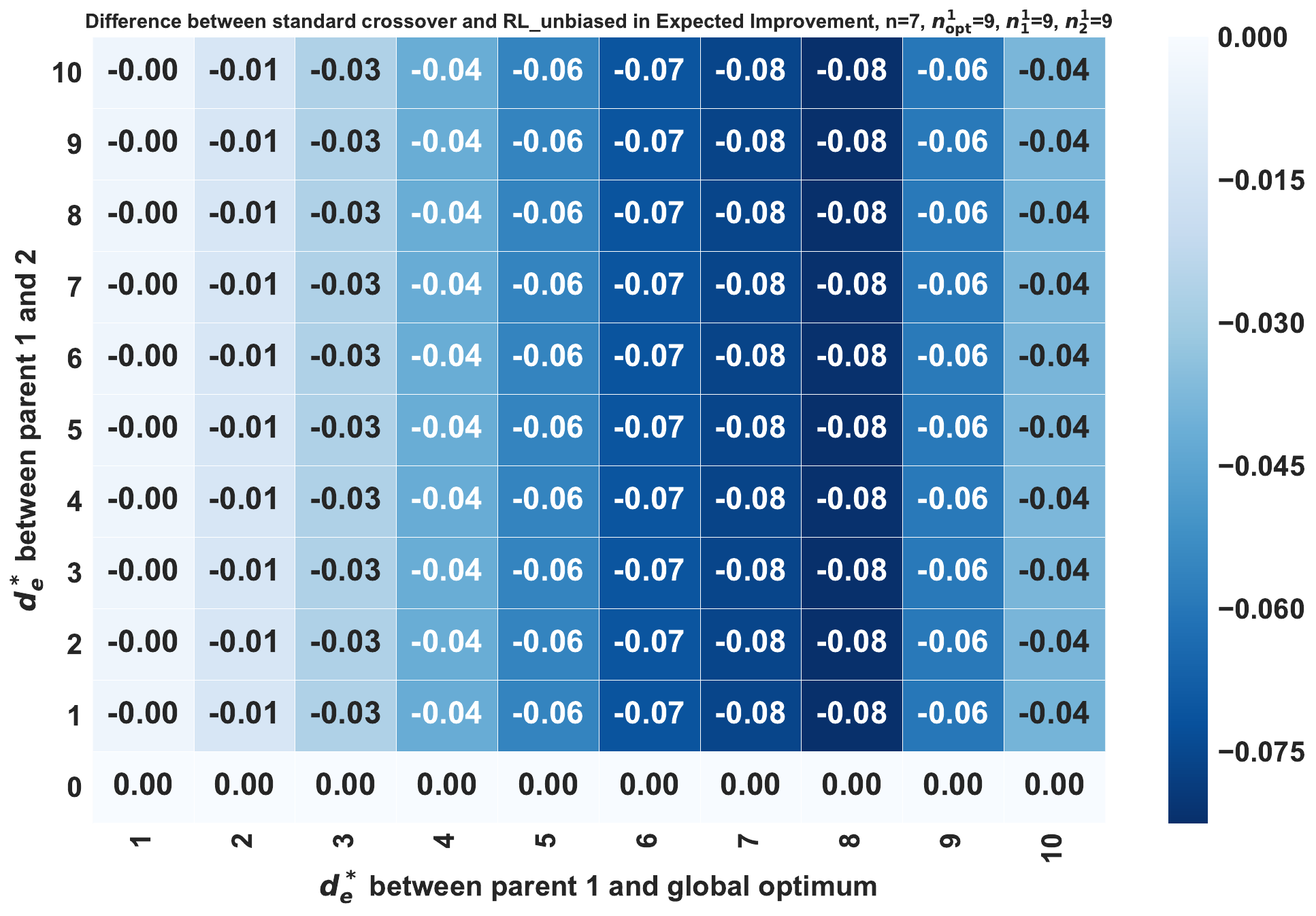}
	\includegraphics[width=0.32\linewidth]{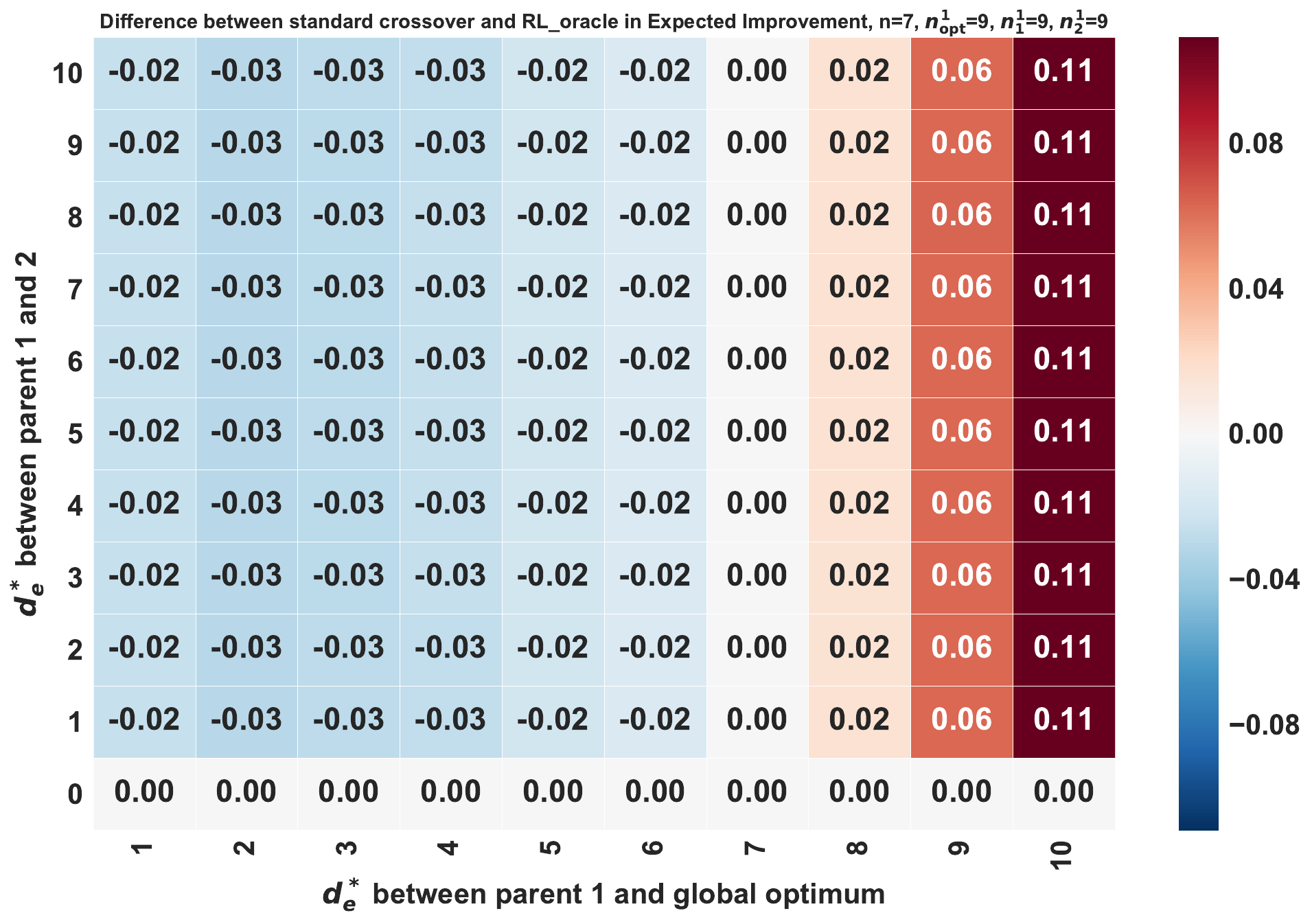}
	\caption{\textbf{Comparison of expected improvement between standard crossover and RL} (Left) $\mathrm{LBEI}_{\mathrm{STDX}}-\mathrm{LBEI}_{\mathrm{RLU}}$ under different $d_{e,\hat{\gG}_1, \hat{\gG}_2}^*$ ($y$-axis) and $d_{e,\hat{\gG}_{\mathrm{opt}}, \hat{\gG}_1}^*$ ($x$-axis) combinations. (right) $\mathrm{LBEI}_{\mathrm{STDX}}-\mathrm{LBEI}_{\mathrm{RLO}}$ under different $d_{e,\hat{\gG}_1, \hat{\gG}_2}^*$ ($y$-axis) and $d_{e,\hat{\gG}_{\mathrm{opt}}, \hat{\gG}_1}^*$ ($x$-axis) combinations. A $\alpha \cdot \eta$ value of 0.1 is used for RL. The standard crossover is slightly worse than both RL agents in most cases.
		\label{fig:stdx_rl}
	}
\end{figure}

\subsection{Additional Figures for Section~\ref{subsec:error_GED}}\label{subsec:add_error_GED}
As in Section~\ref{subsec:theory_comp}, Monte Carlo simulations with $10^6$ trials each were performed to estimate the values of $\mathrm{LBEI}_{\mathrm{SEPX}}^\epsilon$ under different error ratios $\epsilon$. Figure~\ref{fig:EI_101_error} compares $\mathrm{LBEI}_{\mathrm{SEPX}}^\epsilon$ with $\mathrm{LBEI}_{\mathrm{MUTA}}$, $\mathrm{LBEI}_{\mathrm{RLU}}$ and $\mathrm{LBEI}_{\mathrm{RLO}}$ under error ratios $\epsilon=$ 0.1, 0.2, and 0.3. Because Figure~\ref{fig:EI_stdx_mut} and ~\ref{fig:stdx_rl} already show that $\mathrm{LBEI}_{\mathrm{STDX}}$ is worse than $\mathrm{LBEI}_{\mathrm{MUTA}}$, $\mathrm{LBEI}_{\mathrm{RLU}}$ and $\mathrm{LBEI}_{\mathrm{RLO}}$ in most cases, $\mathrm{LBEI}_{\mathrm{STDX}}$ is not included in these comparisons.

The conclusion is that the SEP crossover has a theoretical advantage in expected improvement compared to mutation, standard crossover, and RL even with a very high error ratio of $30\%$ in the GED calculations. Thus, if the computational cost of the SEP crossover needs to be reduced, approximation methods can be used to calculate GED.

\begin{figure*}[ht]
	\centering
	\includegraphics[width=0.32\linewidth]{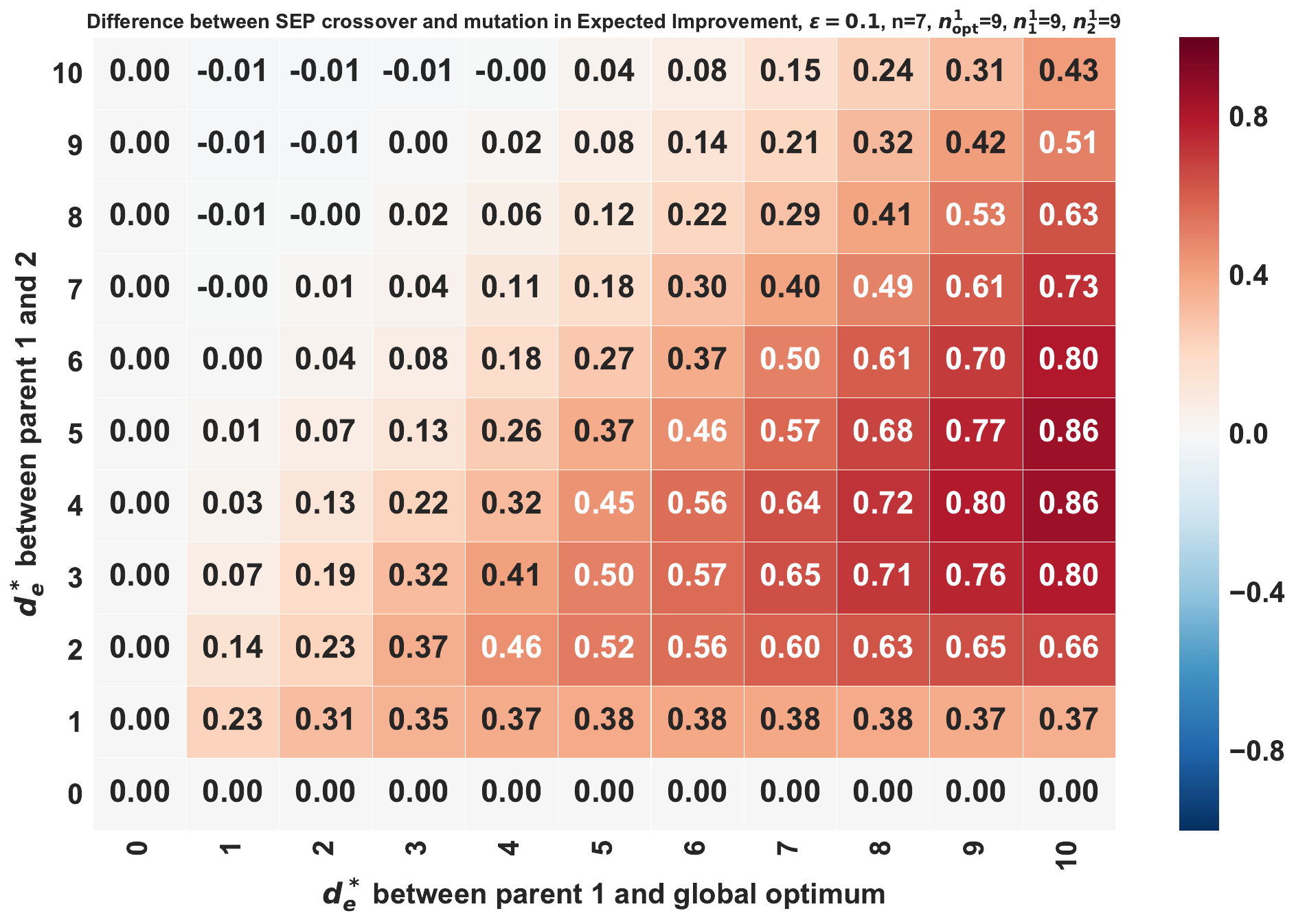}
	\includegraphics[width=0.32\linewidth]{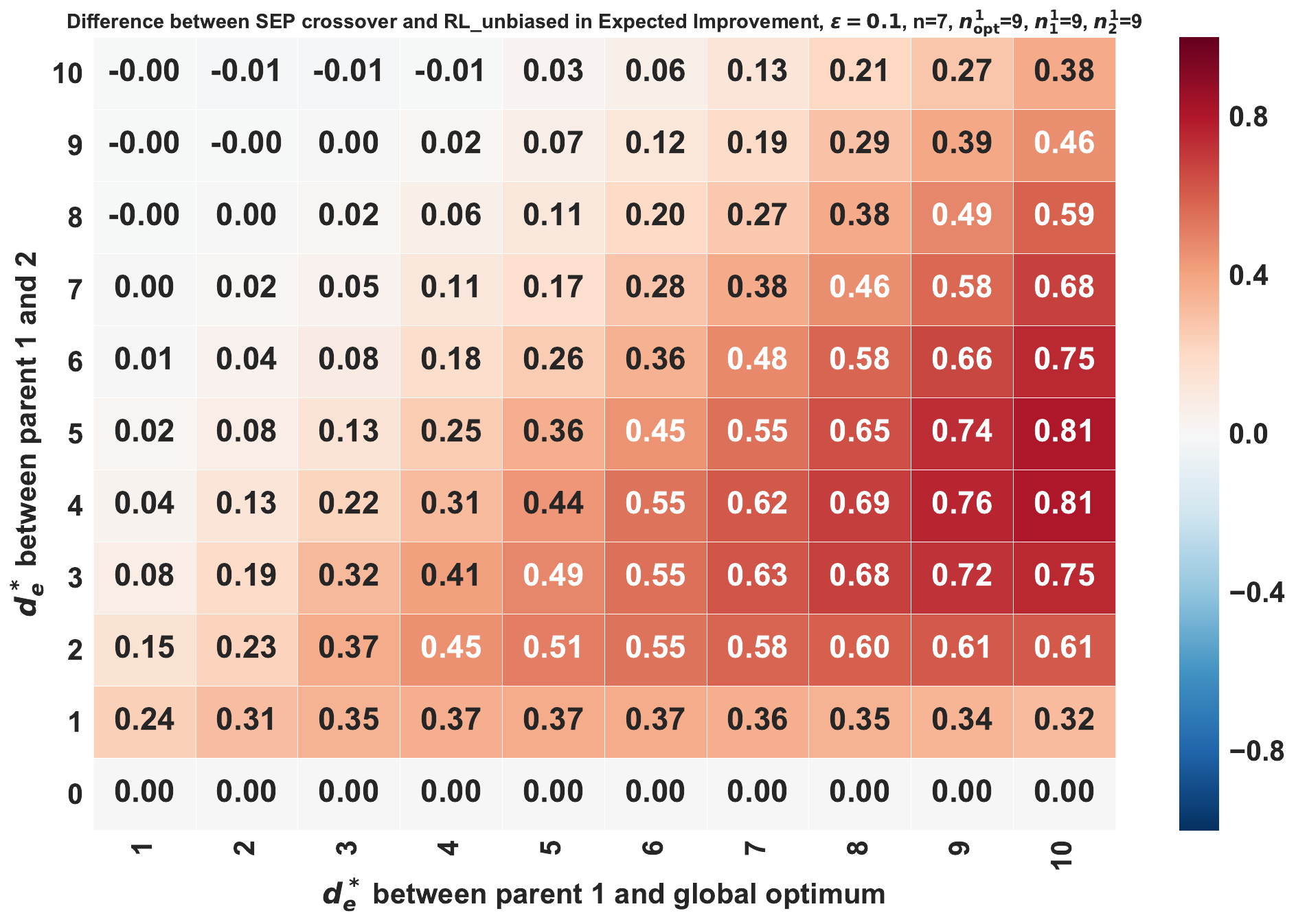}
	\includegraphics[width=0.32\linewidth]{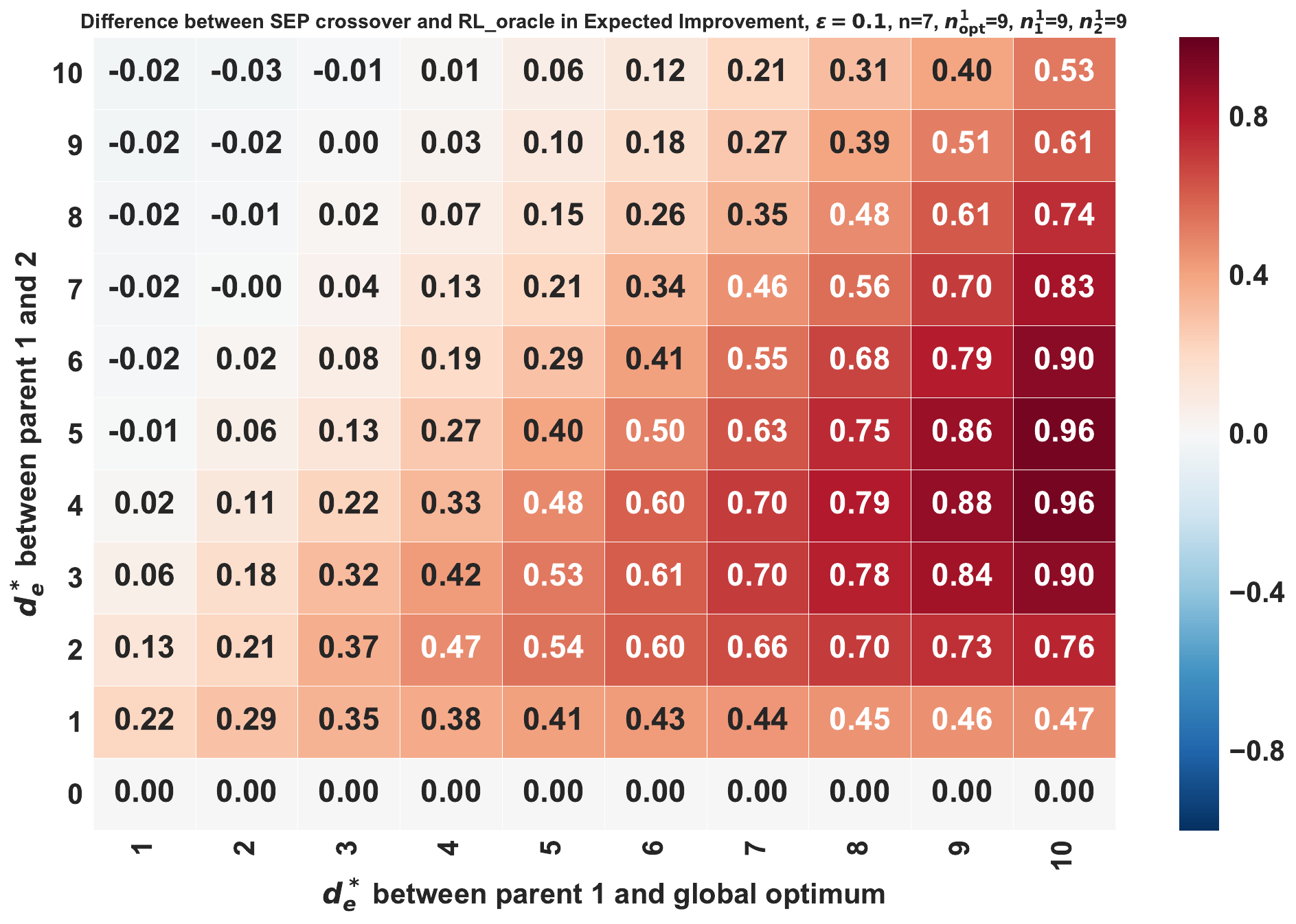}\\[-1ex]
	($a$) Error ratio \bm{$\epsilon=0.1$}\\
	\includegraphics[width=0.32\linewidth]{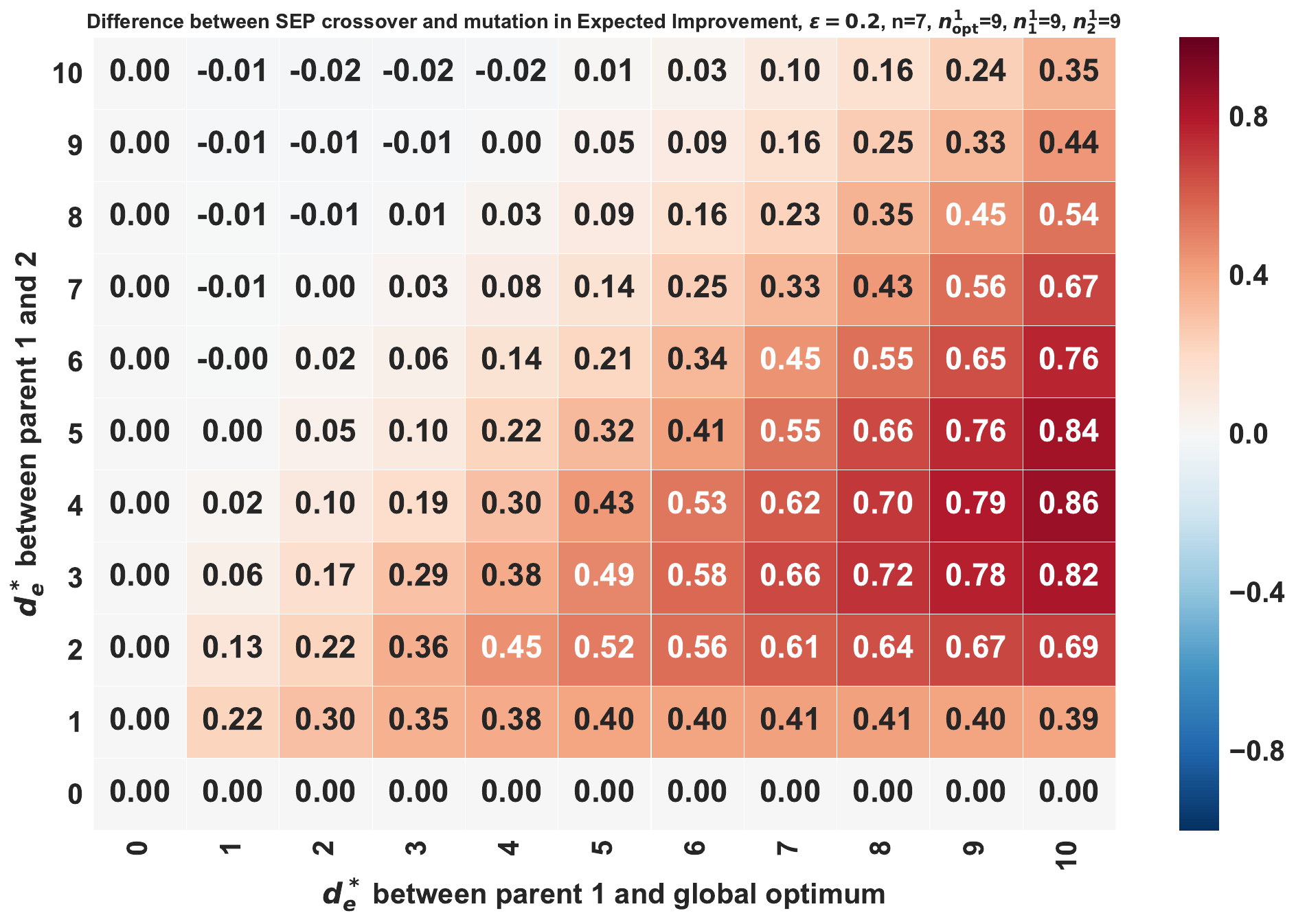}
	\includegraphics[width=0.32\linewidth]{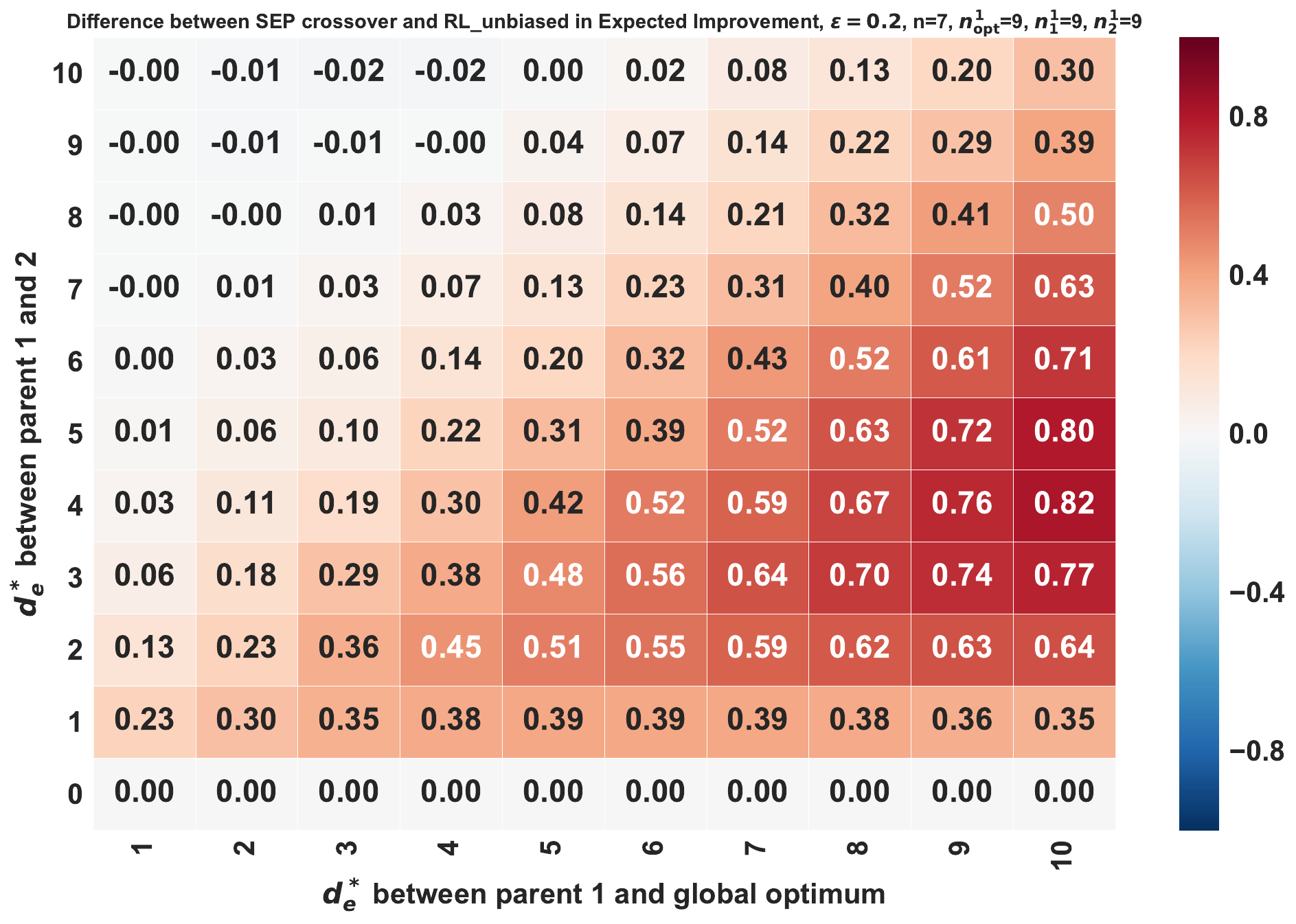}
	\includegraphics[width=0.32\linewidth]{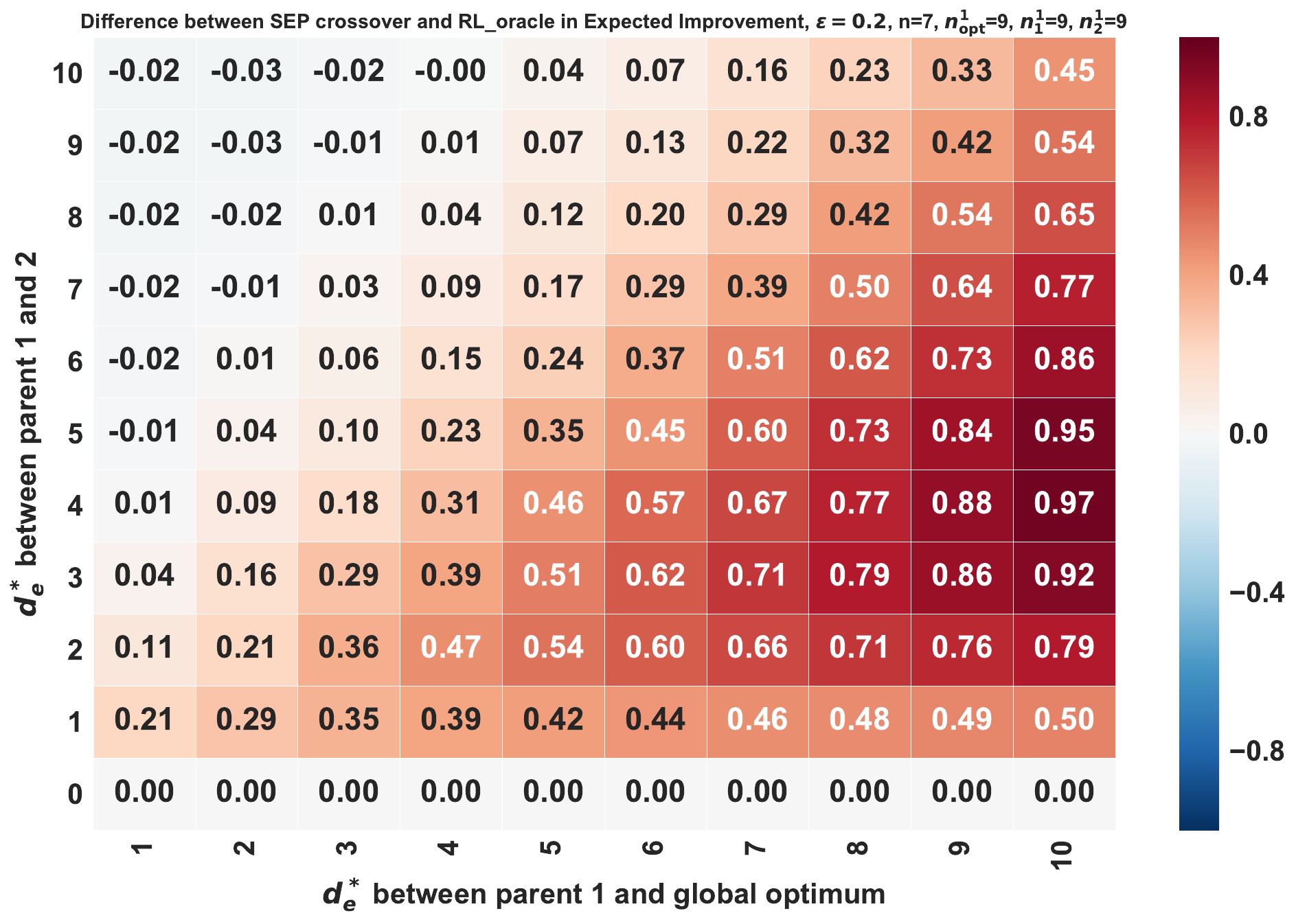}\\[-1ex]
	($b$) Error ratio \bm{$\epsilon=0.2$}\\
	\includegraphics[width=0.32\linewidth]{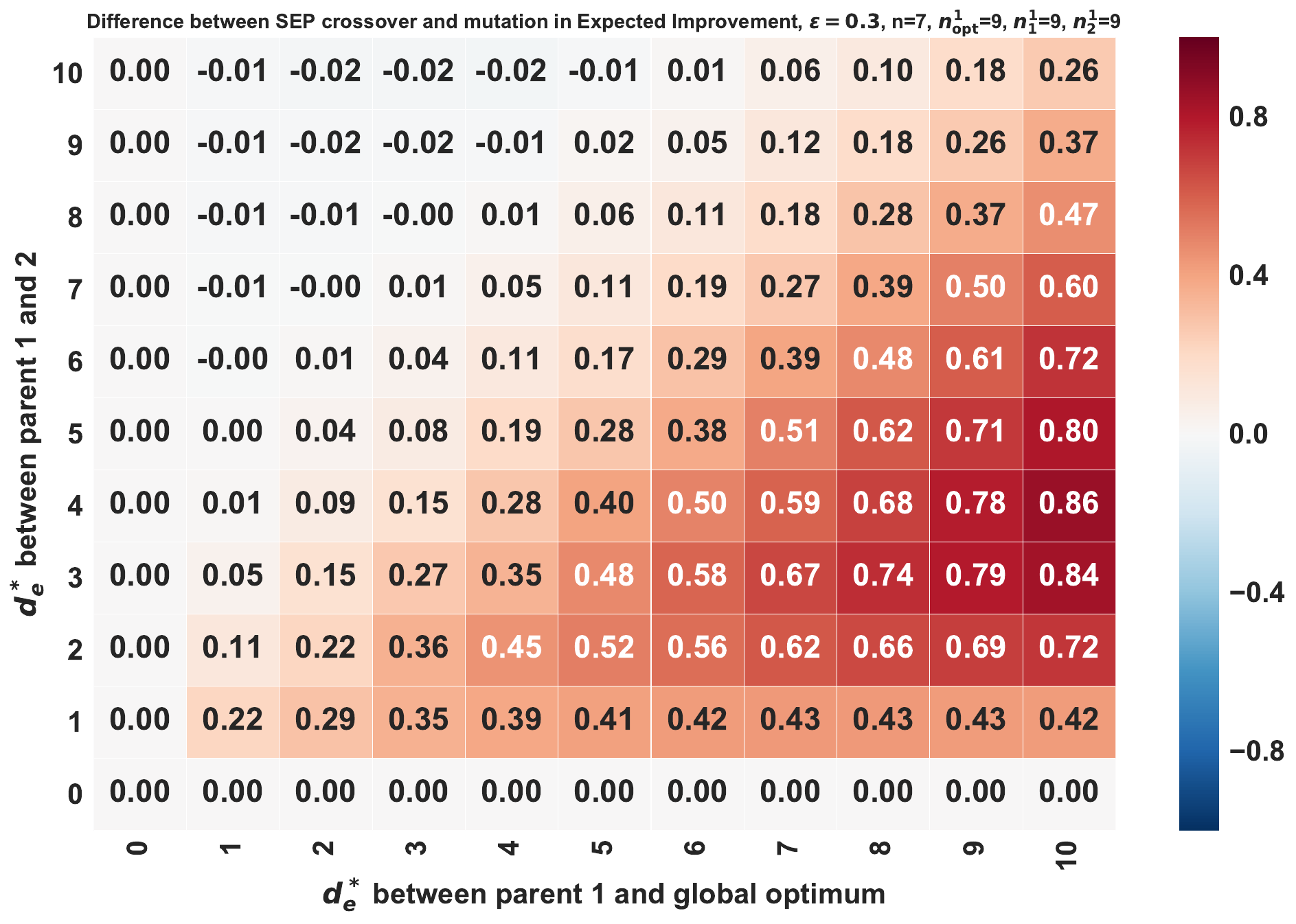}
	\includegraphics[width=0.32\linewidth]{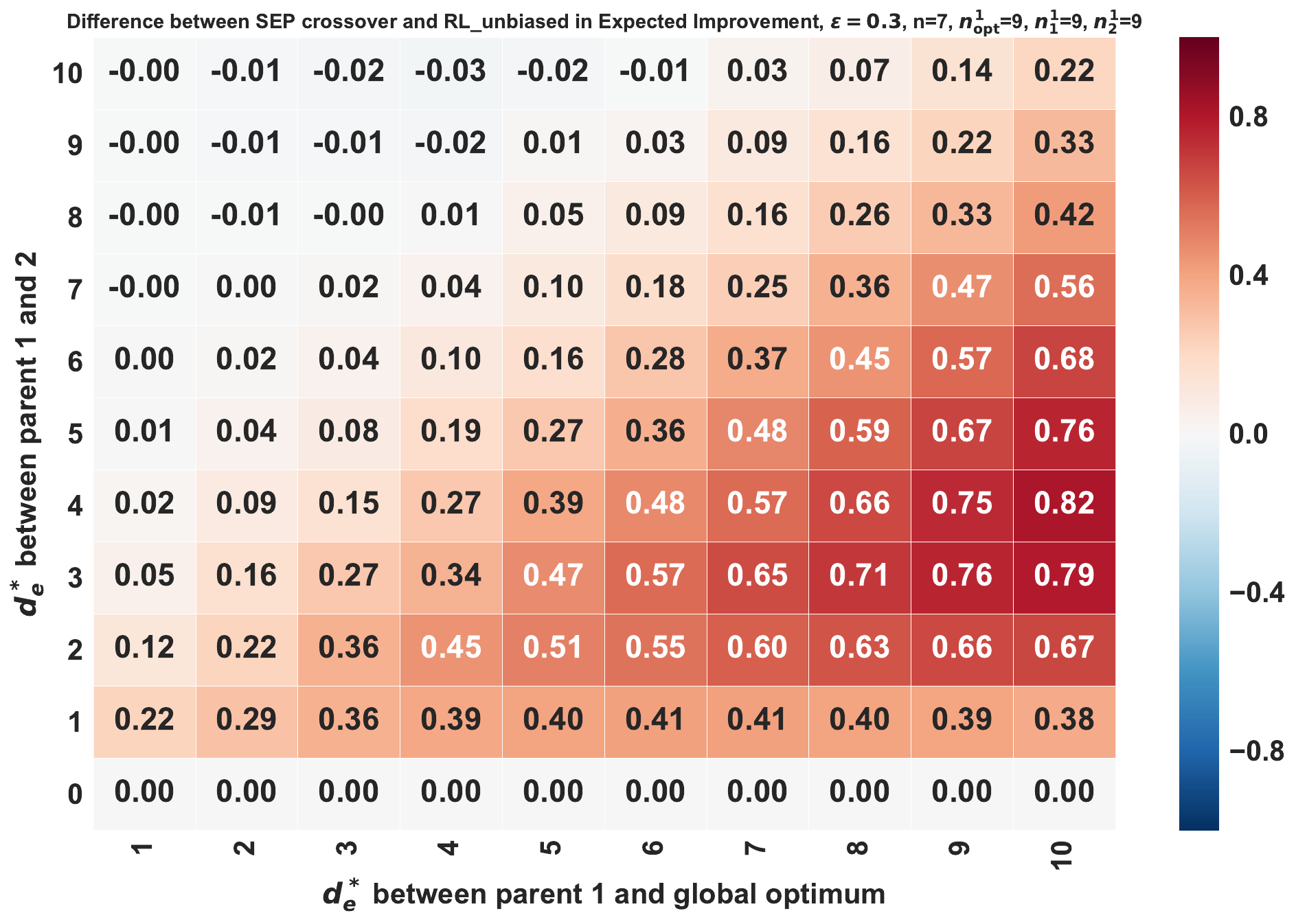}
	\includegraphics[width=0.32\linewidth]{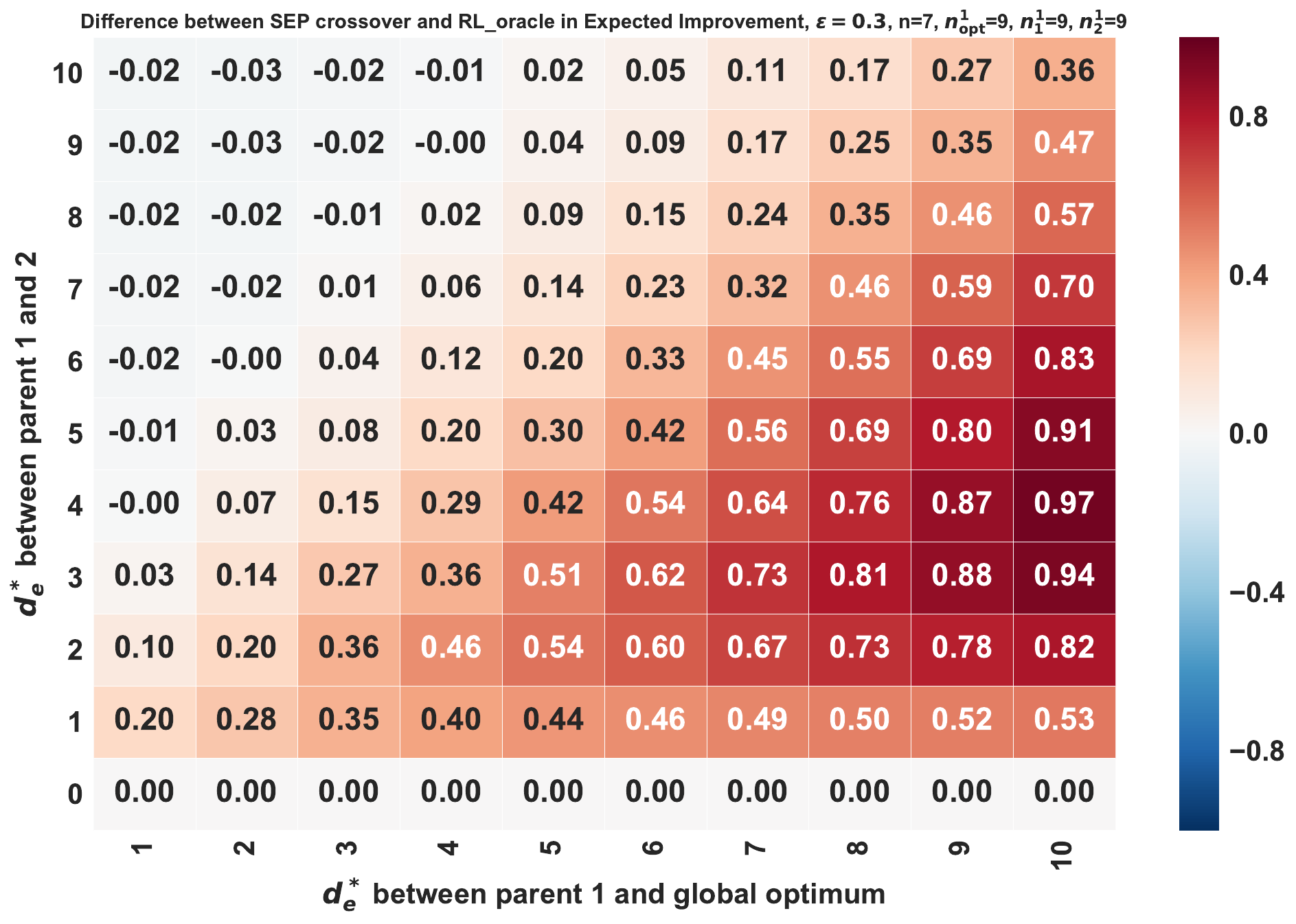}\\[-1ex]
	($c$) Error ratio \bm{$\epsilon=0.3$}\\
	\caption{\textbf{Comparison of expected improvement between SEP crossover, mutation, and RL in NAS-bench-101 at various level of GED calculation error.} (Left) Differences between $\mathrm{LBEI}_{\mathrm{SEPX}}^\epsilon$ and $\mathrm{LBEI}_{\mathrm{MUTA}}$ under different $d_{e,\hat{\gG}_1, \hat{\gG}_2}^*$ ($y$-axis) and $d_{e,\hat{\gG}_{\mathrm{opt}}, \hat{\gG}_1}^*$ ($x$-axis) combinations. (Middle) Differences between $\mathrm{LBEI}_{\mathrm{SEPX}}^\epsilon$ and $\mathrm{LBEI}_{\mathrm{RLU}}$. (Right) Differences between $\mathrm{LBEI}_{\mathrm{SEPX}}^\epsilon$ and $\mathrm{LBEI}_{\mathrm{RLO}}$. $\mathrm{LBEI}_{\mathrm{SEPX}}^\epsilon$ is larger (i.e.\ more red) than $\mathrm{LBEI}_{\mathrm{MUTA}}$, $\mathrm{LBEI}_{\mathrm{RLU}}$, and $\mathrm{LBEI}_{\mathrm{RLO}}$ in almost all cases. Thus, the SEP crossover has a theoretical advantage over mutation and RL even at a very high level of error in the GED calculations. Therefore, if needed, approximation methods can be used to reduce the computational cost of the SEP crossover.
		\label{fig:EI_101_error}
	}
 \vspace*{-2.5ex}
\end{figure*}

\subsection{Additional Figures for Section~\ref{subsec:applicability}}\label{subsec:add_applicability}
Figures~\ref{fig:freq_101} and~\ref{fig:freq_nlp} show relative frequencies of different parent combinations in NAS-bench-101 and NAS-bench-NLP, respectively. Note that the high relative frequencies of $d_{e,\hat{\gG}_{\mathrm{opt}}, \hat{\gG}_1}^*=0$ and $d_{e,\hat{\gG}_1, \hat{\gG}_2}^*=0$ are due to the convergence of the search algorithm, i.e.\ no further improvement can be made from them. The high-frequency areas match the assumptions of the theory, and thus the theoretical conclusions apply to real-world NAS.

\begin{figure}
	\centering
	\includegraphics[width=0.4\linewidth]{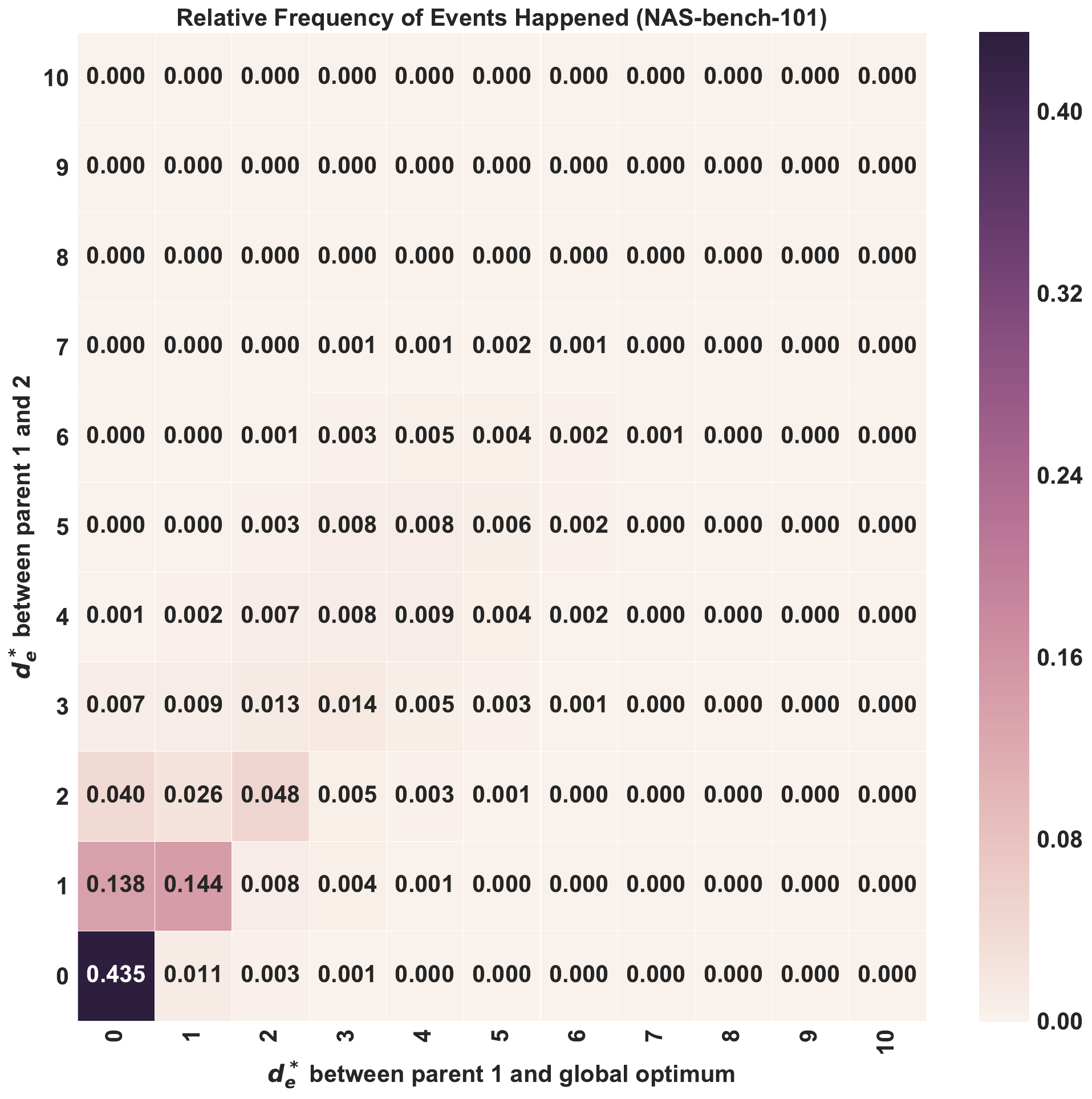}
	\includegraphics[width=0.4\linewidth]{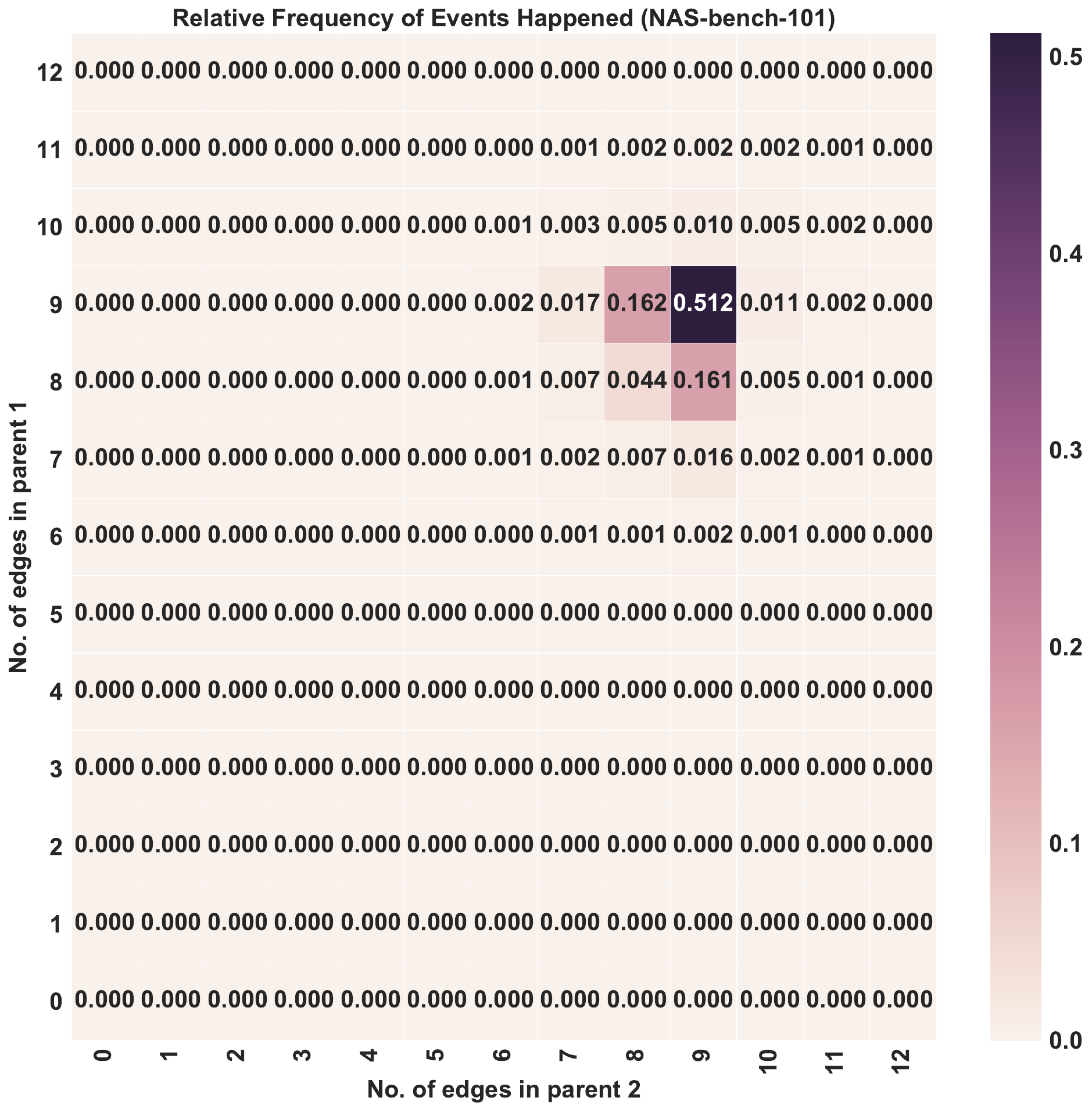}
	\caption{\textbf{Relative frequencies of different parent combinations in NAS-bench-101 experiments.} (Left) Relative frequencies of different $d_{e,\hat{\gG}_1, \hat{\gG}_2}^*$ ($y$-axis) and $d_{e,\hat{\gG}_{\mathrm{opt}}, \hat{\gG}_1}^*$ ($x$-axis) combinations. All events happen in the regions where the SEP crossover has a theoretical advantage in terms of expected improvement (as seen in Figure~\ref{fig:EI_101}). (Right) Relative frequencies of different $n_1^1$ and $n_2^1$ combinations. The event $n_1^1=9$ and $n_2^1=9$ happens most frequently during the experiments, and this setup is indeed used in the theoretical analysis. Thus, the theoretical analysis applies to situations that arise in NAS-bench-101.
		\label{fig:freq_101}
	}
\end{figure}
\begin{figure}
	\centering
	\includegraphics[width=0.4\linewidth]{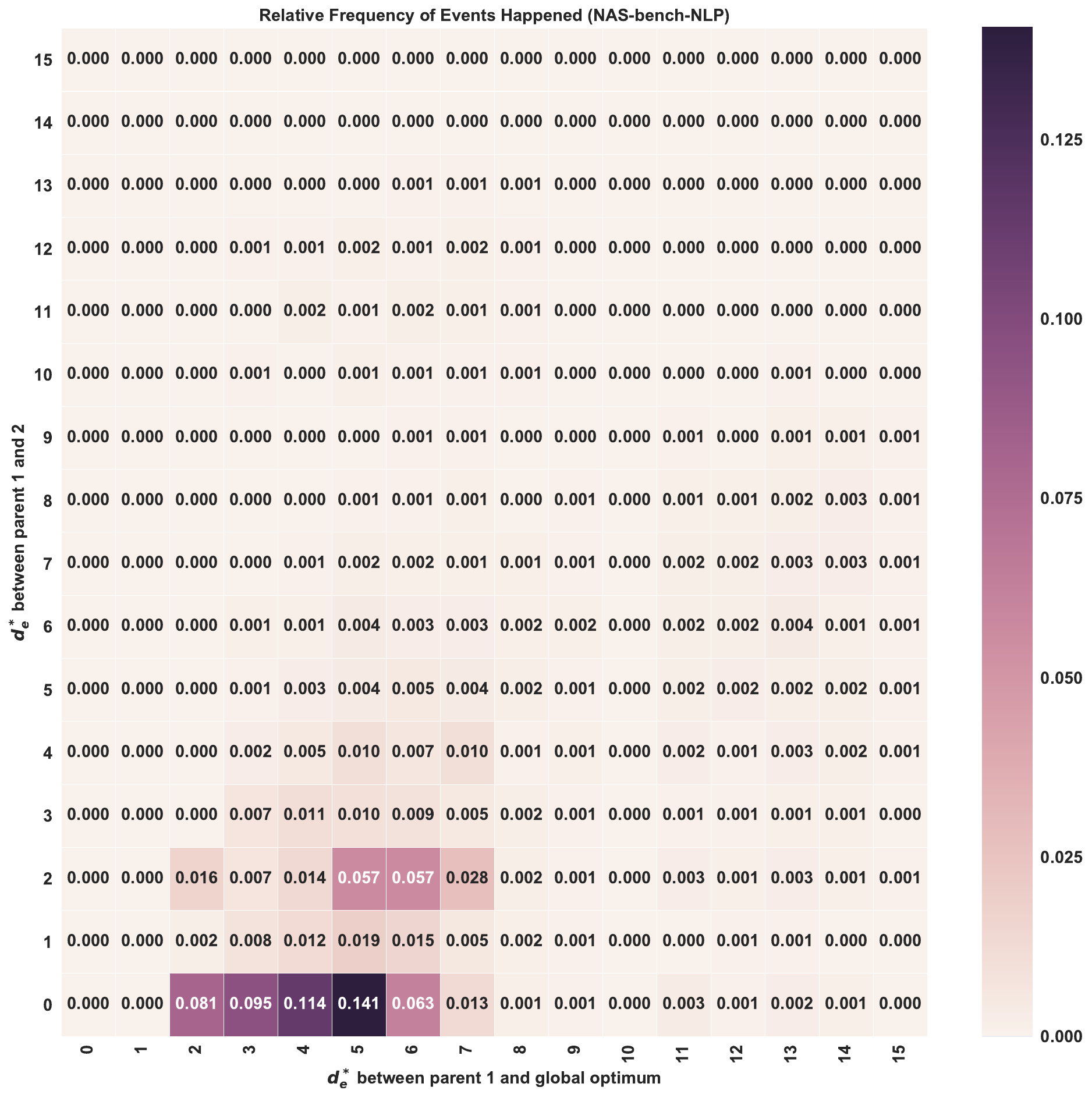}
	\includegraphics[width=0.4\linewidth]{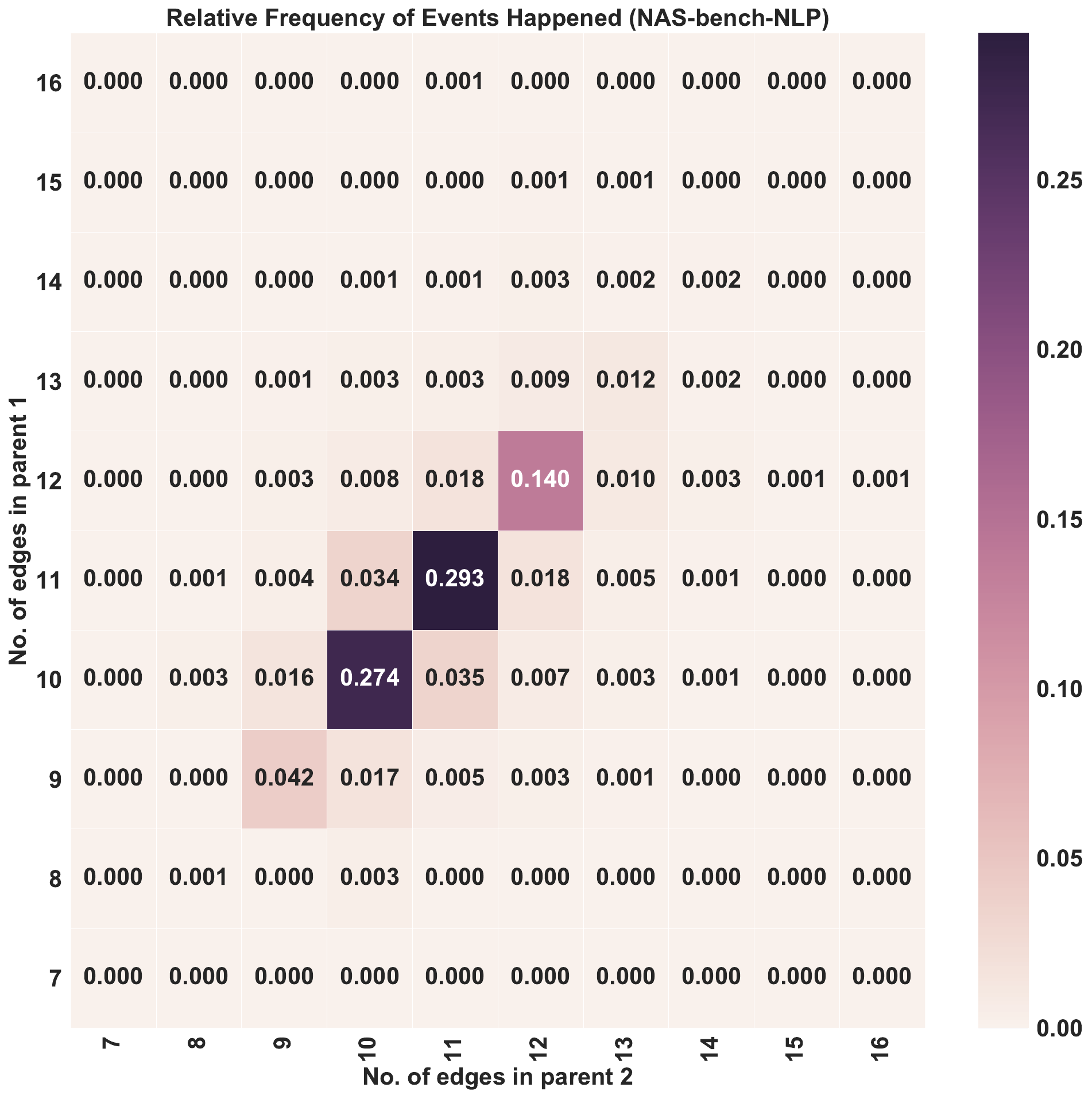}
	\caption{\textbf{Relative frequency of different parent combinations in NAS-bench-NLP experiments.} (Left) Relative frequencies of different $d_{e,\hat{\gG}_1, \hat{\gG}_2}^*$ ($y$-axis) and $d_{e,\hat{\gG}_{\mathrm{opt}}, \hat{\gG}_1}^*$ ($x$-axis) combinations. All the events happen in the regions where the SEP crossover has a theoretical advantage in terms of expected improvement (as seen in Figure~\ref{fig:EI_nlp}). (Right) Relative frequencies of different $n_1^1$ and $n_2^1$ combinations. Most events happen around the $n_1^1=11$ and $n_2^1=11$ combination, which is the setup used in the theoretical analysis. Together Figures~\ref{fig:freq_101} and~\ref{fig:freq_nlp} show that the theoretical analysis applies to the actual experimental settings.
		\label{fig:freq_nlp}
	}
\end{figure}

\subsection{Additional Results for Section~\ref{subsec:noisy}}\label{subsec:add_noise}

Figure~\ref{fig:empirical} shows that SEP crossover converges consistently faster than the other methods in all three benchmarks, i.e.\ NAS-bench-101, NAS-bench-NLP, and NAS-bench-301.

\begin{figure*}[t]
\centering
\hfill
a\includegraphics[width=0.30\linewidth]{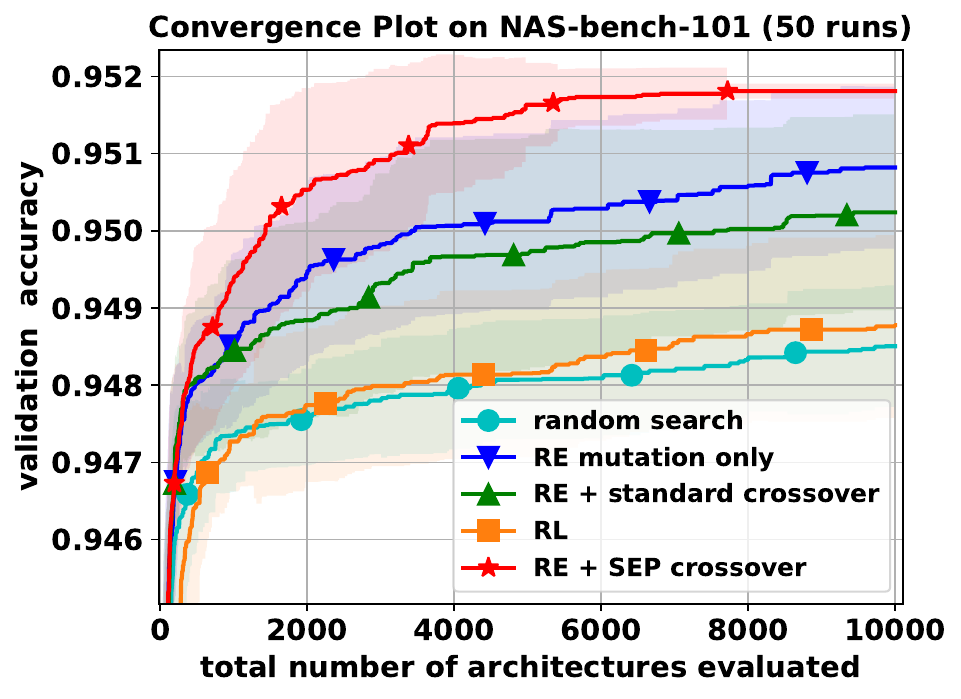}
\hfill
b\includegraphics[width=0.28\linewidth]{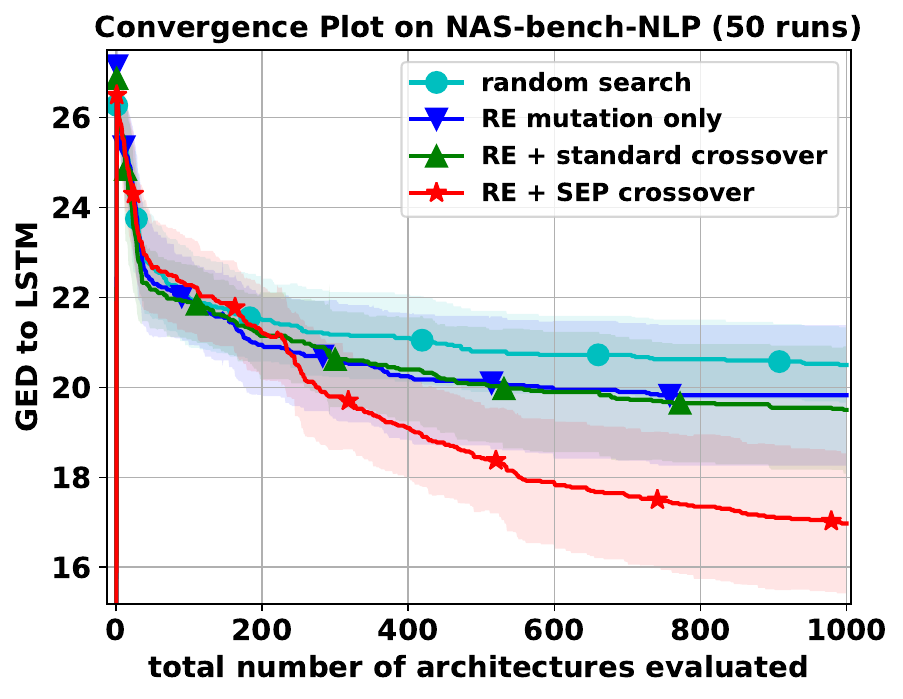}
\hfill
c\includegraphics[width=0.29\linewidth]{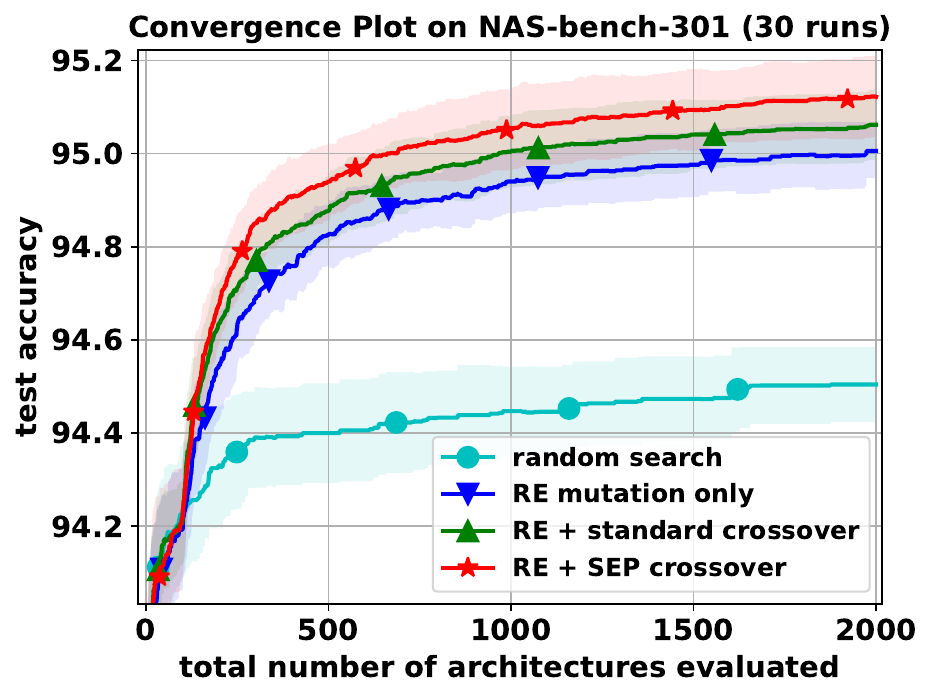}
\hfill
	\caption{\textbf{Performance of the search methods in three different NAS benchmarks.} 
(a) Convergence in NAS-bench-101. The plot shows the validation accuracy used as the direct fitness/reward for search strategies. The SEP crossover performs significantly better than the other approaches.
(b) Convergence in NAS-bench-NLP. This benchmark is a noise-free environment, and the plot shows the convergence of GED to LSTM. Again, the SEP crossover performs better than the other methods.
(c) Convergence in NAS-bench-301. The plot shows the surrogate-returned noise-free test accuracy. The noisy version of accuracy predicted by the surrogate model was used as the direct fitness/reward, and the noise-free accuracy (shown) as the final objective. The SEP crossover has consistently better search ability in this benchmark as well.
\label{fig:empirical}
	}
\end{figure*}

Note that since NAS-bench-NLP is not queryable, it is not possible to measure the prediction accuracy of each architecture in this benchmark the same way as that architecture is trained; the computational cost would be prohibitive given the scale of the experiments (i.e.\ multiple evaluations of multiple algorithms trained many times in each evaluation). Instead, the GED to GRU/LSTM is used as a noise-free fitness/reward for evaluating the performance of different methods in this complex search space.

Note also that NAS-bench-301 was not used in GED-related experiments because there are two separate graphs for each architecture (the normal cell and the reduction cell); currently the theory does not apply to pairs of graphs. Such an extension is left for future work.

\subsection{Comparison with Bayesian optimization}\label{subsec:add_bo}

Figure~\ref{fig:comparison_bo} compares the SEP crossover with two Bayesian optimization (BO) methods, namely BOHB \citep{Falkner18} and SMAC \citep{Hutter2011}. The parameter setup of BOHB and SMAC follows the guidelines in \citet{Ying19}. The SEP crossover consistently outperforms both BOHB and SMAC in NAS-bench-101.

\begin{figure*}[h]
	\centering
	\includegraphics[width=0.36\linewidth]{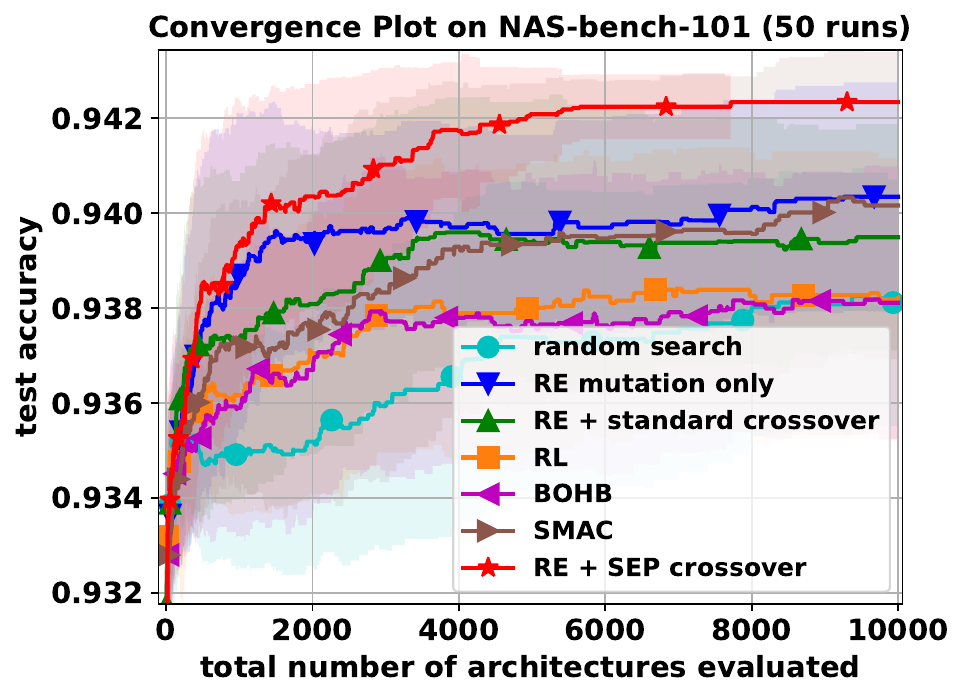}
	\caption{\textbf{Average test accuracy in NAS-bench-101.} The SEP crossover performs consistently better than the two Bayesian optimiation (BO) methods BOHB and SMAC.
		\label{fig:comparison_bo}
	}
\end{figure*}

\subsection{Comparison with path encoding}\label{subsec:add_path}

Figure~\ref{fig:comparison_path} compares the SEP crossover with a crossover operator based on path encoding \cite{White21b}. During a path encoding crossover, the offspring inherits the path with $100\%$ probability if this path is in both parents, and with $50\%$ if it is only in one of them. The SEP crossover significantly outperforms the path encoding crossover in NAS-bench-101.

\begin{figure*}[h]
	\centering
	\includegraphics[width=0.36\linewidth]{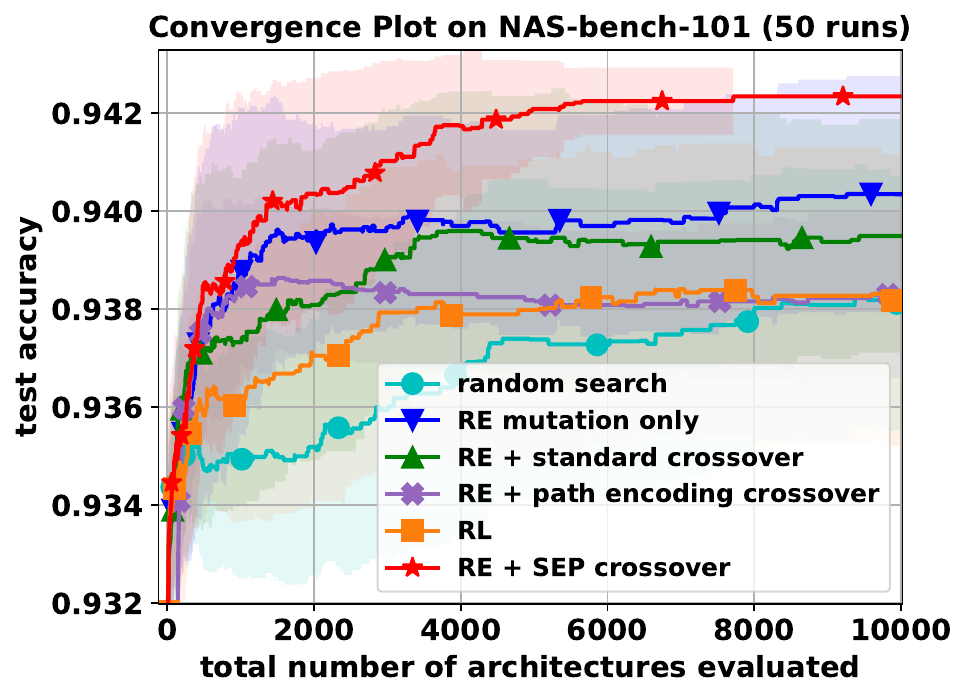}
	\caption{\textbf{Average test accuracy in NAS-bench-101.} The SEP crossover performs significantly better than the path encoding crossover.
		\label{fig:comparison_path}
	}
\end{figure*}

\subsection{Computational time of GED calculation}\label{subsec:add_ged_time}
The GED calculations in the experiments are based on the NetworkX library \citep{Hagberg08}, which implements an exact GED calculation method \citep{Abu-Aisheh15} with reasonable computational efficiency. In the experiments, the calculation time of GED was found to depend not only on the size of the two graphs, but also on the distance between them: If the two graphs are similar, the GED is calculated faster. As a characterization of the computational cost of GED calculations during SEP crossover, their average computation times under different parent distances and sizes are shown in Table~\ref{tab:ged_time}. These cases cover more than $99.9\%$ of the cases encountered in the experiments (according to Figure~\ref{fig:freq_101} and ~\ref{fig:freq_nlp}). All the GED computations ran on a single Intel(R) Xeon(R) Silver 4216 CPU @ 2.10GHz. According to Table~\ref{tab:ged_time}, the GED calculation time is almost negligible in NAS-bench-101 search space, and acceptable even in the largest NAS benchmark, i.e. NAS-bench-NLP.

\begin{table}[h]
	\centering
	\caption{\label{tab:ged_time} Computation Time of GED Calculation}
	\begin{tabular}{cccccccc}
		\toprule
		\multicolumn{8}{c}{NAS-Bench-101 (7 nodes)} \\
		\hline
		GED between parents & 1 & 2 & 3 & 4 & 5 & 6 & 7 \\
		computation time (s) & 0.009 & 0.012 & 0.019 & 0.029 & 0.041 & 0.060 & 0.084 \\
		\hline
		\multicolumn{8}{c}{NAS-Bench-NLP (12 nodes)	} \\
		\hline
		GED between parents & 1 & 3 & 5 & 7 & 9 & 11 & 13 \\
		computation time (s) & 0.015 & 0.046 & 0.281 & 1.156 & 3.190 & 10.374 & 21.957 \\
		\bottomrule
	\end{tabular}
\end{table}


\end{document}